\documentclass[preprint,12pt, 3p]{elsarticle}
\usepackage{amsthm}
\theoremstyle{plain} 
\newtheorem{theorem}{Theorem}
\newtheorem{proposition}[theorem]{Proposition} %
\makeatletter
\renewenvironment{proof}[1][\proofname]{%
  \par\pushQED{\qed}%
  \normalfont
  \topsep6\p@\@plus6\p@\relax
  \trivlist
  \item[\hskip\labelsep\bfseries #1\@addpunct{.}]%
}{%
  \popQED\endtrivlist\@endpefalse
}
\makeatother
\theoremstyle{definition} 

\usepackage{amsmath}
\usepackage{amssymb}
\usepackage{mathrsfs}
\usepackage{hyperref}
\usepackage{bm}
\usepackage{accents}
\usepackage{color, soul}
\usepackage{xcolor}
\usepackage{graphicx}
\usepackage{wrapfig}
\usepackage[table]{xcolor}
\usepackage{color,soul}

\usepackage{float}
\usepackage{placeins}
\usepackage{algorithm}
\usepackage{algpseudocode}
\usepackage{tcolorbox}
\usepackage{algorithm}
\usepackage{algpseudocode}
\tcbuselibrary{breakable}

\newtcolorbox{codeboxraw}{%
  breakable,
  colback=gray!10,
  colframe=gray!10,
  boxrule=0pt,
  left=6pt,
  right=6pt,
  top=6pt,
  bottom=6pt
}

\usepackage{caption}
\usepackage{subcaption}
\usepackage{booktabs}
\usepackage{multirow}
\usepackage{tabularx}
\usepackage{array}
\usepackage{longtable}
\usepackage{pdflscape}

\makeatletter

\newcommand{\maybeincludegraphics}[2][]{%
  \IfFileExists{#2.pdf}{\includegraphics[#1]{#2.pdf}}{%
    \IfFileExists{#2}{\includegraphics[#1]{#2}}{%
      \IfFileExists{#2.png}{\includegraphics[#1]{#2.png}}{%
        \fbox{%
          \parbox[c][3.0cm][c]{0.95\linewidth}{\centering\footnotesize Missing file:\ \texttt{\detokenize{#2}}}%
        }%
      }%
    }%
  }%
}
\makeatother

\newcommand{\stresspanel}[1]{%
  \maybeincludegraphics[width=\linewidth]{#1}%
}
\newcommand{\stresslegend}[1]{%
  \includegraphics[width=\linewidth,height=0.14\textwidth,keepaspectratio]{#1}%
}
\newcommand{\energypanel}[1]{%
  \maybeincludegraphics[width=\linewidth]{#1}%
}
\newcommand{\energylegend}[1]{%
  \includegraphics[width=0.95\linewidth,height=0.12\textwidth,keepaspectratio]{#1}%
}
\newcommand{\membershiplegend}[1]{%
  \includegraphics[width=\linewidth,height=0.10\textwidth,keepaspectratio]{#1}%
}
\newcommand{\membershipbounds}[1]{%
  \includegraphics[width=0.72\linewidth,keepaspectratio]{#1}%
}
\newcommand{\stressfigcaption}[1]{%
  \vspace{-0.45cm}%
  \caption{#1}%
}

\graphicspath{{./Figures/}}

\newdefinition{rmk}{Remark}

\journal{arXiv}

\begin{document}

\begin{frontmatter}

\title{\textbf{Interval and fuzzy physics-augmented neural networks (iPANN and fPANN) for uncertainty quantification and propagation in constitutive modeling}}

\author[1]{Somesh Pratap Singh}
\author[2]{Govinda Anantha Padmanabha}
\author[4]{Jingye Tan}
\author[1]{Steven Yang}
\author[3]{{Reese~E.~Jones}}
\author[3]{D. Thomas Seidl} 
\author[1,5]{Nikolaos Bouklas\corref{cor1}}
\ead {nbouklas@cornell.edu}

\cortext[cor1]{corresponding author: Nikolaos Bouklas}

\address[1]{Sibley School of Mechanical and Aerospace Engineering, Cornell University, Ithaca, 14853, NY, USA\fnref{label1}}
\address[2]{Ecole Polytechnique Federale de Lausanne (EPFL), 1015, Lausanne, Switzerland\fnref{label2}}
\address[3]{Sandia National Laboratories, Livermore, 94551, CA, USA\fnref{label2}}
\address[4]{Department of Aerospace and Mechanical Engineering, University of Southern California, Los Angeles,
90007, CA, USA\fnref{label4}}
\address[5]{Pasteur Labs, Brooklyn, 11205, NY, USA\fnref{label5}}






\begin{abstract}
Constitutive modeling under uncertainty remains a central challenge for reliable mechanics simulations, particularly when the available stress--deformation data are sparse, noisy, or heterogeneous.
We propose interval and fuzzy physics-augmented neural networks (iPANNs and fPANNs)  for uncertainty-aware hyperelastic constitutive modeling.
iPANNs learn sparse lower, mean, and upper free energy density branches whose stresses, obtained by automatic differentiation, ultimately enclose noisy stress observations.
In contrast to this deterministic interval description, fPANNs embed the learned iPANN
 branches into a fuzzy-set representation through $\alpha$-cut interpolation, yielding a nested 
 family of admissible responses. 
iPANNs and fPANNs encode mechanistic constraints --preserving objectivity, consistency and promoting polyconvexity-- and, smoothed $L_0$ regularization promotes interpretable energy representations.
The bound models are trained through a two-stage transfer-learning procedure in which a sparse mean constitutive response is learned first and then fine-tuned into lower and upper energy branches.
We evaluate the framework on synthetic isotropic hyperelastic data with heteroscedastic noise, varying random realizations, shifted noise means, and varying noise magnitudes.
The results show that the learned bounds enclose noisy stress observations while generalizing to the test set. 
Further, we examine the propagation of uncertainty through the mean, upper and lower bound  predictions of the learned iPANN models in a finite element setting.
The proposed framework provides a compact,  physics-consistent route for distribution-free aleatoric uncertainty quantification in hyperelastic constitutive modeling, and propagation in downstream finite element simulations.

\end{abstract}

\begin{keyword}
Uncertainty quantification, constitutive modeling, interval analysis, fuzzy set theory, input convex neural networks
\end{keyword}

\end{frontmatter}

\section{Introduction}
Constitutive laws have been one of the most important areas of research in mechanics since they form the link between the 
kinematics and the response of the material to an applied load; however, developing new constitutive laws, calibrating their parameters, 
and quantifying the uncertainty of the data and the developed models has been a persistent bottleneck for a multitude of reasons.
Traditionally, constitutive laws have been developed using phenomenological approach where one poses an ansatz based on experimental observations and physical requirements.
The parameters in the ansatz, as well as the form of the ansatz itself, are selected to characterize the material response~\cite{rivlin,mooney,ogden}.
Phenomenological modeling is the archetype of data-driven modeling in mechanics.
However, in order to capture the response in complex three-dimensional deformation states, constitutive laws are commonly cast in terms of tensor-valued tensor functions between stress and various deformation metrics~\cite{gurtin,holzapfel}.
Due to the limitations associated with traditional mechanical testing, macroscopically homogeneous deformations are only accessible for a limited set of load cases (uniaxial, biaxial, hydrostatic, simple shear, etc.), which restricts access to data that are uniformly sampled in deformation or deformation-rate space.
An individual loading curve might have many of data points, but in strain or strain-rate space, which are inherently high-dimensional, only a few states are accessible, providing limited observations.
The field of continuum mechanics traditionally enabled closing the gap between limited experimental observations (as is common in mechanical testing) and constitutive laws that proficiently generalize to complex loading conditions.
This approach is similar to current trends in improving machine learning (ML) approaches under sparse observations utilizing implicit biases.
Beyond simple mechanical testing, maturation of imaging technologies enabled the acquisition of full-field observations and the detailed analysis of heterogeneous deformation states.
Such approaches include digital image correlation (DIC)~\cite{dic1,dic2}, digital volume correlation~\cite{dvc1}, X-ray computed tomography~\cite{xrt1}, and X-ray diffraction more broadly~\cite{matt_miller}, providing richer information for mechanical testing and more complex datasets. 
Consequently, verification and  validation of constitutive laws in more complex loading states exhibiting heterogeneous deformations have also been intense topics of investigation.
Finite element method updating (FEMU)~\cite{mathieu2015estimation,femu} and the virtual fields method (VFM)~\cite{vfm} have been the major approaches for parameter calibration in this context.
Overall, the paradigm of phenomenological modeling often remains a bottleneck in engineering practice, as constitutive modeling remains a heavily user-controlled iterative process without tools for automation.

Data-driven approaches for constitutive modeling \citep{ddreview} had emerged multiple decades ago~\cite{gaboussi}, but until recently were limited to simple problems and often only uniaxial responses.
Data-driven constitutive modeling approaches can be broadly classified into two categories:  machine learning (ML)-based and model-free approaches.
Model-free approaches remove the need for an explicit constitutive model by directly embedding material observations into the solution of the governing equations.
The dataset is itself a part of the solution for model-free approaches~\cite{ortiz2022,ddreview}.
On the other hand, supervised ML-based approaches aim to learn the map between input-output pairs (e.g., strain-stress space).
One of the most widely used approaches to learn the input-output map is neural networks (NNs), where their property as universal approximators~\cite{goodfellow2016deep,cybenko1989ufc1,hornik1989ufc2,park1991ufc3,hornik1991ufc4} is utilized.
This implies that NNs fit even highly non-linear maps, but this often comes with a large number of parameters, which can cause overfitting and limit interpretability.
Several solutions for this problem, pertaining to both interpretability and encoding physics, have been proposed.
A common approach for enabling interpretability in data-driven learning is through symbolic regression~\cite{sym1_schmidt2009,sym2_koza1994, sym4_wang2019} and sparse regression~\cite{sparse1_brunton2016,sparse2_tibshirani1996}.
Physical priors can be encoded as soft constraints, as in the case of physics-informed neural networks (PINNs)~\cite{pinn1_raissi2019}, where a PDE residual is evaluated in the loss, or by choosing an architecture that strongly enforces a physical constraint such as in the case of input convex neural networks (ICNNs)~\cite{icnn_amos2017}. 
Analogous embedding of physical or mechanistic priors in neural networks used for constitutive modeling is commonly termed as physics-augmented neural networks (PANNs).
In the context of constitutive modeling, there has been an array of works focusing on learning from labeled data pairs from experimental and synthetic sources.
They have enabled: ensuring  objectivity and enforcing symmetry classes for anisotropy~\cite{linka, fuhg2022physics, patel2025general}, polyconvexity~\cite{pcnn_klein2022,fuhg2024polyconvex} and  monotonicity~\cite{mnn_klein2025} in the context of hyperelasticity,
as well as: enforcing mechanistic constraints\footnote{We refer to mechanistic constraints as a separate class,  as opposed to physical constraints, as they encode specific knowledge for the materials system rather than enforcing physical laws directly.} for elastoplasticity~\cite{fuhg2023modular} and inelasticity more broadly~\cite{Jones_2022,steinmann_GST,jones2026hierarchy}.
Regarding learning from complex imaging data,  EUCLID~\cite{sparse3_flaschel2021} showed that isotropic hyperelastic constitutive laws can be developed using unsupervised learning within a virtual fields framework using only full-field displacements and global force data quantities readily obtained from DIC and standard mechanical testing, without labeled stress observations.
 By enforcing equilibrium in the bulk and on the loaded boundary, and selecting sparse combinations from a library of candidate free energy density terms, the method recovers compact, interpretable model forms rather than black-box surrogates.
 More recent works instead combine ML with adjoint methods closer to the FEMU paradigm~\cite{tan2026towards,femu,mathieu2015estimation}.

For safe and trustworthy implementations in engineering applications and downstream simulations,  
deterministic point predictions from NNs are not sufficient and uncertainty quantification is required~\cite{uqj1_nemani2023,uqj2_fernandez2022,uqj3_fong2006,uqj4_cheng2018,uqj5_abulawi2025,uqj12_begoli2019}. 
Both aleatoric and epistemic uncertainty~\cite{uqj11_hullermeier2021} are important in the study of engineering systems. 
Epistemic uncertainty arises due to inadequate knowledge, common in model form error.
Aleatoric uncertainty is innate to the data distribution, is irreducible, and commonly emanates from the presence of noise.
Uncertain parameters in most applications are modeled through probability distributions and stochastic fields~\cite{uqj6_ostoja1998,uqj7_ostoja2006,uqj8_der1988,uqj9_stefanou2009}. 
Particularly for deep learning applications, Bayesian Neural Networks (BNNs) are one of the most widely used uncertainty quantification methods~\cite{mackay1992practical,neal1996bayesian,blundell2015weight,zhu2018bayesian}.
They provide a probabilistic treatment of model parameters rather than estimating a single deterministic set of weights.
BNNs follow a Bayesian formulation~\cite{gelman1995bayesian} where prior distributions are assigned to the network weights and biases, and training updates these priors into posterior distributions conditioned on the observed data.
Predictions are then obtained by marginalizing over the posterior distribution of the model parameters, which enables the predictive distribution to reflect both the expected response and the associated uncertainty~\cite{neal1996bayesian,jospin2022hands}.
This framework is especially useful for quantifying epistemic uncertainty, since uncertainty in the weights naturally captures the effect of limited or sparse training data, while appropriate likelihood formulations can also account for aleatoric uncertainty~\cite{kendall2017uncertainties}.
In practice, exact Bayesian inference in deep neural networks is generally intractable due to the number of parameters, so approximate inference techniques such as variational inference, Monte Carlo dropout, and related sampling-based methods are commonly employed~\cite{graves2011practical,blundell2015weight,gal2016dropout}.
Other methods which combine ensembles and optimization, such as Stein variational descent~\cite{liu2016stein}, have also shown to be effective with neural networks~\cite{padmanabha2025concurrent}.
A detailed review of uncertainty quantification methods in deep learning can be found in  Ref.~\cite{uqj15_abdar2021}.

Much of the work on uncertainty quantification for constitutive laws focuses on
epistemic uncertainty~\cite{padmanabha2024improving,padmanabha2025concurrent}
and model-form error~\cite{modelform}.
In Bayesian inverse problems arising from parameter identification,~\cite{tonini_uq} developed uncertainty-quantification variational autoencoders  (UQ-VAEs) that approximate the posterior mean and covariance of model parameters from noisy observational data,  with a reformulated training objective that reduces cost in high-dimensional parameter spaces and supports forward uncertainty propagation.
For history-dependent responses, Ref.~\cite{bessa1_uq} extended data-driven constitutive modeling to single- and multi-fidelity datasets while jointly estimating mean predictions, epistemic model uncertainty, and aleatoric noise through hierarchical Bayesian recurrent networks.
By disentangling these uncertainty sources, such approaches aim to improve the trustworthiness of learned constitutive maps when training data differ in fidelity, cost, and noise content.
Complementing fully Bayesian treatments, Ref.~\cite{quantile_bahador} introduced a distribution-free probabilistic framework for physics-constrained anisotropic constitutive models based on conformalized quantile regression over tensor-valued stress fields.
Within a polyconvex, thermodynamically consistent invariant formulation, the method learns conditional stress quantiles directly from data and can be applied in a plug-and-play manner to endow existing deterministic models with prediction intervals, without Monte  Carlo sampling at inference.
Unlike quantile-based probabilistic intervals, interval- and fuzzy-set formulations provide deterministic enclosures on admissible constitutive response and are particularly suited when bounds, rather than conditional distributions, are the quantity of interest.
Other probabilistic approaches generally require knowledge about the underlying probability distribution that is not always easy to obtain in mechanics applications due to data sparsity.
For this reason, non-probabilistic approaches like interval analysis~\cite{inta1_alefeld2011,inta2_neumaier1990} and fuzzy-set theory~\cite{fuzzy1_klir1995,fuzzy2_zimmermann2011} remain an alternative to Bayesian approaches.
Interval analysis is particularly useful when deterministic bounds on the inputs are available, a case commonly utilized in engineering practice when developing for worst case scenarios.
Furthermore, one typically appeals to fuzzy set theory in cases where uncertainty bounds are prescribed in lexical terms.

Parallel to these non-probabilistic uncertainty models, interval and fuzzy information has also been propagated directly within 
finite element frameworks.
Ref.~\cite{chen2000intervalfem} formulated an interval finite element method for structures with bounded uncertain parameters, 
combining conventional assembly and solution procedures with interval arithmetic to obtain enclosures on structural responses 
when probabilistic input descriptions are difficult to validate from limited data.
For fuzzy input and model parameters, M{\"o}ller et al.~\cite{moller2000fuzzy} developed a general fuzzy structural analysis strategy
based on $\alpha$-level optimization: at each membership level, fuzzy inputs are reduced to crisp feasible sets, and extrema of 
the structural response are obtained by optimization around an arbitrary deterministic solution, including geometrically and physically nonlinear finite element algorithms.
This nesting of deterministic solves at successive $\alpha$-levels provides a practical route to fuzzy finite element propagation 
and anticipates later extensions in which uncertain material laws themselves are represented by data-driven or physics-augmented neural network models~\cite{graf2012fuzzy}.

Early work by Freitag et al.~\cite{freitag2011rnn} introduced recurrent neural networks for fuzzy data to identify and 
predict uncertain structural response dependencies from monitoring records represented as fuzzy time series. 
By discretizing membership functions through $\alpha$-cuts and training networks to map deterministic or fuzzy inputs onto fuzzy outputs, 
this model-free approach enables data-driven identification of time-dependent stress-strain behavior from imprecise experimental or 
numerically generated sequences without prescribing a closed-form constitutive law.
More recently, physics-augmented neural networks have also been extended to polymorphic uncertainty
quantification in multiscale hyperelastic analyses~\cite{harazin2026multiscale}.
In that framework, uncertain mesoscale descriptors enter as additional inputs to a hyperelastic 
PANN surrogate of representative volume element response, while the deformation invariants remain 
the mechanical inputs. 
This separates the uncertain-parameter domain from the deformation domain and enables 
efficient pre-training and sampling-based propagation of aleatoric, epistemic, and interval-valued 
quantities without repeated nested finite element solvers.
These developments illustrate that encoding mechanistic structure in the network 
architecture can reduce data demand while keeping uncertainty propagation compatible with downstream 
finite element deployment.

In this work, we propose a framework for uncertainty quantification on isotropic
hyperelastic constitutive responses from noisy stress-deformation data. 
We first construct \emph{interval physics-augmented neural networks} (iPANNs), which extend physics-augmented learning and sparse input-convex representations~\cite{linka,fuhg2024extreme,yang2025} to learn separate lower, mean, and upper free energy density branches together with bound-aware losses that certify stress enclosures of the observables.
Building on these iPANN endpoints, we then introduce \emph{fuzzy physics-augmented neural networks} (fPANNs), which interpolate the three learned branches through convex $\alpha$-cut constructions~\cite{fuzzy3_fuhg2022,fem1_fish2007}.
Whereas iPANNs provide a deterministic interval model anchored at the extreme lower and upper bounds, fPANNs furnish a graded, membership-based description that allows less conservative enclosures when a practitioner specifies an intermediate $\alpha$.
Both representations retain compact analytic forms and can be incorporated seamlessly in a finite element setting for uncertainty propagation. 
We evaluate our proposed approach on synthetic stress--deformation data corrupted by 
multiplicative heteroscedastic noise as a baseline case and controlled perturbations of 
the noise seed, mean, and standard deviation on top of the baseline case.

The remainder of this paper is organized as follows. Section 2 introduces the theoretical and computational preliminaries required for the proposed framework, including interval analysis, fuzzy set theory and $\alpha$-cuts, hyperelastic constitutive modeling, and interval concepts in hyperelasticity. Section 3 presents the proposed fuzzy physics-augmented neural network framework for isotropic hyperelasticity, starting from preliminaries for input-convex neural networks and smoothed $L_0$ sparsification. We first formulate interval-valued free energy density functions and the associated stress enclosures, then introduce interval physics-augmented neural networks for learning lower and upper constitutive bounds, and finally construct fuzzy constitutive representations through membership-based interpolation of the learned energy branches. Section 4 describes the numerical experiments used to illustrate and evaluate the approach, including the synthetic data-generation procedure, the two-stage transfer-learning strategy, stress and energy-bound predictions under different noise models, and the resulting $\alpha$-cut membership analysis. Section 5 concludes the paper by summarizing the main findings and discussing the implications of the proposed framework for uncertainty-aware constitutive modeling and downstream finite element simulations for uncertainty propagation.

\section{Preliminaries}\label{sec:FITheory}
\subsection{Interval analysis}\label{sec:interval_theory}

An interval $\mathbf{x}$ is defined by its lower and upper bounds,
\begin{equation}
    \mathbf{x} = [\underline{x}, \overline{x}], 
    \qquad \underline{x} \le \overline{x},
\end{equation}
and represents the set of all real numbers $x$ such that $x \in [\underline{x}, \overline{x}]$.
For two intervals $\mathbf{x} = [\underline{x}, \overline{x}]$ and $\mathbf{y} = [\underline{y}, \overline{y}]$, 
interval addition is defined as
\begin{equation}
    \mathbf{x} + \mathbf{y} 
    = [\underline{x} + \underline{y},\; \overline{x} + \overline{y}],
\end{equation}
Scalar multiplication by a real number $\alpha\in\mathbb{R}$ is defined as:
\begin{equation}
    \alpha \mathbf{x}
    =
    \begin{cases}
        [\alpha \underline{x},\; \alpha \overline{x}], & \alpha \ge 0,\\[1mm]
        [\alpha \overline{x},\; \alpha \underline{x}], & \alpha < 0.
    \end{cases}
\end{equation}
Furthermore, if a function $g$ is a strictly increasing function, then its interval extension can be 
obtained by evaluating $g$ at the interval endpoints.

Using this interval arithmetic,  one can construct the output from a neural network layer using 
\begin{equation}\label{inn_neuron}
  y = g(wx + b)
\end{equation}
for a scalar neuron with weight $w$, bias $b$, and activation function $g$~\cite{inta1_alefeld2011,inta2_neumaier1990, garczarczyk2000interval}.


\subsection{Fuzzy set theory and $\alpha$-cuts}\label{sec:fuzzy_theory}

Fuzzy set theory provides a further generalization of interval analysis by relaxing the binary notion of set membership.
Instead of declaring whether a value belongs to a set or not, a \emph{membership function} $\mu$ assigns each element $x\in X$ a degree of membership between 0 and 1~\cite{fuzzy1_klir1995,fuzzy2_zimmermann2011}:
\begin{equation}
    \mu:X\to[0,1],
    \qquad x\mapsto\mu(x).
\end{equation}
When $\mu(x)=1$, the element $x$ has full membership in the fuzzy set, whereas $\mu(x)=0$ indicates no membership. A fuzzy set on the domain $X$ can therefore be represented by the pair $(X,\mu)$.
This framework is well suited to describing ``informal'' or linguistic uncertainty, for which only qualitative statements such as ``likely'', ``typical'', or ``extreme'' are available.

The link between fuzzy sets and interval analysis is made through the concept of \emph{$\alpha$-cuts}.
For a given level $\alpha\in(0,1]$, the $\alpha$-cut of the fuzzy set $(X,\mu)$ is defined as
\begin{equation}
    X_f^\alpha=\{x\in X\mid\mu(x)\ge\alpha\}.
\end{equation}
If the fuzzy set is convex, each $\alpha$-cut reduces to a closed interval
\begin{equation}
    X_f^\alpha=[X_L^{\alpha},X_U^{\alpha}],
\end{equation}
where $X_L^{\alpha}$ and $X_U^{\alpha}$ denote the lower and upper bounds of the $\alpha$-cut, respectively.
The full fuzzy set can be reconstructed from a nested family of intervals
\begin{equation}
   \{[X_L^\alpha,X_U^\alpha]\}\quad\forall\,\alpha\in(0,1].
\end{equation}
In many applications one works with a small number of representative $\alpha$-levels. A common choice is a triangular membership function whose peak ($\alpha=1$) corresponds to the most likely (or nominal) state and whose base ($\alpha\to 0^+$) 
spans the full range of admissible uncertainty~\cite{fuzzy3_fuhg2022}.

\subsection{Preliminaries for Hyperelasticity}\label{sec:hyperelasticity}

For a body occupying the reference configuration $\Omega_0$, kinematics define its motion from the reference configuration to the current configuration $\Omega$. Let $\mathbf{X} \in \Omega_0$ denote the material position vector and $\mathbf{x} \in \Omega$ its spatial counterpart. The deformation gradient tensor is then defined as $\mathbf{F}(\mathbf{X}) = \partial \mathbf{x} / \partial\mathbf{X}$,
which locally characterizes the mapping of material line elements from the reference configuration to their deformed counterparts in the current configuration.

A hyperelastic material possesses a Helmholtz free energy density function $\Psi$, defined per unit reference volume.
If $\Psi$ depends only on the deformation gradient tensor $\mathbf{F}$ or on some derived strain tensor, then 
the Helmholtz free energy function is called a free energy density. For homogeneous hyperelastic materials, 
the free energy density depends only upon the
deformation gradient tensor $\mathbf{F}$, i.e.\ $\Psi = \Psi(\mathbf{F})$. 
The associated response function of a hyperelastic material is of the form:
\begin{equation}
    \mathbf{P} = \frac{\partial \Psi}{\partial \mathbf{F}}
\end{equation}
where $\mathbf{P}$ denotes the first Piola–Kirchhoff stress tensor. The following are some common assumptions and restrictions  on the free energy density $\Psi$: (a) $\Psi$ is a continuous function of $\mathbf{F}$ that vanishes in the reference configuration, $\Psi(\mathbf{I}) = 0$ and increases with deformation, $\Psi(\mathbf{F}) \geq 0$; (b) $\Psi$ has no other stationary points in strain space, and satisfies the growth conditions i.e. $\Psi(\mathbf{F}) \to \infty$ as $\det(\mathbf{F}) \to 0^+$ or $\det(\mathbf{F})\ \to \infty$ and (c)
$\Psi(\mathbf{F})$, as generated by the motion $\mathbf{x} = \chi(\mathbf{X}, t)$ is objective $\Psi(\mathbf{F}) = \Psi(\mathbf{QF})  \quad \forall \mathbf{Q} \in \text{Orth}^+$.
If the material is isotropic, one can combine objectivity with polar decomposition to 
show that the scalar-valued isotropic tensor function $\Psi$ is invariant with the right Cauchy-Green tensor $\mathbf{C} = \mathbf{F}^T \mathbf{F}$. Using a representation theorem for invariants, 
one can express this $\Psi$ in terms of principal invariants $I_1, I_2, I_3$ of $\mathbf{C}$~\cite{truesdell_noll,holzapfel}.

\subsection{Interval Hyperelasticity and Stress Enclosure}\label{sec:interval_hyperelasticity}
It is important to note that experimental observables are mostly deformation- and force-like quantities (with stresses and strains being tensor-valued), but hyperelasticity is based on assuming a scalar-valued free energy density function. As such, it is convenient to develop an interval representation in terms of the free energy density function, but the difficulty lies in connecting it to the stress observables and the corresponding stress enclosure. Effectively, the interval representation is developed for an unseen quantity.

We introduce an \emph{interval-valued free energy density representation}:
\begin{equation}
\mathbf{\Psi}(\mathbf{F}) = [\underline{\Psi}(\mathbf{F}), \overline{\Psi}(\mathbf{F})]\label{eq:PsiInterval}
\end{equation}
where $\mathbf{\Psi}(\mathbf{F})$ is an interval where the upper and lower bounds characterizing the interval are functions of $\mathbf{F}$, which is a deterministic quantity in the ensuing formulation. 

The interval structure on $\Psi$ induces a corresponding \emph{stress enclosure} for the second Piola-Kirchhoff stress tensor:
\begin{equation}
\mathbf{S}(\mathbf{F}) \in \mathcal{S}\mathbf(\mathbf{F})
:=
\left\{
2 \frac{\partial \mathbf{\Psi} (\mathbf{F})}{\partial \mathbf{C}} 
\;\mid\;
\mathbf{\Psi} = [\underline{\Psi}, \overline{\Psi}]
\right\}.
\end{equation}

The second Piola-Kirchhoff stress corresponding to $\underline{\Psi}$ and $\overline{\Psi}$  
is an interval quantity,
\begin{equation}
\underline{\mathbf{S}}=2\frac{\partial \underline{\Psi}}{\partial \mathbf{C}},
\qquad
\overline{\mathbf{S}}=2\frac{\partial \overline{\Psi}}{\partial \mathbf{C}}.
\label{s_lb_ub2}
\end{equation}
The overline and underline for the stress tensor do not yet refer to an
ordering for the stress tensor itself, but for now just indicate that the stress tensor
 is defined corresponding to the lower and upper free energy density branches.

The Green Lagrange strain tensor is defined as $\mathbf{E}=\frac{1}{2}(\mathbf{C}-\mathbf{I})$, has 
principal values $E_i=\frac{1}{2}(\lambda_i^2-1)$, $i=1,2,3$  (in terms of the principal stretches $\lambda_i$)and 
\begin{equation}
\bm{E} = \sum_{i=1}^{3} E_i \bm{A}_i \otimes \bm{A}_i
\end{equation}
is its spectral decomposition, where 
\(\bm{A}_i\) are the corresponding orthonormal eigenvectors. For an isotropic
hyperelastic material, the stored energy may be written as $\widehat{\Psi}(E_1,E_2,E_3)$.
Moreover, the second Piola stress tensor $\bm{S}$ is coaxial with \(\bm{E}\). Hence
\begin{equation}
\bm{S}
=
\frac{\partial \Psi}{\partial \bm{E}}
=
\sum_{i=1}^{3} S_i \bm{A}_i \otimes \bm{A}_i,
\end{equation}
where
\begin{equation}
S_i = \frac{\partial \widehat{\Psi}}{\partial E_i}.
\end{equation}

We assume that the reference configuration $\mathbf{E}=\bm{0}$ is stress-free for the lower-bound,
mean, and upper-bound responses. That is,
\begin{equation}
    \underline{\mathbf{S}}(\bm{0})=\mathbf{S}(\bm{0})=\overline{\mathbf{S}}(\bm{0})=\bm{0}
\end{equation}
underlying the fact that, in this construction, the stress enclosure collapses at that deformation state.
Equivalently, in principal stress and strain notation,
\begin{equation}
\underline{S}_i(\{ E_j=0 \})=S_i(\{ E_j=0 \})=\overline{S}_i(\{ E_j=0 \})=0,\qquad i,j=1,2,3.
\end{equation}
We also choose arbitrary additive constants in the strain-energy densities
so that all three energies vanish at the reference configuration:
\begin{equation}
    \underline{\Psi}(\bm{0})=\Psi(\bm{0})=\overline{\Psi}(\bm{0})=0.
    \label{eq::energydatum}
\end{equation}
This energy normalization is independent of the stress-free reference
assumption, since stresses are derivatives of the energy and are unchanged by
additive constants.

\begin{proposition}
In the context of isotropic hyperelasticity, let $\underline{\Psi},\Psi,\overline{\Psi}$ be 
free energy density functions with associated principal second Piola-Kirchhoff 
stresses $\underline{S}_i,S_i,\overline{S}_i$. 
For any given admissible strain state $\mathbf{E}$, if the following ordering holds for the principal stresses
\begin{equation}
\operatorname{sign}(E_i)\underline{S}_i\le \operatorname{sign}(E_i)S_i\le \operatorname{sign}(E_i)\overline{S}_i, \,i=1,2,3, \, (\text{no}\,\operatorname{sum})\label{eq:SiOrdering}
\end{equation}
where $\operatorname{sign}(\,)$ passes the sign of the scalar quantity in parentheses, then the following ordering holds for the free energy density branches
\begin{equation}
\underline{\Psi}(\mathbf{E})\le \Psi(\mathbf{E})\le \overline{\Psi}(\mathbf{E}).\label{eq:PsiOrdering}
\end{equation}
\end{proposition}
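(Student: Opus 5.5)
Write $\widehat{\Psi}(E_1,E_2,E_3)$, and similarly $\underline{\widehat{\Psi}}$ and $\overline{\widehat{\Psi}}$, for the principal-strain representations introduced above. Fix an admissible strain state $\mathbf{E}$ with principal values $(E_1,E_2,E_3)$. The plan is to integrate the stresses along the straight radial path $t\mapsto t\mathbf{E}$, $t\in[0,1]$, from the reference configuration to $\mathbf{E}$, and to use the energy datum \eqref{eq::energydatum}.

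Along this path the principal directions $\bm{A}_i$ stay fixed and the principal strains are $tE_i$. The fundamental theorem of calculus then gives
\begin{equation*}
\widehat{\Psi}(E_1,E_2,E_3)-\widehat{\Psi}(0,0,0)=\int_0^1\sum_{i=1}^3 S_i(tE_1,tE_2,tE_3)\,E_i\,dt .
\end{equation*}
The same identity holds for $\underline{\widehat{\Psi}}$ and $\overline{\widehat{\Psi}}$. The left-hand side is just the value of the energy at $\mathbf{E}$, because all three energies vanish at $\mathbf{E}=\bm{0}$.

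Next, write $E_i=\operatorname{sign}(E_i)\,|E_i|$. Each integrand term then becomes $|E_i|\,\operatorname{sign}(E_i)\,S_i(t\mathbf{E})$. Since $\operatorname{sign}(tE_i)=\operatorname{sign}(E_i)$ for $t>0$, the hypothesis \eqref{eq:SiOrdering} can be applied at each intermediate state $t\mathbf{E}$. This gives, term by term,
\begin{equation*}
|E_i|\operatorname{sign}(E_i)\underline{S}_i(t\mathbf{E})\le |E_i|\operatorname{sign}(E_i)S_i(t\mathbf{E})\le |E_i|\operatorname{sign}(E_i)\overline{S}_i(t\mathbf{E}).
\end{equation*}
Terms with $E_i=0$ contribute nothing, so the sign convention at zero does not matter. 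Summing over $i$ and integrating over $t$ preserves these inequalities, and \eqref{eq:PsiOrdering} follows.

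The main obstacle is the quantifier in the hypothesis. As literally stated, \eqref{eq:SiOrdering} is assumed only at the given state $\mathbf{E}$, whereas the argument needs it along the whole radial segment. I would handle this by reading the proposition as holding for all admissible strain states, or at least on the star-shaped set of states $t\mathbf{E}$. This reading is natural because the sign pattern of the $E_i$ is constant along the ray.

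Without that reading the claim can fail. Energy is path-integrated stress, so a pointwise ordering at a single state does not constrain the energy values there. I would state this interpretation explicitly and note that admissibility of $t\mathbf{E}$ requires $\lambda_i^2=1+2tE_i>0$. That holds automatically for $t\in[0,1]$ whenever it holds at $t=1$, since $1+2tE_i$ is affine in $t$ and positive at both endpoints. I would also assume enough smoothness ($C^1$ energies) to justify the fundamental theorem of calculus and the chain rule $S_i=\partial\widehat{\Psi}/\partial E_i$.
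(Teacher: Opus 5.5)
Your proposal is correct and follows essentially the same route as the paper: integrate the incremental work $\sum_i S_i\,dE_i$ along the radial path $\gamma\mathbf{E}$, use $\operatorname{sign}(d_\gamma E_i)=\operatorname{sign}(E_i)$ to order the three integrands term by term, and then invoke the common energy datum at $\mathbf{E}=\mathbf{0}$. Your reading of the hypothesis as holding along the whole ray matches what the paper's proof actually assumes, since it invokes the ordering ``for any admissible strain state''. Your version also avoids the paper's unneeded extra sign claims such as $0\le\underline{S}_i$.
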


\begin{proof}
Consider a loading path from the zero stress reference configuration $\mathbf{E}=\mathbf{0}$ to any strain state $\mathbf{E}$, following a ray in strain space as 
\begin{equation}
\hat{\bm{E}}(\bm{\gamma})=\gamma \bm{E},
\qquad
\gamma \in [0,1],
\end{equation}
with $\gamma$ monotonically non-decreasing. The increment of the principal strain along the ray is
\begin{equation}
    d_{\gamma}E_i=\frac{d E_i}{d \gamma} d\gamma
\end{equation}
Along this ray, the following holds for the principal strain values
and their corresponding increments
\begin{equation}
 \mathrm{sign}(d_{\gamma}E_i)=\mathrm{sign}(E_i).
\end{equation}

The corresponding increment of the free energy density along the ray is
\begin{equation}
d_{\gamma}\Psi =  \sum_{i=1}^3 S_i\, d_{\gamma}E_i,
\end{equation}
and similarly for $\underline{\Psi}$ and $\overline{\Psi}$.
Assuming that $\operatorname{sign}(E_i)\underline{S}_i\le \operatorname{sign}(E_i)S_i\le \operatorname{sign}(E_i)\overline{S}_i$,  $i=1,2,3$ $(\text{no}\,\operatorname{sum})$ for any admissible strain state $\bm{E}$, two cases follow:

i) If $\lambda_i \ge 1$, then $E_i \ge 0$ and, by assumption, $0 \le \underline{S}_i \le S_i \le \overline{S}_i$ with $d_{\gamma}E_i \ge 0$, so
\begin{equation}
\underline{S}_i\, d_{\gamma}E_i \le S_i\, d_{\gamma}E_i \le \overline{S}_i\, d_{\gamma}E_i. \label{eq::workineq1}
\end{equation}

ii) If $\lambda_i \le 1$, then $E_i \le 0$ and $\overline{S}_i \le S_i \le \underline{S}_i \le 0$ with $d_{\gamma}E_i \le 0$, which again yields
\begin{equation}
\underline{S}_i\, d_{\gamma}E_i \le S_i\, d_{\gamma}E_i \le \overline{S}_i\, d_{\gamma}E_i. \label{eq::workineq2}
\end{equation}
As the inequalities from Eqs, \ref{eq::workineq1}-\ref{eq::workineq2} align, summing over $i$ for any admissible path gives
\begin{equation}
\label{eq:diffinequalities}
d_{\gamma}\underline{\Psi} \le d_{\gamma}\Psi \le d_{\gamma}\overline{\Psi}.
\end{equation}
Integration along the path from $\gamma=0$ (at the reference $\mathbf{E}=\mathbf{0}$) to $\gamma=1$,  and invoking path independence yields
\begin{equation}
\underline{\Psi}(\bm{E})-\underline{\Psi}(\bm{0})
\le
\Psi(\bm{E})-\Psi(\bm{0})
\le
\overline{\Psi}(\bm{E})-\overline{\Psi}(\bm{0}).
\end{equation}
As a common energy datum was assumed at the reference state per Eq. \ref{eq::energydatum}, then the energy ordering is retrieved
\begin{equation}
\underline{\Psi}(\mathbf{E})\le \Psi(\mathbf{E})\le \overline{\Psi}(\mathbf{E}).
\end{equation}
\end{proof}

The differential inequalities in Eq. \ref{eq:diffinequalities} are established along a particular path in strain space, but the resulting energy bounds follow from path independence for any path. Since the response is hyperelastic, $d\Psi=\mathbf{S}:d\mathbf{E}$ is an exact differential. Thus, to bound the energy at a given state $\mathbf{E}$, it is sufficient to choose any convenient admissible path from the reference state to $\mathbf{E}$. 
Along this comparison path, the stress--strain sign assumptions preserve the ordering of the principal incremental work terms. Integrating along this path then yields bounds on $\Psi(\mathbf{E})$, which hold for the endpoint state by path independence.
As such, an interval enclosure for the principal values of the second Piola-Kirchhoff stress tensor is defined as:
\begin{equation}
S_i \in \mathcal{S}_i
= [\operatorname{sign}(E_i)\underline{S}_i,\operatorname{sign}(E_i)\overline{S}_i], \quad i=1,2,3, \quad \text{no}\,\operatorname{sum}\label{eq:SInterval}
\end{equation}
following the interval of the free energy densities (with a common datum), as defined in Eq. \ref{eq:PsiInterval}. 

It is important to establish the function of the stress ordering in Eq. \ref{eq:SInterval} in this interval construction. In this setting, the stresses are noisy observables paired with a corresponding strain state. As such, Eq. \ref{eq:SInterval} establishes that for every principal second Piola-Kirchhoff stress the upper and lower bounds of the interval enclosure flip as the corresponding principal Green-Lagrange strain changes sign\footnote{For positive $E_i$ the upper bound $\overline{S}_i$ bounds the data from above, whereas for negative $E_i$ it bounds the data from below, and the dual statement holds for the lower bound $\underline{S}_i$.}. The corresponding free energy ordering of Eq. \ref{eq:PsiOrdering} enables the consistent interval construction of Eq. \ref{eq:PsiInterval}  that matches the ordering of stress observations. Matching interval bounds of observed stresses to free energy density interval bounds is central to obtaining an interval hyperelasticity formulation that learns the uncertainty of the free energy and propagates it to the observable, which in this case is the stress-strain pair. Furthermore, we propose a data-driven scheme for learning the interval free energy density and its extension to the fuzzy setting

\section{ Interval and Fuzzy Physics Augmented Neural Networks for Hyperelasticity}\label{sec:kinematics_thermo}
Towards an ML-enabled construction for interval and fuzzy hyperelasticity, this section summarizes constructions for enhancing implicit biases for structure and sparsity of neural networks. In the following, Interval and Fuzzy Physics Augmented Neural Networks (fPANN and iPANN) are introduced.

\subsection{Polyconvexity and Input Convex Neural Networks (ICNN)}\label{sec:icnn}

Polyconvexity is a sufficient (not necessary) condition on the free energy density function $\Psi(\mathbf{F})$  which guarantees weak lower semicontinuity of the energy functional and thus the existence of minimizers under appropriate growth conditions.
The potential $\Psi(\mathbf{F})$ is polyconvex if it can be written as a convex function of $\mathbf{F}$ and all of its minors~\cite{ball}.
Polyconvexity is a convenient restriction to pose so that the material exhibits a stable response, thus avoiding the issues that might manifest during finite element analysis~\cite{tan2026towards}. ICNNs are a class of neural networks that are guaranteed to be convex in their inputs by construction~\cite{icnn_amos2017}.
In the context of isotropic hyperelasticity, ICNNs can represent a potential which is convex in the inputs $I_1$, $I_2$ and $J$ ($J=\sqrt{I_3}$). Achieving polyconvexity in an NN setting requires additional structural considerations beyond input convexity. More extensive discussions are included in several works regarding the translation of this condition in an NN setting~\cite{pcnn_klein2022,mnn_klein2025,schroder2003invariant}.



For a $k$-layer fully connected ICNN, the architecture is defined as
\begin{equation}
\label{icnn_formulation}
\bm{z}_{i+1}
=
g_i\left(
\bm{W}^{(z)}_i\bm{z}_i
+
\bm{W}^{(y)}_i\bm{y}
+
\bm{b}_i
\right),
\qquad
f(\bm{y};\bm{\theta})=\bm{z}_k,
\end{equation}
for $i=0,\ldots,k-1$, where $\bm{z}_i$ denotes the layer  activations (with $\bm{z}_0,\bm{W}^{(z)}_0\equiv 0$), and
$\bm{\theta}=\{\bm{W}^{(y)}_{0:k-1},\bm{W}^{(z)}_{1:k-1},\bm{b}_{0:k-1}\}$
denotes the set of trainable parameters. The function $f$ is convex in $\bm{y}$ provided that the weights $\bm{W}^{(z)}_{1:k-1}$ are constrained to be non-negative and the activation functions $g_i$ are convex and non-decreasing.

Pass-through layers, which directly connect the input $y$ to hidden units in deeper layers via the $\bm{W}^{(y)}_i$ terms, are 
essential for ICNNs. In standard feedforward networks, previous hidden units can always be mapped to subsequent layers with 
identity mappings. However, for ICNNs, the non-negativity constraint on $\bm{W}^{(z)}$ weights restricts the use of hidden units 
that mirror the identity mapping. Passthrough layers explicitly provide direct connections from input to deeper layers, 
enabling the network to learn more expressive convex functions~\cite{icnn_amos2017}.

\subsection{Smoothed $L_0$ sparsification}\label{sec:l0}

Regularization is commonly used to improve the robustness of neural network models and mitigate overfitting. A widely used choice is $L_2$ regularization, which penalizes the squared magnitude of the parameters, thereby suppressing large parameter values and encouraging smoother learned responses. However, because this penalty continuously shrinks parameters rather than driving them exactly to zero, 
it does not typically produce sparse models. While $L_1$ regularization is often used to encourage sparsity by 
penalizing the absolute magnitude of the parameters, it penalizes parameter magnitudes rather than explicitly 
counting the number of nonzero parameters, as in $L_0$ regularization. For neural network-based constitutive 
laws, explicit sparsification is important because a large number of active trainable parameters can reduce 
the interpretability of the learned material response, even when the model achieves high predictive accuracy.
We therefore employ a smoothed $L_0$ regularization strategy to obtain a compact representation with fewer
active parameters while preserving predictive accuracy~\cite{louizos2017learning,mcculloch2024sparse,fuhg2024extreme}. 
Because the exact $L_0$ penalty is non-differentiable, a differentiable relaxation is required for gradient-based training. 
We adopt the hard-concrete stochastic gating framework of Louizos et al.~\cite{louizos2017learning}, where each trainable 
parameter is re-parameterized as
\begin{equation}
\bm{\theta}=\bar{\bm{\theta}}\odot \bm{\mathsf{z}},
\end{equation}
Here $\bm{\mathsf{z}}$ is defined as follows:
\begin{equation}
\bm{\mathsf{z}}=\min\!\bigl(\bm{1},\max(\bm{0},\overline{\bm{s}})\bigr), 
\qquad
\overline{\bm{s}}=\bm{s}(\zeta-\gamma)+\gamma\bm{1},
\end{equation}
and
\begin{equation}
\bm{s}=\operatorname{sig}\!\left(\frac{\log \bm{u}-\log(1-\bm{u})+\log \bm{\alpha}}{\beta}\right).
\end{equation}
Here $\odot$ denotes the Hadamard product, and $\bm{u} \sim \mathcal{U}(0,1)$ is sampled element-wise from a uniform distribution. The gate parameters are initialized by sampling $\log\bm{\alpha} \sim \mathcal{N}(0,\sigma)$, with $\sigma=0.01$. The corresponding differentiable approximation of the expected $L_0$ penalty is
\begin{equation}
\mathcal{L}_{0}(\bm{\theta})
=
\sum_{j=1}^{|\bm{\theta}|}
\operatorname{sig}\!\left(\log \alpha_{j}-\beta\log\frac{-\gamma}{\zeta}\right).
\label{eq:l0}
\end{equation}
When stochastic hard-concrete gates are used, the gate vector $\bm{\mathsf{z}}$ is random, so the expected data-fitting loss can be approximated using Monte Carlo sampling:
\begin{equation}
\mathcal{L}_{\mathrm{total}}(\bar{\bm{\theta}},\bm{\alpha}) = \frac{1}{M}\sum_{m=1}^{M} \mathcal{L}_{\mathrm{data}}
\!\left( \bar{\bm{\theta}}\odot \bm{\mathsf{z}}^{(m)} \right) + \lambda_{0} \mathcal{L}_{0}(\bm{\theta}).
\end{equation}
Here, $M$ is the number of Monte Carlo samples, $\bm{\mathsf{z}}^{(m)}$ is obtained from the hard-concrete sampling procedure, and $\lambda_{0}$ controls the strength of the sparsity penalty. 
The constants are set to $\gamma=-0.1$, $\zeta=1.1$, and $\beta=\frac{2}{3}$ as in~\cite{fuhg2024extreme,padmanabha2024improving},
We use a hybrid gating strategy, i.e., stochastic hard-concrete sampling is used only in the first training forward pass. Subsequent training and inference passes use deterministic mean gates. This improves training stability and speeds up training convergence while retaining a sparsity-promoting regularization effect.

\subsection{iPANNs}\label{sec:loss}
iPANNs are constructed to learn free energy density bounds from limited noisy stress-strain observations and aim to propagate the uncertainty bounds in unseen loading states. 
Starting from the definition of the free energy density interval Eq. \ref{eq:PsiInterval} and 
principal stress interval Eq. \ref{eq:SInterval}, and the consistent physics augmentation of 
neural networks for hyperelasticity as discussed in Section \ref{sec:icnn}, we introduce 
iPANNs in the following as an extension of interval neural networks~\cite{garczarczyk2000interval}.
iPANNs are networks that, given a deformation state, can predict the free energy 
density interval Eq. \ref{eq:PsiInterval} and the corresponding principal stress interval Eq. \ref{eq:SInterval}, 
The learned interval-valued free energy density representation is denoted as:
\begin{equation}
\Psi^*(\mathbf{F}) \in \mathbf{\Psi}^*(\mathbf{F}) = [\underline{\Psi}^*(\mathbf{F}), \overline{\Psi}^*(\mathbf{F})]\label{eq:learnedPsiInterval}
\end{equation}
where $(\,)^*$ denotes a learned quantity. The corresponding enclosure for the principal values of 
the second Piola-Kirchhoff stress tensor can be calculated through automatic differentiation~\cite{paszke2017automatic}:
\begin{equation}
S^*_i \in \mathcal{S}^*_i
= [\operatorname{sign}(E_i)\underline{S}^*_i,\operatorname{sign}(E_i)\overline{S}^*_i], \quad i=1,2,3.. \quad (\text{no\,sum})\label{eq:learnedSInterval}
\end{equation}

Following common approaches for training PANNs~\cite{ddreview},  the learned model would aim to minimize MSE across observations, leading to a prediction that is an approximation to the mean response.
As the noisy observables here are stresses,  the learned interval models should provide predictions that enforce the physics augmentations as discussed in Sec. \ref{sec:icnn}, and at the same time approximate the bounds of the interval from Eq. \ref{eq:learnedSInterval}.
Additionally,  to enable interpretability, we also aim to promote sparsity in network parameters~\cite{fuhg2024extreme} as discussed in Sec. \ref{sec:l0}.
This is achieved, separately for the upper and the lower bound of the energy, through the following loss function structure:
\begin{equation}
\label{eq:loss}
\mathcal{L} =
\lambda_{\text{mse}}\mathcal{L}_{\text{mse}} +
\lambda_{\text{sparse}}\mathcal{L}_{\text{sparse}} + 
\lambda_{\text{bound}}\mathcal{L}_{\text{bound}}
\end{equation}
Here $\mathcal{L}_{\text{mse}}$ is the mean squared error (MSE) evaluated on \emph{stresses},  computed over the training dataset $\{(\mathbf{x}_i,\mathbf{y}_i)\}_{i=1}^{N}$, where  $\mathbf{x}_i$ collects the deformation state (in terms of the invariants) and  $\mathbf{y}_i$ is the corresponding observed second Piola-Kirchhoff stress tensor.
It is important to note that the network $f(\mathbf{x}_i; \bar{\bm{\theta}} \odot \bm{\mathsf{z}})$ does not output the stress directly; its scalar output is a free energy density.
Following the normalization used for physics-augmented constitutive networks in Fuhg et al.~\cite{fuhg2024extreme}, we enforce the stress-free reference condition and energy normalization introduced in Section~\ref{sec:interval_hyperelasticity} by correcting each learned energy branch according to
\begin{equation}
\Psi_{\bm{\theta}}(I_1,I_2,J)
=
f(I_1,I_2,J;\bm{\theta})
-n_{\mathrm{ref}}(J-1)
-f(3,3,1;\bm{\theta}),
\label{eq:stress_free_correction}
\end{equation}
where
\begin{equation}
n_{\mathrm{ref}}
=
2\left(
\frac{\partial f}{\partial I_1}
+2\frac{\partial f}{\partial I_2}
+\frac{1}{2}\frac{\partial f}{\partial J}
\right)_{(I_1,I_2,J)=(3,3,1)} .
\end{equation}
The constant subtraction sets $\Psi_{\bm{\theta}}(3,3,1)=0$, while the term linear in $J$ 
ensures $\mathbf{S}(\mathbf{F}=\mathbf{I})=\bm{0}$.
The stress prediction is obtained from this corrected energy by automatic differentiation, 
consistent with Eq.~\eqref{eq:learnedSInterval}. The derived stress from the corrected energy in Eq ~\eqref{eq:stress_free_correction} enters the loss.
Denoting the energy-to-stress operator by $\mathbf{S}[\cdot]=2\,\partial(\cdot)/\partial\mathbf{C}$, the MSE loss reads 
\begin{equation} 
\mathcal{L}_{\text{mse}} = \frac{1}{N}\sum_{i=1}^{N} \left\|  \mathbf{S}\!\big[\Psi_{\bm{\theta}}(\mathbf{x}_i)\big]  - \mathbf{y}_i \right\|_2^2, 
\end{equation} 
where $\Psi_{\bm{\theta}}(\mathbf{x}_i)$ is obtained from the gated parameters $\bm{\theta} = \bar{\bm{\theta}} \odot \bm{\mathsf{z}}$ as defined in Section~\ref{sec:l0}, and $\|\cdot\|_2$ is the Frobenius norm over the stress components.

The second term in the loss \eqref{eq:loss}, $\mathcal{L}_{\text{sparse}}$, is identified with the smoothed $L_0$ regularization of Eq.~\eqref{eq:l0}, i.e.\ $\mathcal{L}_{\text{sparse}}\equiv\mathcal{L}_{0}$ and $\lambda_{\text{sparse}}\equiv\lambda_{0}$, and acts by penalizing the number of active parameters.

As the aim is to learn the free energy density interval that covers the observables, penalization for violations of the learned interval bounds in terms of principal stresses is required for cases where the data does not comply with the learned bounds. 
To achieve this, in the third loss term $\mathcal{L}_{\text{bound}}$ we utilize the Macaulay bracket
\[
\langle x \rangle =
\begin{cases}
0, & x < 0, \\
x, & x \ge 0 ,
\end{cases}
\]
which activates only when the constraint is violated. Since this function is non-differentiable at $x=0$, we replace it with a smooth, differentiable approximation
\begin{equation}
g(x) = \operatorname{sig}(x/\varepsilon)\,\bigl(2\operatorname{sig}(mx)-1\bigr),
\end{equation}
where $\operatorname{sig}(x)=(1+e^{-x})^{-1}$ denotes the sigmoid function. The parameter $\varepsilon$ 
controls the sharpness of the gating function $\operatorname{sig}(x/\varepsilon)$, which suppresses contributions
for $x<0$ and approaches unity for $x>0$, while $m$ controls the steepness of the ramp $2\operatorname{sig}(mx)-1$ as
the violation increases. The influence of these parameters on the shape of $g(x)$ is illustrated in 
Fig.~\ref{fig:boundloss}.

\begin{figure}[t]
\centering
\begin{subfigure}[t]{0.48\textwidth}
    \centering
    \includegraphics[width=\linewidth]{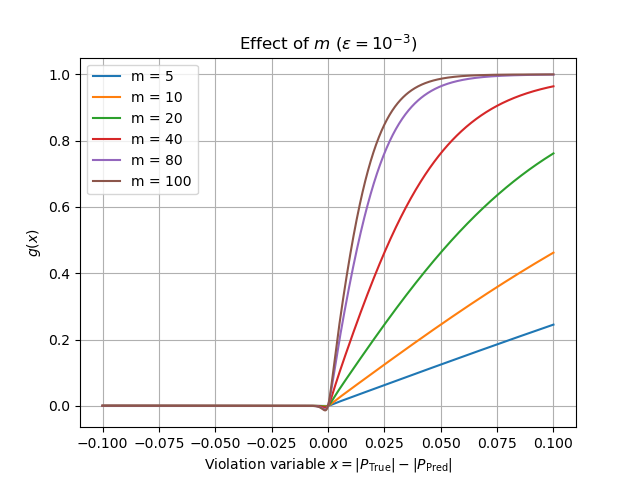}
    \caption{Effect of the parameter $m$ on the ramp steepness of $g(x)$.}
    \label{fig:boundloss_m}
\end{subfigure}
\hfill
\begin{subfigure}[t]{0.48\textwidth}
    \centering
    \includegraphics[width=\linewidth]{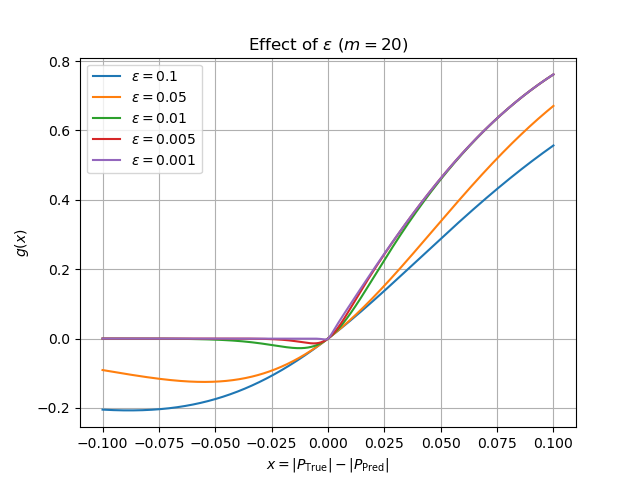}
    \caption{Effect of $\varepsilon$ on the sharpness of the gating function.}
    \label{fig:boundloss_eps}
\end{subfigure}
\caption{Smooth approximation of the Macaulay bracket used in the bound loss.}
\label{fig:boundloss}
\end{figure}

We formulate the constraint in terms of the magnitude of the observables $|S_i|$ and of the learned representation of the interval from Eq. \ref{eq:learnedSInterval}. The violation variable $x$ is defined for the upper and lower bounds as
\begin{equation}
    x =
\begin{cases}
\mathrm{sign}(E_i)(S_i - \overline{S}^*_i), & \text{upper bound},\\[2mm]
\mathrm{sign}(E_i)(\underline{S}^*_i - S_i), & \text{lower bound},
\end{cases}
\label{boundloss}
\end{equation}
and the smooth Macaulay approximation $g(x)$ is applied in both cases. The bound term for the loss function can be written as 
\begin{equation} 
  \mathcal{L}_{\text{bound}} = 
  \frac{1}{N}\sum_{i=1}^{N} g(x_i). 
\end{equation}

Ultimately, the iPANN construction enables a data-driven approximation of the upper and 
lower free energy bounds that enclose all the stress observations. Predictively, it enables 
propagating the aleatoric uncertainty in unseen strain states.

\subsection{fPANNs and membership}\label{sec:fem_membership}
Furthermore, fPANNs are constructed based on iPANNs, enabling approximation of free energy density bounds for different levels of membership. 
Following the discussion in Section \ref{sec:fuzzy_theory}, in order to construct a fuzzy set for the free energy density, a membership function $\mu$ assigns to each element $\Psi(\mathbf{F})$ a degree of membership between 0 and 1:
\begin{equation}
    \mu : \Psi(\mathbf{F}) \to [0,1], 
    \qquad \Psi(\mathbf{F}) \mapsto \mu(\Psi(\mathbf{F})).
\end{equation}
The $\alpha$-cut of a fuzzy set of free energy densities, for a given level $\alpha \in (0,1]$, is defined as
\begin{equation}
    \Psi_f^\alpha(\mathbf{F}) = \{ \Psi(\mathbf{F}) \in \mathbf{\Psi}(\mathbf{F}) \mid \mu(\Psi) \ge \alpha \}.
\end{equation}
Assuming the construction of a fuzzy set of free energy densities is convex, each $\alpha$-cut reduces to a closed interval
\begin{equation}
    \Psi_f^\alpha(\mathbf{F}) = [\underline{\Psi}_\alpha(\mathbf{F}), \overline{\Psi}_\alpha(\mathbf{F})].
\end{equation}
so that the full fuzzy set can be reconstructed from a nested family of intervals 
\begin{equation}
   \{[\underline{\Psi}_\alpha(\mathbf{F}), \overline{\Psi}_\alpha(\mathbf{F})]\}\quad\forall\,\alpha\in(0,1].
\end{equation}
The concepts introduced in this subsection are illustrated schematically in Fig.~\ref{fig:fuzzy_triangle}.

\begin{figure}[t]
\centering
\maybeincludegraphics[width=0.62\linewidth]{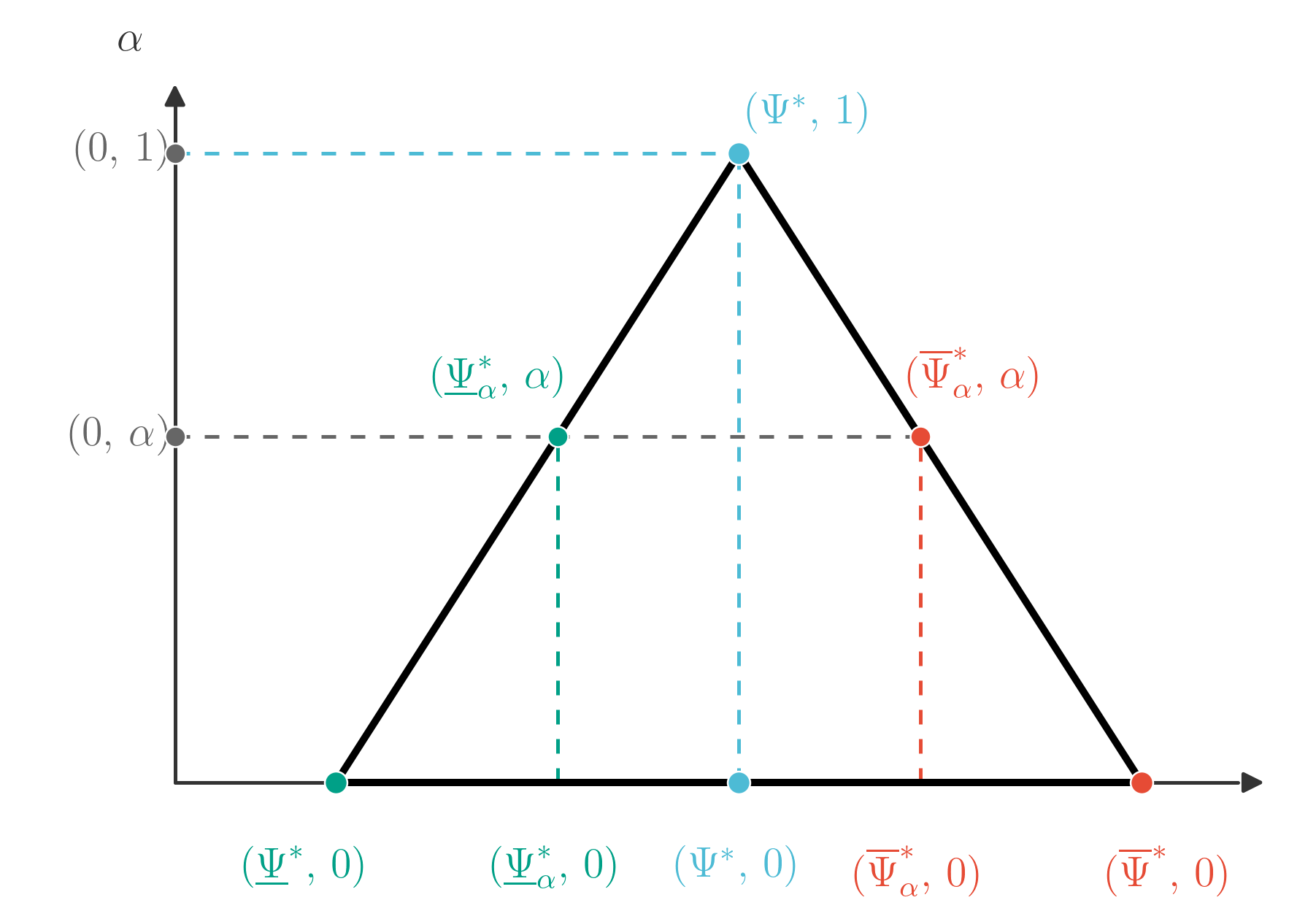}
\caption{Triangular fuzzy membership function for the learned free energy densities.}
\label{fig:fuzzy_triangle}
\end{figure}

Furthermore, we construct a convex fuzzy set 
based on the learned iPANN for the bounds $\underline{\Psi}^*(\mathbf{F}), \overline{\Psi}^*(\mathbf{F})$ 
and PANN for the mean $ {\Psi}^*(\mathbf{F})$.
Through Eq. \ref{eq:loss}, removing the term $\mathcal{L}_{\text{bound}}$ from the loss will result in learning an approximation of the mean response, and corresponding mean free energy density, for which the following inequality holds $\underline{\Psi}^*(\mathbf{F})\le {\Psi}^*(\mathbf{F}) \le \overline{\Psi}^*(\mathbf{F})$. To maintain convexity with respect to membership, we employ a piecewise linear interpolation between the  three models following
\begin{equation}
  \underline{\Psi}^*_\alpha(\mathbf{F})  = (1-\alpha)\underline{\Psi}^*(\mathbf{F}) + \alpha\Psi^*(\mathbf{F}),
  \label{eq:interpolation1}
\end{equation}
\begin{equation}
  \overline{\Psi}^*_\alpha(\mathbf{F})  = \alpha\Psi^*(\mathbf{F}) + (1-\alpha)\overline{\Psi}^*(\mathbf{F})  .
  \label{eq:interpolation2}
\end{equation}
Following Eqs. \ref{eq:PsiInterval} and \ref{eq:learnedPsiInterval}, an enclosure for the principal values of 
the second Piola-Kirchhoff stress tensor, the corresponding $\alpha-$cut is given as:
\begin{equation}
S^*_{\alpha,i} \in \mathcal{S}^*_{\alpha,i}
= [\operatorname{sign}(E_i)\underline{S}^*_{\alpha,i},\operatorname{sign}(E_i)\overline{S}^*_{\alpha,i}], \quad i=1,2,3, \quad (\text{no}\,\operatorname{sum}).\label{eq:alphaSIcut}
\end{equation}
The second Piola-Kirchhoff stress for a given $\alpha-$cut can be calculated using Eqs. 
\ref{eq:interpolation1} and \ref{eq:interpolation2} as
\begin{equation}
\underline{\mathbf{S}}^*_\alpha=2\frac{\partial \underline{\Psi}^*_\alpha}{\partial \mathbf{C}},
\qquad
\overline{\mathbf{S}}^*_\alpha=2\frac{\partial \overline{\Psi}^*_\alpha}{\partial \mathbf{C}}.
\label{s_lb_ub2}
\end{equation}

\section{Numerical results and illustration}\label{sec:results}

\subsection{Data generation}\label{sec:data_generation}

While the free energy density
is a fundamental quantity in hyperelasticity, it is not directly
observable in experiments. Our training data consists of 
input-output pairs where the inputs are the principal invariants of the right Cauchy--Green tensor, 
and the outputs are the corresponding stress components of the second Piola-Kirchhoff stress tensor.
These labels are generated from the Gent-Gent hyperelastic model, with free energy density given by Eq.~\eqref{eq:gent_gent}:
\begin{equation}
\Psi(I_1,I_2,J)
=
-\frac{\mu}{2}\,J_m\log\!\left(1-\frac{I_1-3}{J_m}\right)
+C_2\log\!\left(\frac{I_2}{3}\right)
+k\left(\frac{1}{2}(J^2-1)-\log J\right),
\label{eq:gent_gent}
\end{equation}
where the material parameters are taken as $\mu=2.4195$, $J_m=77.931$, $C_2=-0.75\mu$, and $k=\mu/2$.
It is noted that this model is not polyconvex, a purposeful choice to create some 
discrepancy from polyconvex representations in the learning scheme~\cite{tan2026towards, boyce_arruda1993three,gent1996new}.

Following Fuhg et al.~\cite{fuhg2022physics,fuhg2024tensorbasis,tan2026towards}, we generate data on complex heterogeneous stress states.
Starting from the undeformed configuration $\mathbf{F}_0 = \mathbf{I}$,  we sample deformation gradients around this reference state using Latin Hypercube Sampling (LHS) over the full 9-dimensional space of components, with
\begin{equation}
F_{ij} \in
\begin{cases}
1 + \mathcal{U}(-\delta,\delta), & i=j,\\[2pt]
\mathcal{U}(-\delta,\delta), & i\neq j,
\end{cases}
\label{eq:F_sampling}
\end{equation}
and a bound of $\delta=0.2$, where $\mathcal{U}(-\delta,\delta)$ denotes the uniform distribution on $(-\delta,\delta)$.

For each sampled $\mathbf{F}$, we compute the right Cauchy--Green tensor  $\mathbf{C} = \mathbf{F}^T\mathbf{F}$ and its principal invariants, forming a point cloud of $250{,}000$ physically valid samples (i.e., $\det(\mathbf{F}) > 0$) that approximates the convex hull of physically feasible invariant states.
From this convex hull, we then run a hybrid farthest-point sampling (FPS) and simulated  
annealing (SA)~\cite{fuhg2022physics,tan2026towards} optimization scheme to select a representative 
subset of $500$ well-spaced invariant triples $(I_1^i, I_2^i, I_3^i)$ that maximizes spatial coverage 
while maintaining diversity. The origin of the invariant space $(3,3,1)$, corresponding to the undeformed state, 
is manually enforced to be included in the selection.
The resulting $500$-point subset and the underlying invariant convex hull are shown in ~\ref{app:invariant_sampling} (Fig.~\ref{fig:invariant_sampling}).
It is important to note that this initial sampling of general deformation gradients serves  only to populate the invariant space: a general $\mathbf{F}$ with all nine components  need not produce a diagonal $\mathbf{C}$, and hence need not yield a diagonal second  Piola--Kirchhoff stress $\mathbf{S}$.
Following Burnside~\cite{burnside1892theory}, for each selected invariant triple we therefore recover a  \emph{canonical diagonal} representative of the right Cauchy--Green tensor by inverting  the principal-invariant map and expressing the principal stretches as  $\mathbf{C}=\mathrm{diag}(\lambda_1^2,\lambda_2^2,\lambda_3^2)$.
In contrast to the simulated annealing sampling scheme of~\cite{fuhg2022physics}, this selection scheme retains access to the corresponding (non-diagonal) true deformation gradient $\mathbf{F}^{\mathrm{true}}$ for each selected triple, which we use solely to verify the reconstruction and is not required for training.
Evaluating the Gent-Gent model on this  diagonal $\mathbf{C}$ yields $\mathbf{S}=\mathrm{diag}(S_{11},S_{22},S_{33})$ with  off-diagonal components vanishing identically by construction.
We denote by \emph{clean stress} the corresponding noise-free Gent-Gent response evaluated at each 
deformation state. 
This quantity serves as a \emph{synthetic reference trajectory}; it is not intended to represent a 
directly measurable experimental but rather provides controlled ground truth against 
which the mean branch learned in Section~\ref{sec:loss} and the interval and fuzzy bounds 
constructed in Section~\ref{sec:fem_membership} can be assessed. 
To emulate measurement noise, each clean stress 
component $S_{ij}^{\mathrm{clean}}$ is corrupted with \emph{multiplicative heteroscedastic} 
Gaussian noise,
\begin{equation}
  \tilde{S}_{ij} = \bigl(1 + \mu + \sigma\,\varepsilon_{ij}\bigr)\,S_{ij}^{\mathrm{clean}},
  \qquad \varepsilon_{ij}\stackrel{\text{i.i.d.}}{\sim}\mathcal{N}(0,1),
  \label{eq:hetero_noise}
\end{equation}
where $\mu$ and $\sigma$ control the mean and standard deviation of the perturbation, 
respectively. The specific values of $(\mu,\sigma)$ and the random seed used to instantiate 
Eq.~\eqref{eq:hetero_noise} for each experiment are given in Sections~\ref{sec:results_500} 
and~\ref{sec:results_noise_types}.
The resulting $\tilde{S}_{ij}$ are the \emph{noisy observations} presented to the network during 
training.
Throughout Section~\ref{sec:results}, models are fit on the training set and evaluated on the 
independent test set.

Whereas the training data form a point cloud sampled over the full nine-dimensional
deformation-gradient space, the test data are generated along a single coherent loading path so as
to probe how the learned models generalize along a physically interpretable deformation mode. We use a 
constrained uniaxial deformation $\mathbf{F} = \mathrm{diag}(\lambda, 1, 1)$, in
which the axial stretch $\lambda$ is varied while the two transverse stretches are held fixed at
unity. We take $1000$ equally spaced values of $\lambda \in [0.6, 1.4]$, i.e. a $\pm 40\%$ axial
stretch about the undeformed state, and, for each, form $\mathbf{C} = \mathbf{F}^{T}\mathbf{F}$ and
evaluate the corresponding second Piola--Kirchhoff stress and free energy density from the same Gent-Gent
model, yielding diagonal stress components as in the training set. We note that the axial-stretch range of this test path exceeds the per-component training
interval $[1-\delta, 1+\delta] = [0.8, 1.2]$; consequently the test set probes both interpolation
(for $\lambda \in [0.8, 1.2]$) and mild extrapolation,
providing a more strict assessment of the learned bounds than the training distribution alone.





\subsection{Two-stage warm-start training procedure}\label{sec:three_step}

Motivated by transfer learning and two-stage approaches in which a pretrained point-prediction model provides the initialization for learning prediction bounds~\cite{tan2026towards, pan2009survey,ws1_patel}, we employ a sequential two-stage warm-start strategy, rather than training the mean, lower, 
and upper branches simultaneously.
Joint optimization of all three networks from random initialization is less efficient because 
the objectives compete: the stress MSE term drives each branch toward the central noisy trend, 
whereas the bound penalties in Eq.~\eqref{boundloss} require the lower and upper responses to 
shift and widen so as to enclose the noisy observations.
Training the mean first with $\lambda_{\text{bound}}=0$ and a comparatively strong $L_0$ penalty 
isolates the task of learning a sparse, physics-consistent approximation of the mean response.

\textbf{Step 1: Train mean response.} In the first stage, we train an ICNN to learn the mean free energy density $\Psi$ using the stress-based mean squared error loss together with the smoothed $L_0$ sparsification penalty, while setting $\lambda_{\text{bound}} = 0$. This stage therefore produces a physics-consistent baseline constitutive model that fits the central trend of the stress observations while promoting a compact and interpretable representation. 
The resulting mean model serves as the reference branch of the fuzzy representation and also provides the initialization for the subsequent bound-learning stage.

\textbf{Step 2: Train lower and upper bounds.} In the second stage, two additional ICNNs are initialized from the trained mean network and then optimized to represent the lower and upper free energy density branches of the iPANN. The bound loss in Eq.~\ref{boundloss} is activated by setting $\lambda_{\text{bound}} \neq 0$. In this stage, the sparsification penalty is substantially reduced relative to the first stage. This choice preserves the sparse constitutive structure inherited from the mean model, while allowing the lower and upper networks sufficient flexibility to adjust their active parameters and widen or shift the response as needed to enclose the noisy stress data. Thus, the second stage does not relearn the entire constitutive law from scratch, it fine-tunes the mean representation into two physically consistent bounding models.

This transfer learning procedure improves optimization stability and reduces the tendency of 
the bound models to develop unnecessarily complex representations. 
It also ensures that the 
learned interval is organized around a meaningful central response. The resulting 
three learned free energy density functions from this iPANN, are then used to construct
 the fuzzy constitutive representation through the membership-based interpolation described in Section~\ref{sec:fem_membership}.

\subsection{Results}\label{sec:results_main}

We illustrate the framework on the synthetic data of Section~\ref{sec:data_generation}. The baseline case is presented in Section~\ref{sec:results_500}, followed by performance on noise-perturbation illustrated in Section~\ref{sec:results_noise_types}. The learned bounds are subsequently interpreted through the fuzzy membership analysis of Section~\ref{sec:membership_analysis}, and their downstream deployment is demonstrated in the finite element method (FEM) setting of Section~\ref{sec:fem_validation_500}. For convenience, we use the notation \textbf{E1}--\textbf{E4} to refer to the individual numerical experiments presented in this section, as summarized in Table~\ref{tab:datagen}.

\begin{table}[htbp]
\centering
\small
\renewcommand{\arraystretch}{1.5}
\setlength{\tabcolsep}{8pt}
\caption{Data generation modalities for the performed numerical experiments.}
\label{tab:datagen}
\begin{tabular}{cl}
\hline
ID & Data generation cases \\
\hline
\textbf{E1} & Baseline dataset with heteroscedastic noise\\
\textbf{E2} & Repeated realizations by random seed perturbation \\
\textbf{E3} & Repeated realizations by shifting the noise mean \\
\textbf{E4} & Repeated realizations by shifting the standard deviation  \\
\hline
\end{tabular}
\end{table}

\begin{figure}[htbp]
\centering
\begin{subfigure}[t]{0.48\textwidth}
  \centering
  \stresspanel{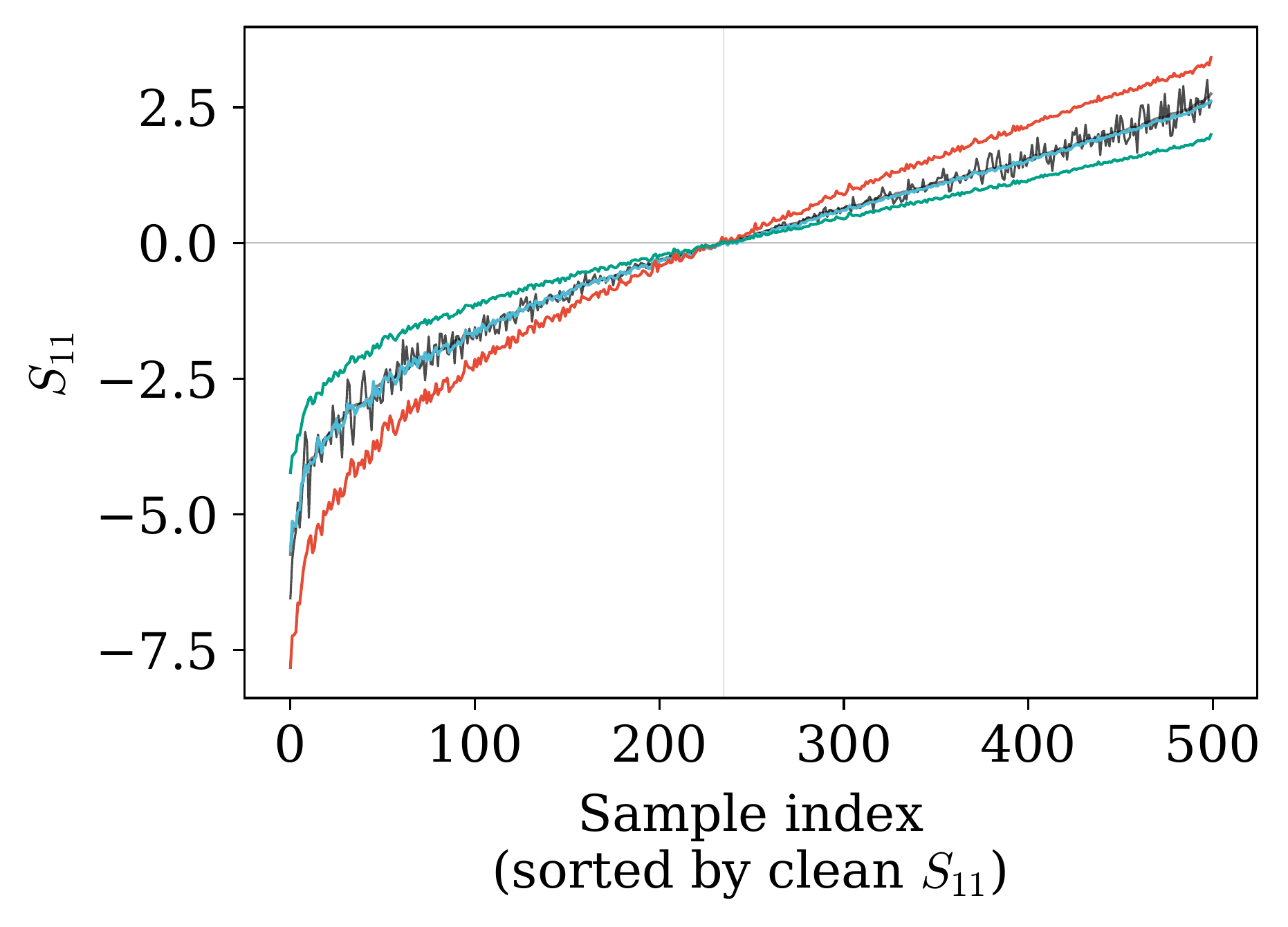}
  \caption{}
\end{subfigure}
\hfill
\begin{subfigure}[t]{0.48\textwidth}
  \centering
  \stresspanel{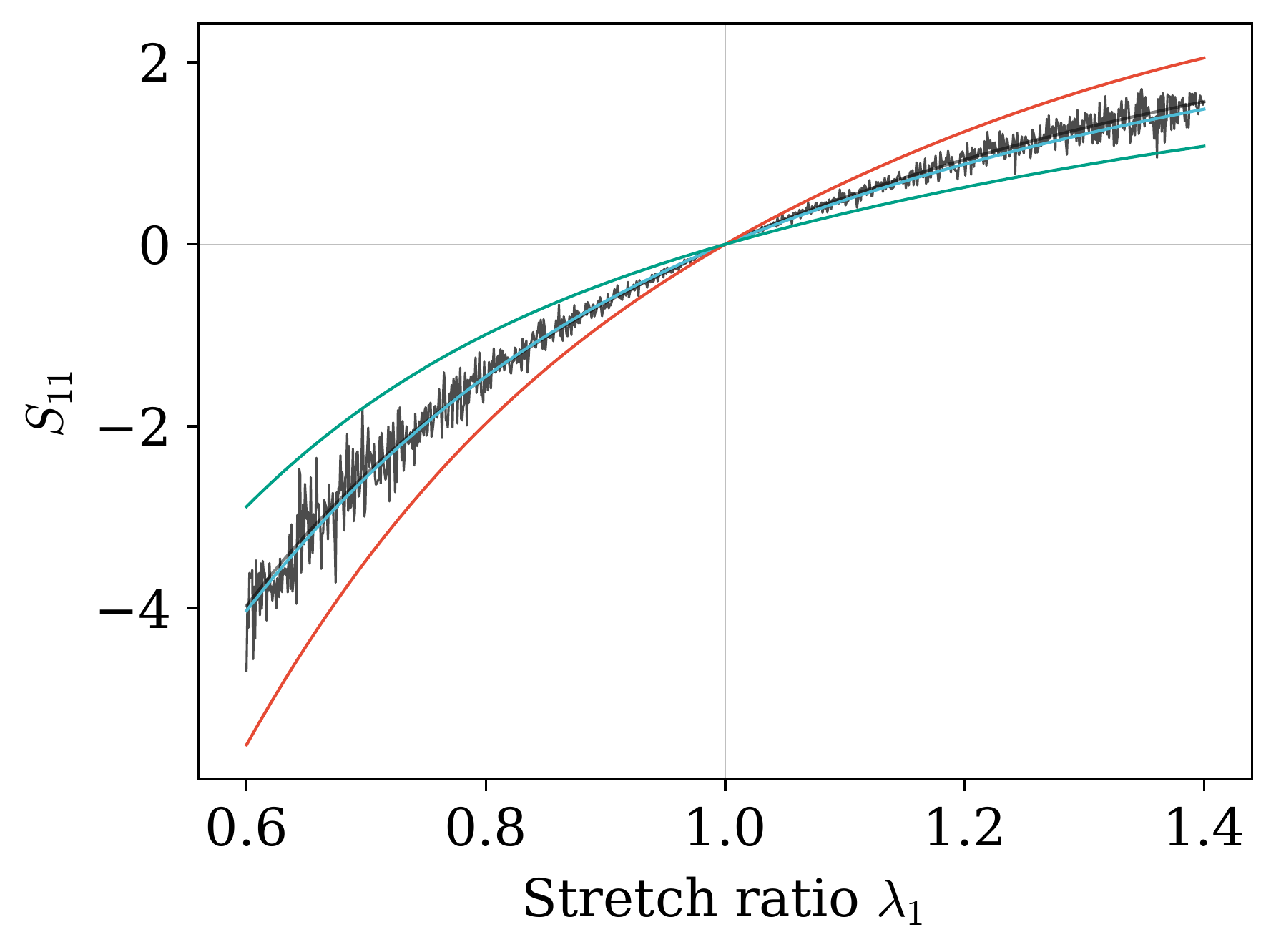}
  \caption{}
\end{subfigure}

\begin{subfigure}[t]{0.48\textwidth}
  \centering
  \stresspanel{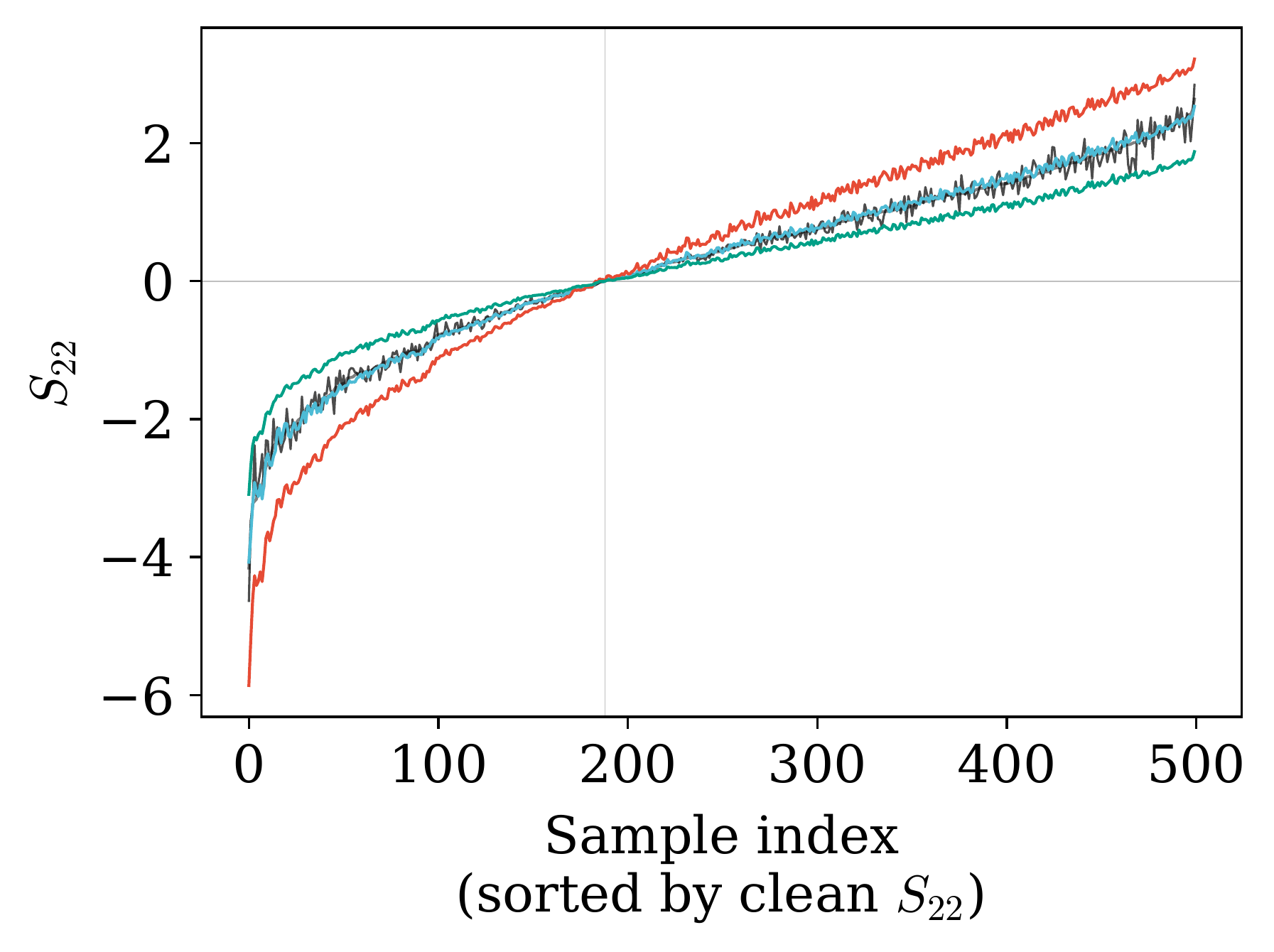}
  \caption{}
\end{subfigure}
\hfill
\begin{subfigure}[t]{0.48\textwidth}
  \centering
  \stresspanel{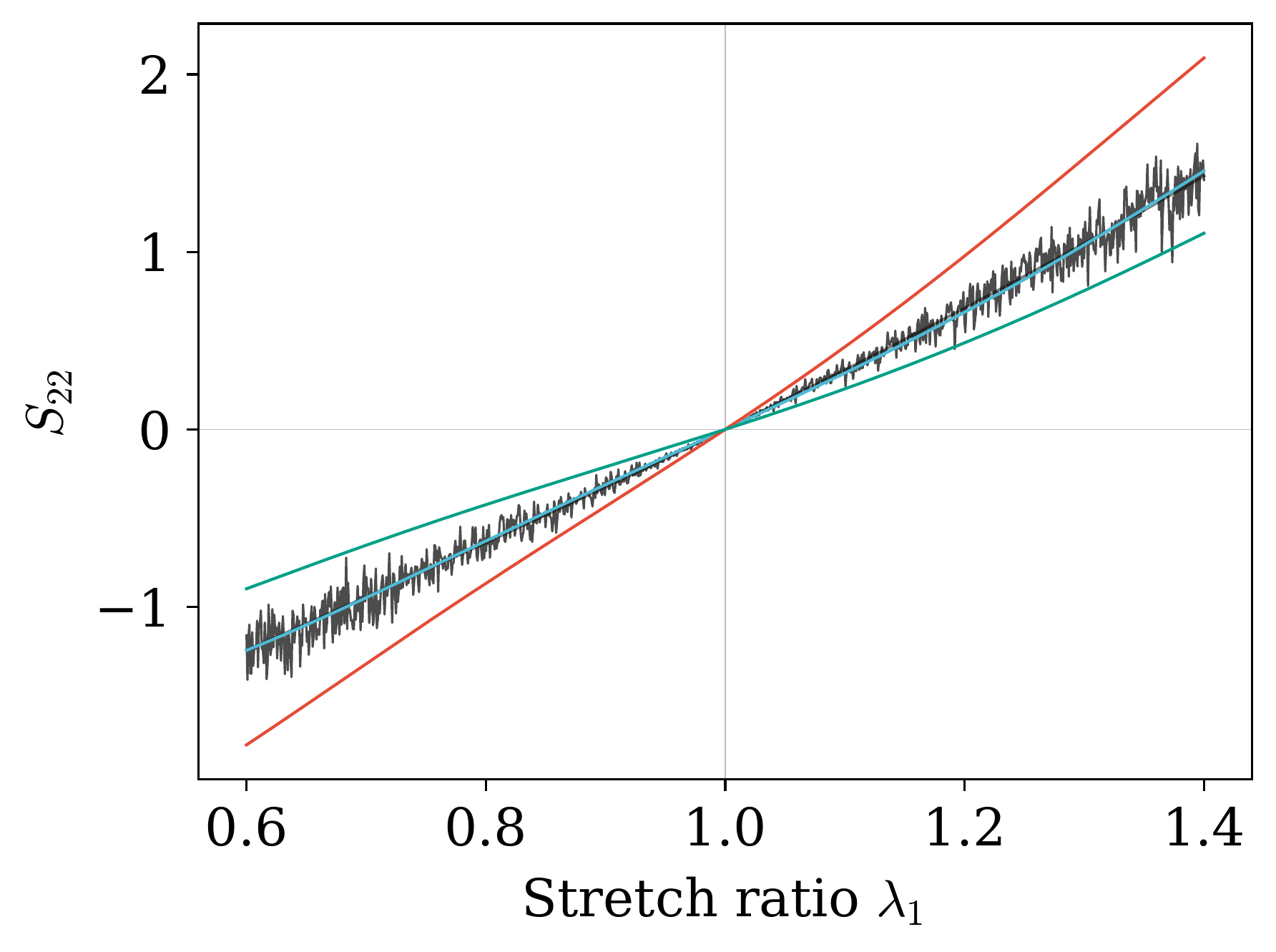}
  \caption{}
\end{subfigure}

\vspace{0.15cm}

\begin{subfigure}[t]{0.48\textwidth}
  \centering
  \stresspanel{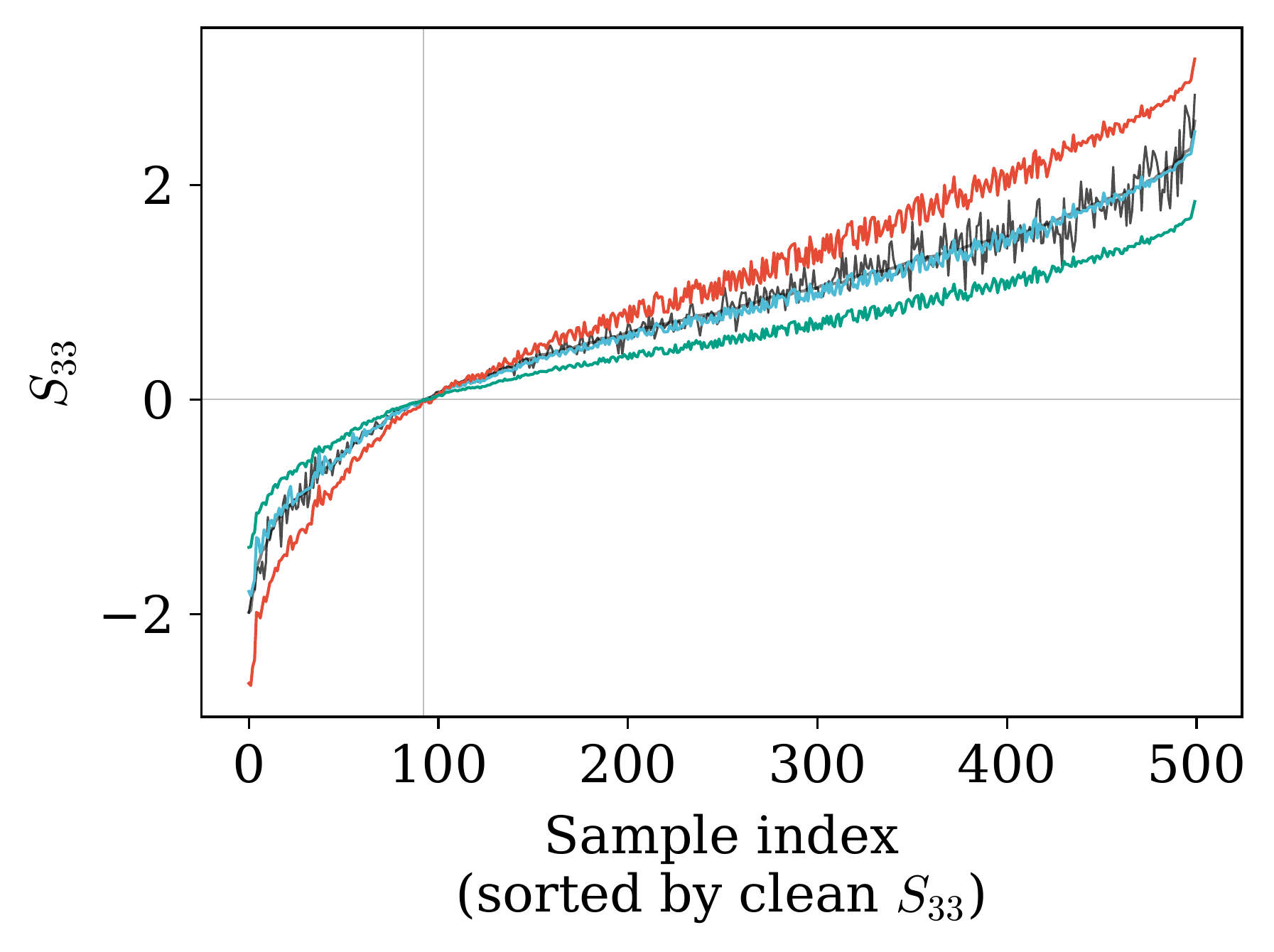}
  \caption{}
\end{subfigure}
\hfill
\begin{subfigure}[t]{0.48\textwidth}
  \centering
  \stresspanel{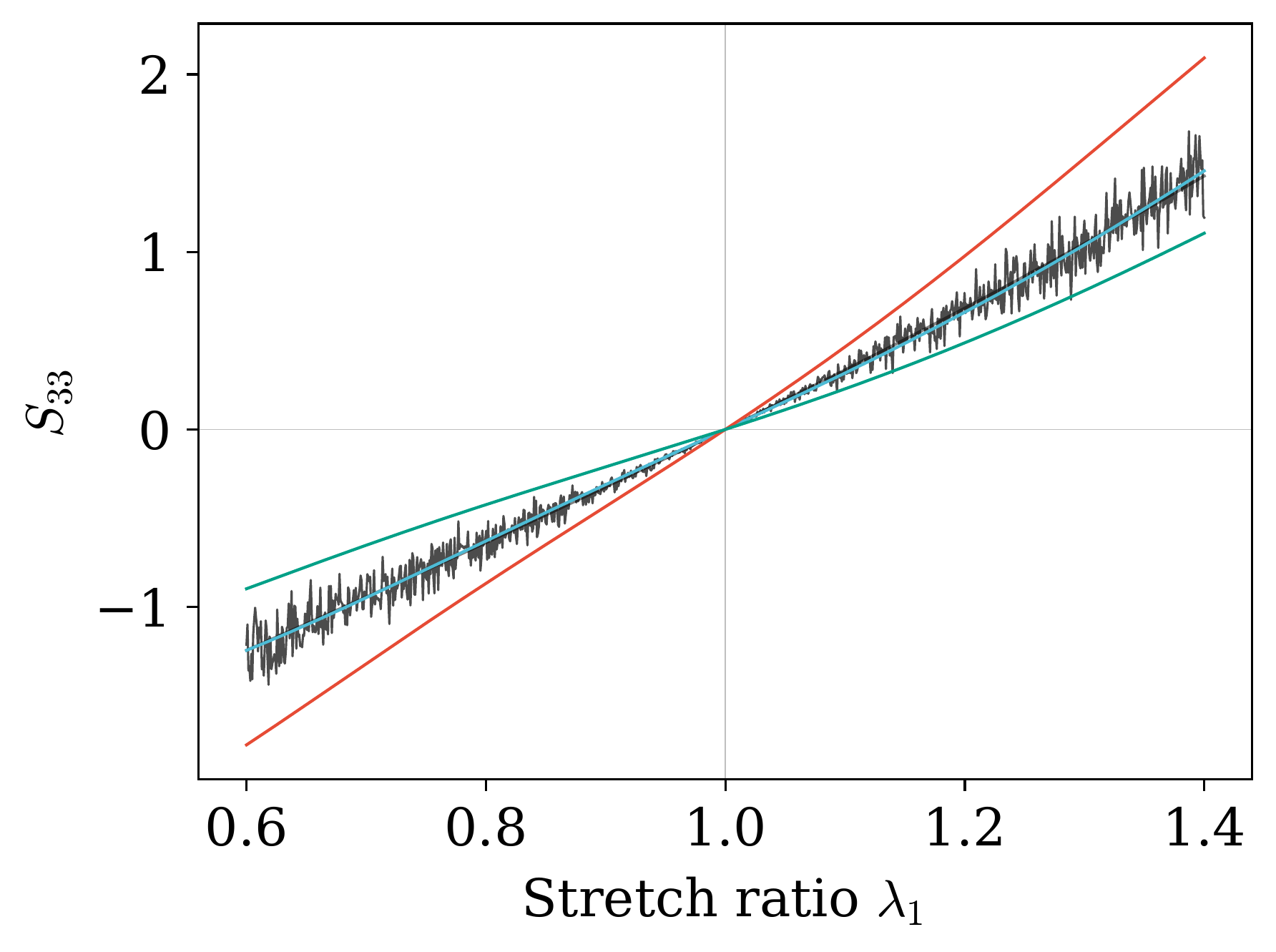}
  \caption{}
\end{subfigure}

\vspace{0.0001in}
\begin{subfigure}[t]{\textwidth}
  \centering
  \stresslegend{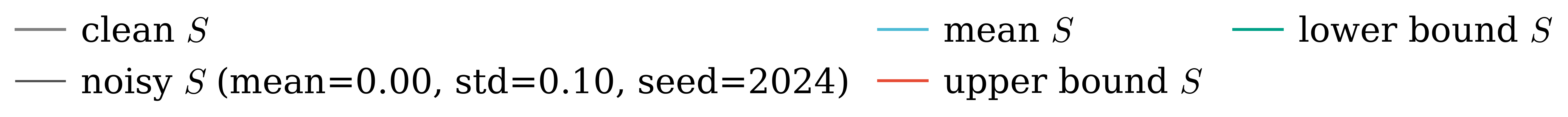}
\end{subfigure}
\stressfigcaption{\textbf{E1}: Stress components ($S_{11}$, $S_{22}$, $S_{33}$) with mean and bounds on training (left) 
and test (right) data.}
\label{fig:exp500_stress}
\end{figure}


\begin{figure}[htbp]
\centering
\begin{subfigure}[t]{0.48\textwidth}
  \centering
  \energypanel{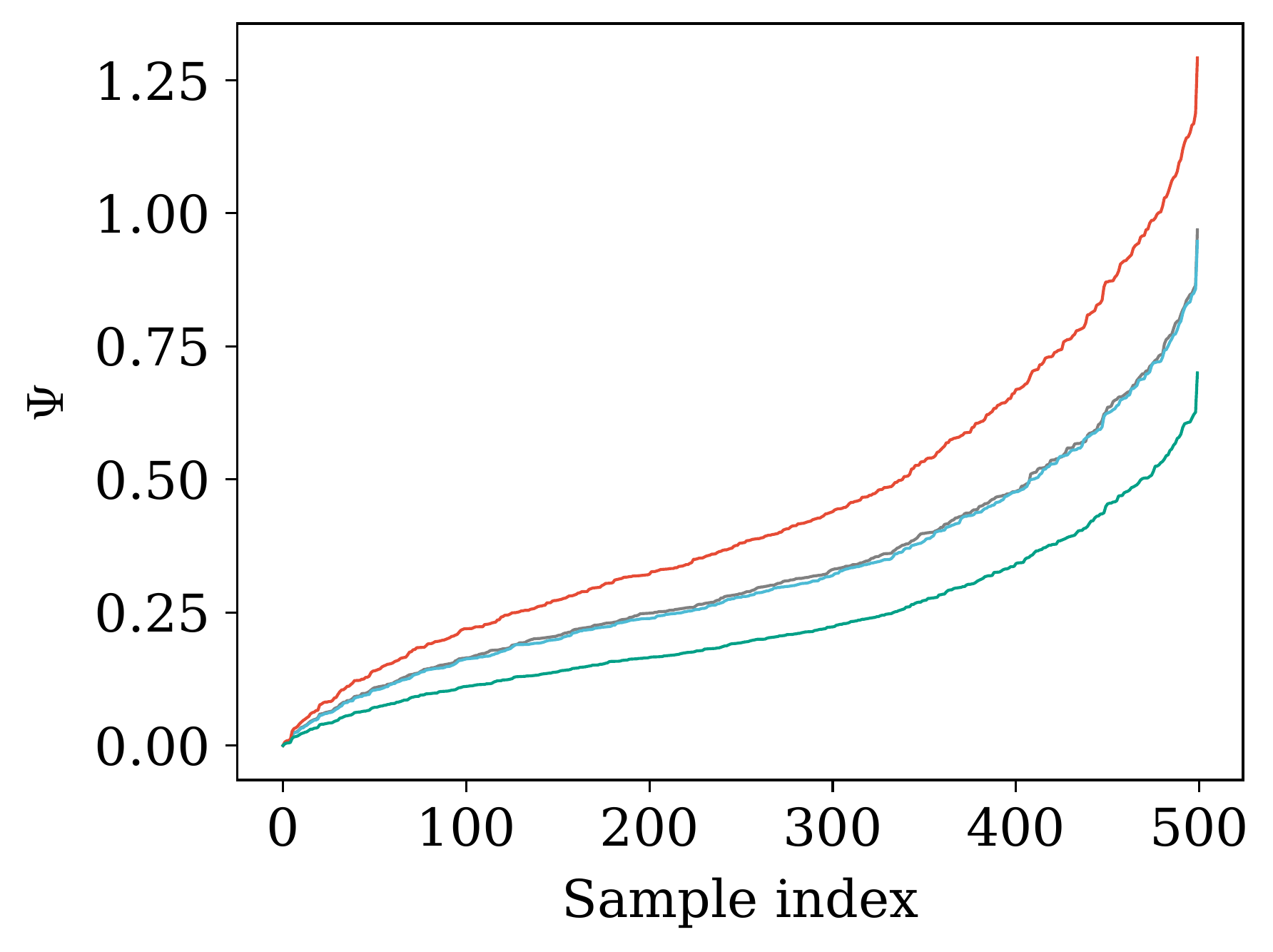}
  \caption{$\Psi$, train.}
\end{subfigure}
\hfill
\begin{subfigure}[t]{0.48\textwidth}
  \centering
  \energypanel{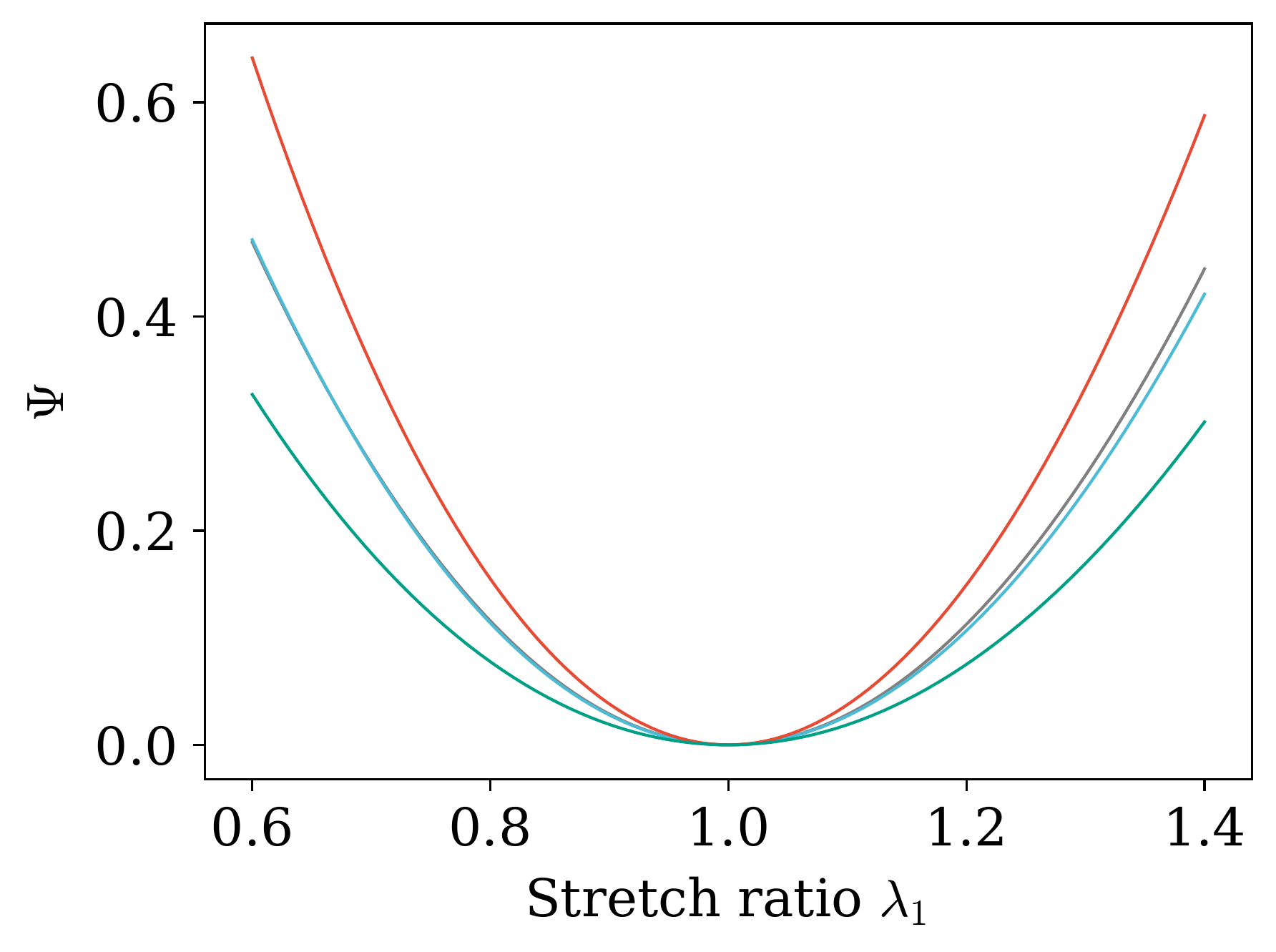}
  \caption{$\Psi$, test.}
\end{subfigure}

\vspace{0.001in}
\begin{subfigure}[t]{\textwidth}
  \centering
  \energylegend{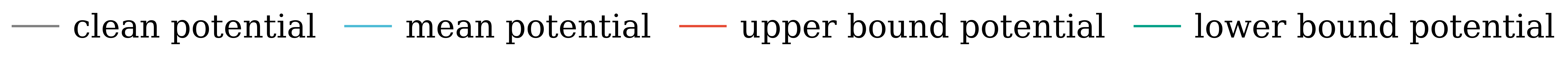}
\end{subfigure}
\caption{\textbf{E1}: Free energy density $\Psi$ with mean and bounds on training (left) 
and test data(right)}
\label{fig:exp500_energy}
\end{figure}

\subsubsection{Baseline case: heteroscedastic noise}\label{sec:results_500}

The baseline experiment~\textbf{E1} evaluates the proposed framework under the multiplicative heteroscedastic noise of Eq.~\eqref{eq:hetero_noise} with $\mu=0$ and $\sigma=0.1$, and serves as the reference case for the subsequent noise-attribute perturbations.
Training and testing use the synthetic data of Section~\ref{sec:data_generation}.
Following the two-stage procedure of Section~\ref{sec:three_step}, the mean model is trained first and then used as a warm start for the upper and lower bound networks of the iPANN.

The learned stress responses are shown in Figure~\ref{fig:exp500_stress} for the three principal components $S_{11}$, $S_{22}$ and $S_{33}$, on both the training and test sets.
The mean prediction follows the clean Gent-Gent reference closely across all three components, while the lower and upper bounds enclose the scattered noisy observations over the entire deformation range.
The enclosure is not a fixed-width band: it widens where the noisy data are more dispersed and tightens where they concentrate, reflecting the multiplicative nature of the noise.
Close agreement between the training and test panels indicates that the bounds learned on the training data carry over to the test set.
The corresponding free energy density $\Psi$ is reported in Figure~\ref{fig:exp500_energy}.
Each of the three converged networks reduces to a compact closed-form expression.
The resulting mean and bound energies for~\textbf{E1}, together with their coefficients, are reported in~\ref{app:learned_expressions}. The iPANN deployment for this case in a finite element solver is demonstrated in Sec.~\ref{sec:fem_validation_500}.
With this baseline established, Section~\ref{sec:results_noise_types} perturbs one noise attribute at a time: seed, mean, or standard deviation, relative to~\textbf{E1}. 

\subsubsection{Results with different type of noise}\label{sec:results_noise_types}

The robustness of the framework is verified by varying one noise attribute at a time while keeping all remaining settings aligned with \textbf{E1}.
Three experiments are considered: multiple independent noise realizations obtained by changing the random seed of the noise generator (\textbf{E2}), a shift in the noise mean (\textbf{E3}), and a change in the noise standard deviation (\textbf{E4}).
These synthetic perturbations are motivated by scenarios encountered in mechanical testing of materials, 
namely (i)~random variations arising from measurement noise, specimen variability, or slightly different initial conditions, (ii)~effects such as load-cell zero offset, calibration drift, strain-gauge bias, or a systematic preprocessing error, 
and (iii)~changes in sensor precision, environmental disturbances, specimen-to-specimen variability, or signal quality.
The corresponding results are collected in Figure~\ref{fig:noise_stress_seeds} for \textbf{E2}, Figure~\ref{fig:noise_stress_means} for \textbf{E3}, and Figure~\ref{fig:noise_stress_std} for \textbf{E4}.

Across all three perturbations the qualitative behavior observed for the baseline is preserved: the
mean tracks the underlying clean response, and the learned bounds enclose the noisy observations on
both the training and test sets. The width of the enclosure adapts to the imposed noise. When
several independent realizations are presented simultaneously (\textbf{E2}), the bounds broaden to contain
the additional spread introduced by the differing seeds; when the noise mean is shifted (\textbf{E3}), the
enclosure follows the displaced data while remaining centered on the mean response; and when the
noise standard deviation is increased (\textbf{E4}), the bounds widen. In every case the
agreement between the training and test panels remains close, confirming that the learned bounds
generalize to the test set and are not unduly sensitive to the specific noise setting used
during training.

\begin{figure}[htbp]
\centering
\begin{subfigure}[t]{0.48\textwidth}
  \centering
  \stresspanel{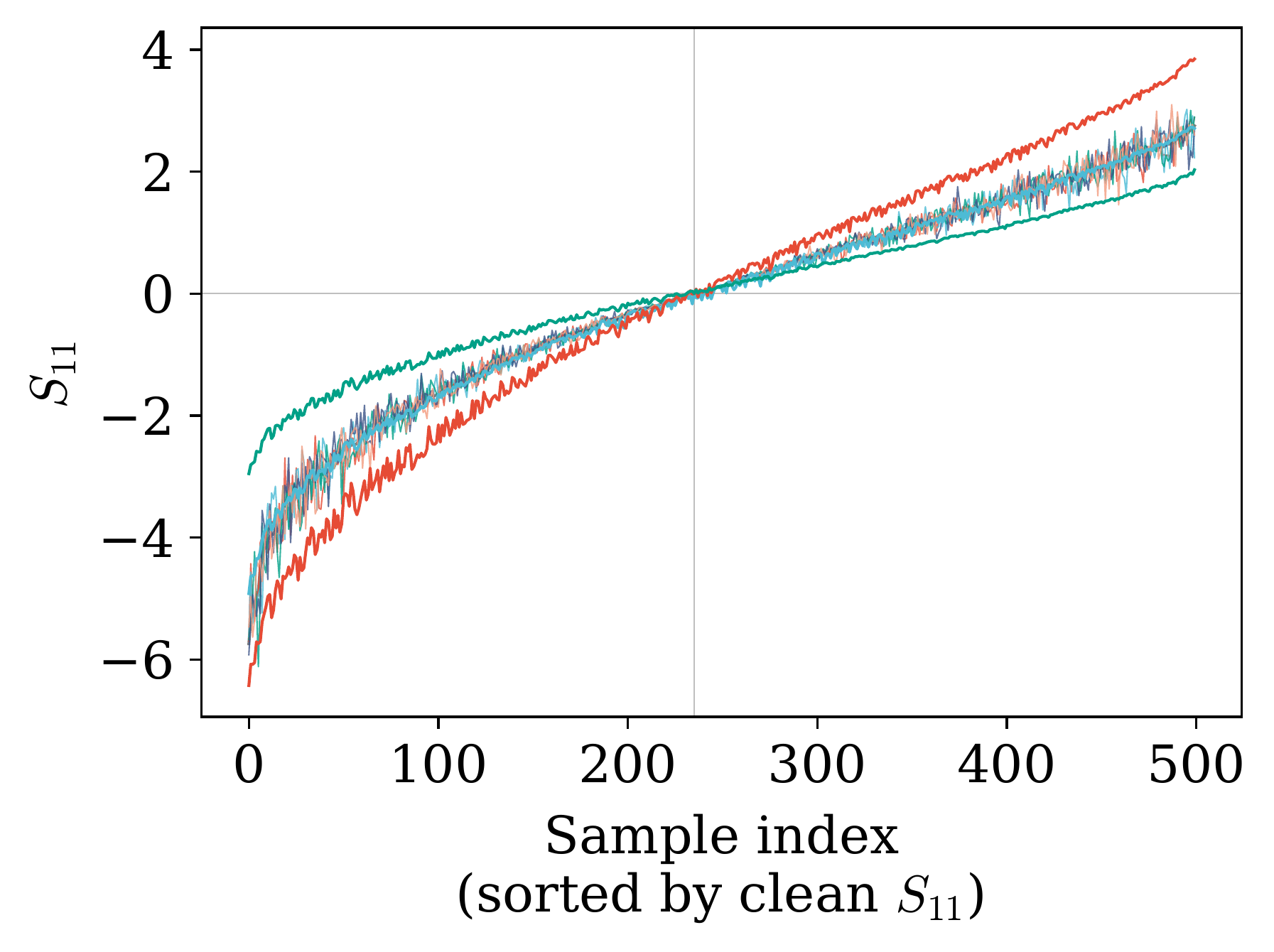}
  \caption{}
\end{subfigure}
\hfill
\begin{subfigure}[t]{0.48\textwidth}
  \centering
  \stresspanel{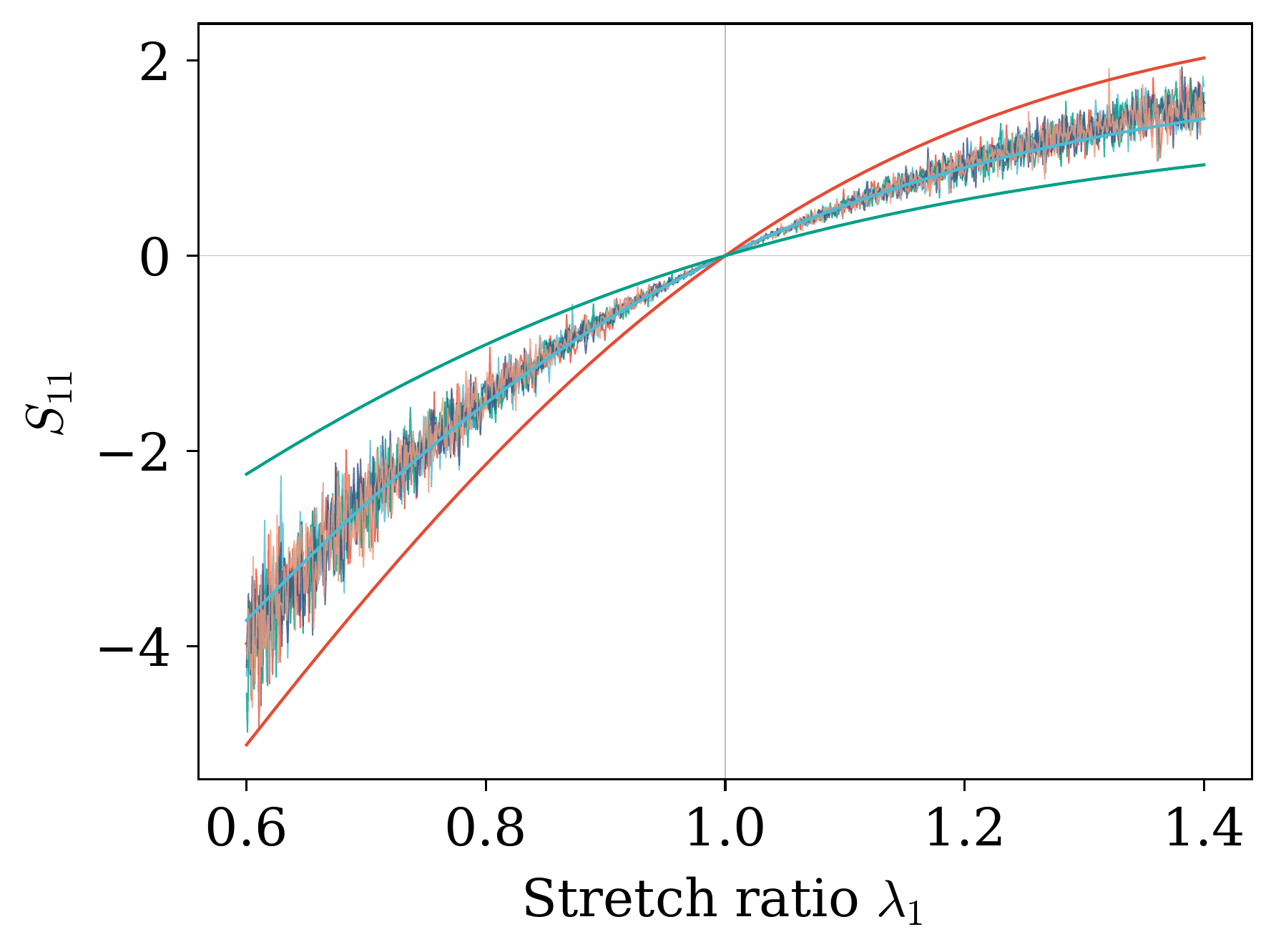}
  \caption{}
\end{subfigure}

\vspace{0.15cm}

\begin{subfigure}[t]{0.48\textwidth}
  \centering
  \stresspanel{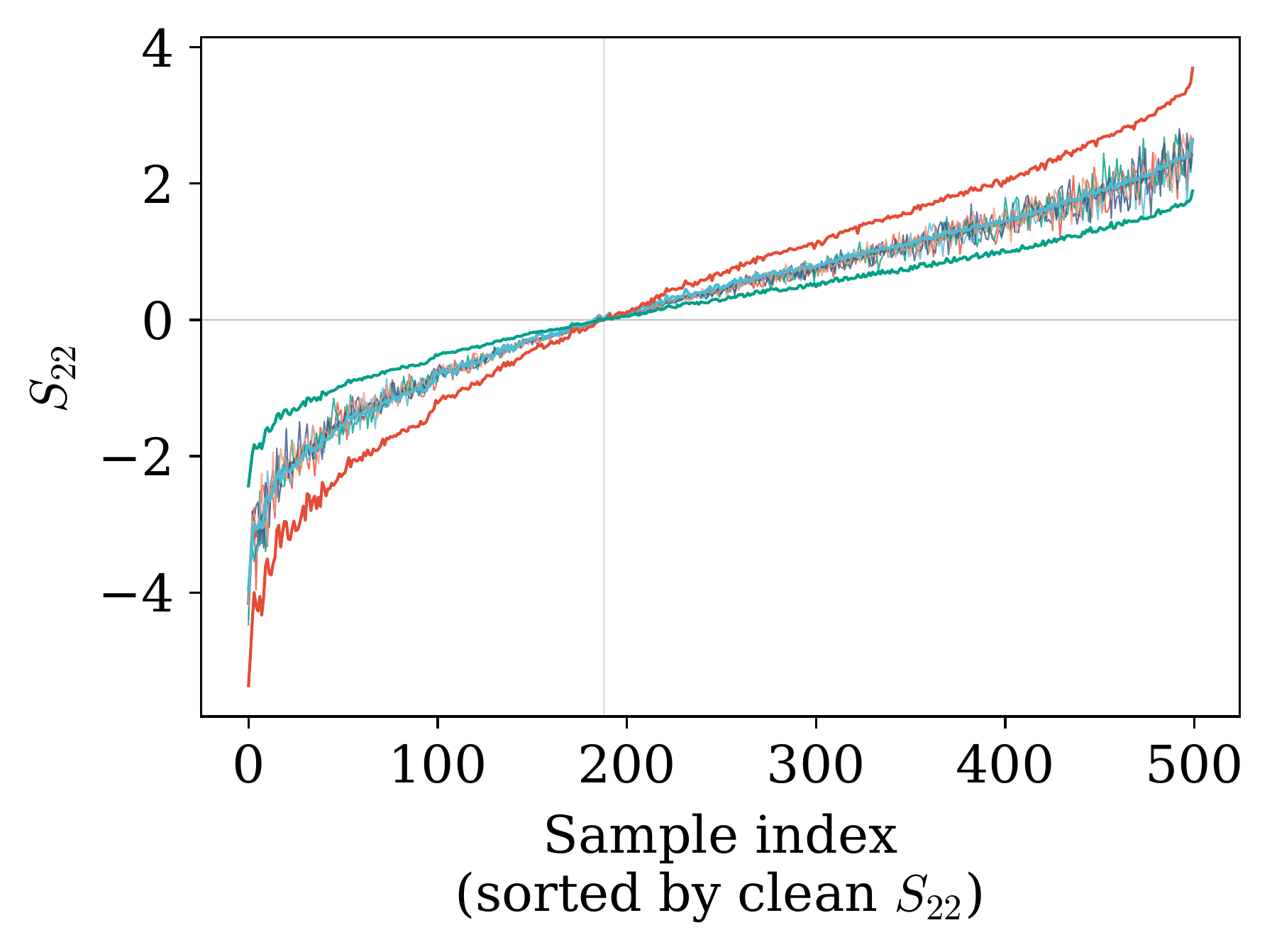}
  \caption{}
\end{subfigure}
\hfill
\begin{subfigure}[t]{0.48\textwidth}
  \centering
  \stresspanel{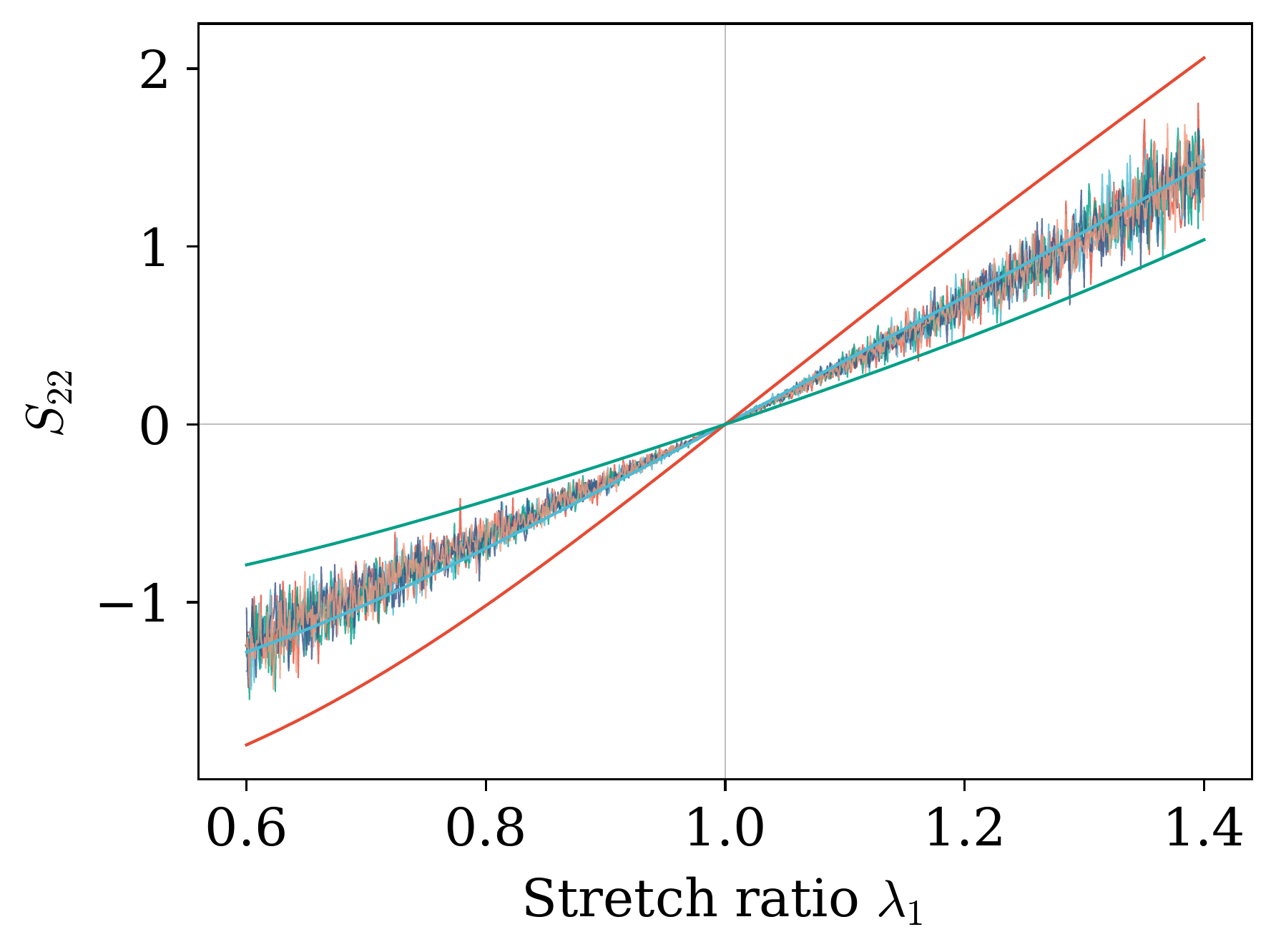}
  \caption{}
\end{subfigure}

\vspace{0.15cm}

\begin{subfigure}[t]{0.48\textwidth}
  \centering
  \stresspanel{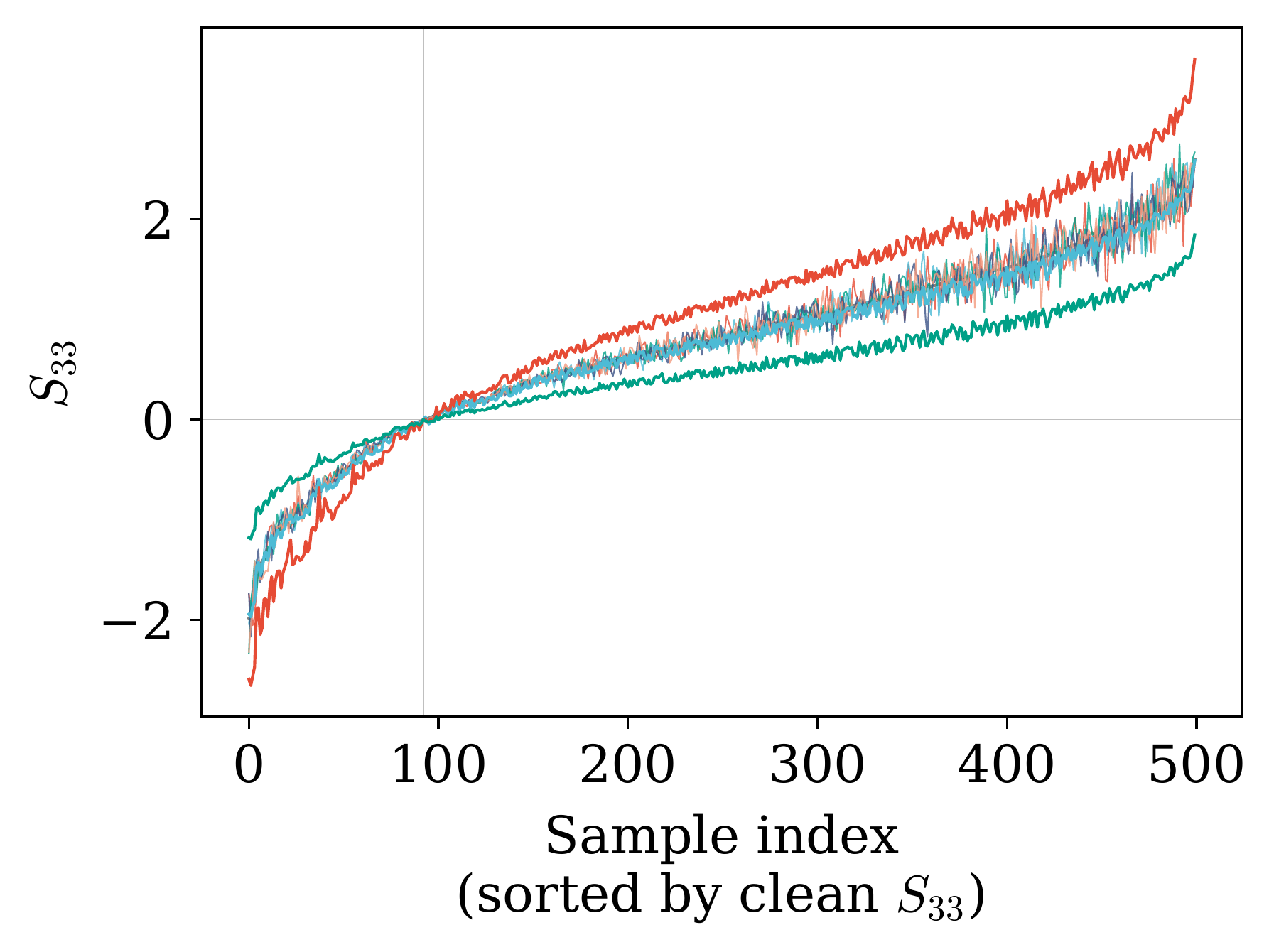}
  \caption{}
\end{subfigure}
\hfill
\begin{subfigure}[t]{0.48\textwidth}
  \centering
  \stresspanel{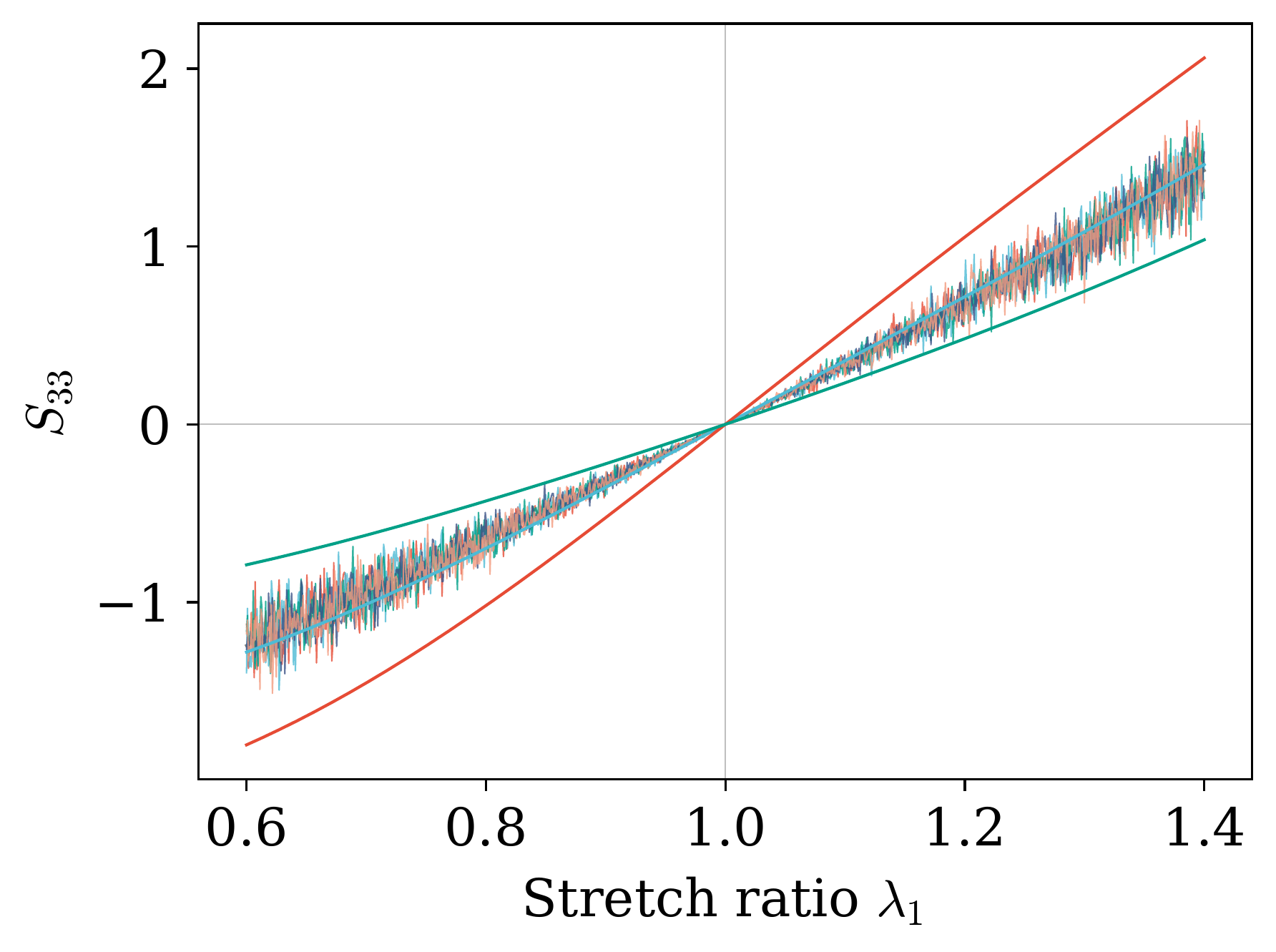}
  \caption{}
\end{subfigure}

\vspace{0.001in}
\begin{subfigure}[t]{\textwidth}
  \centering
  \stresslegend{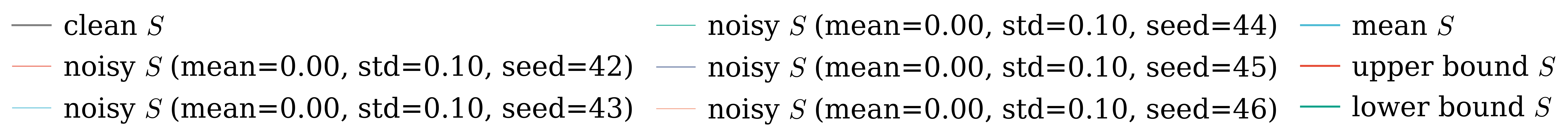}
\end{subfigure}
\stressfigcaption{\textbf{E2}: Stress components ($S_{11}$, $S_{22}$, $S_{33}$) on training (left) and test (right) data.}
\label{fig:noise_stress_seeds}
\end{figure}

\begin{figure}[htbp]
\centering
\begin{subfigure}[t]{0.48\textwidth}
  \centering
  \stresspanel{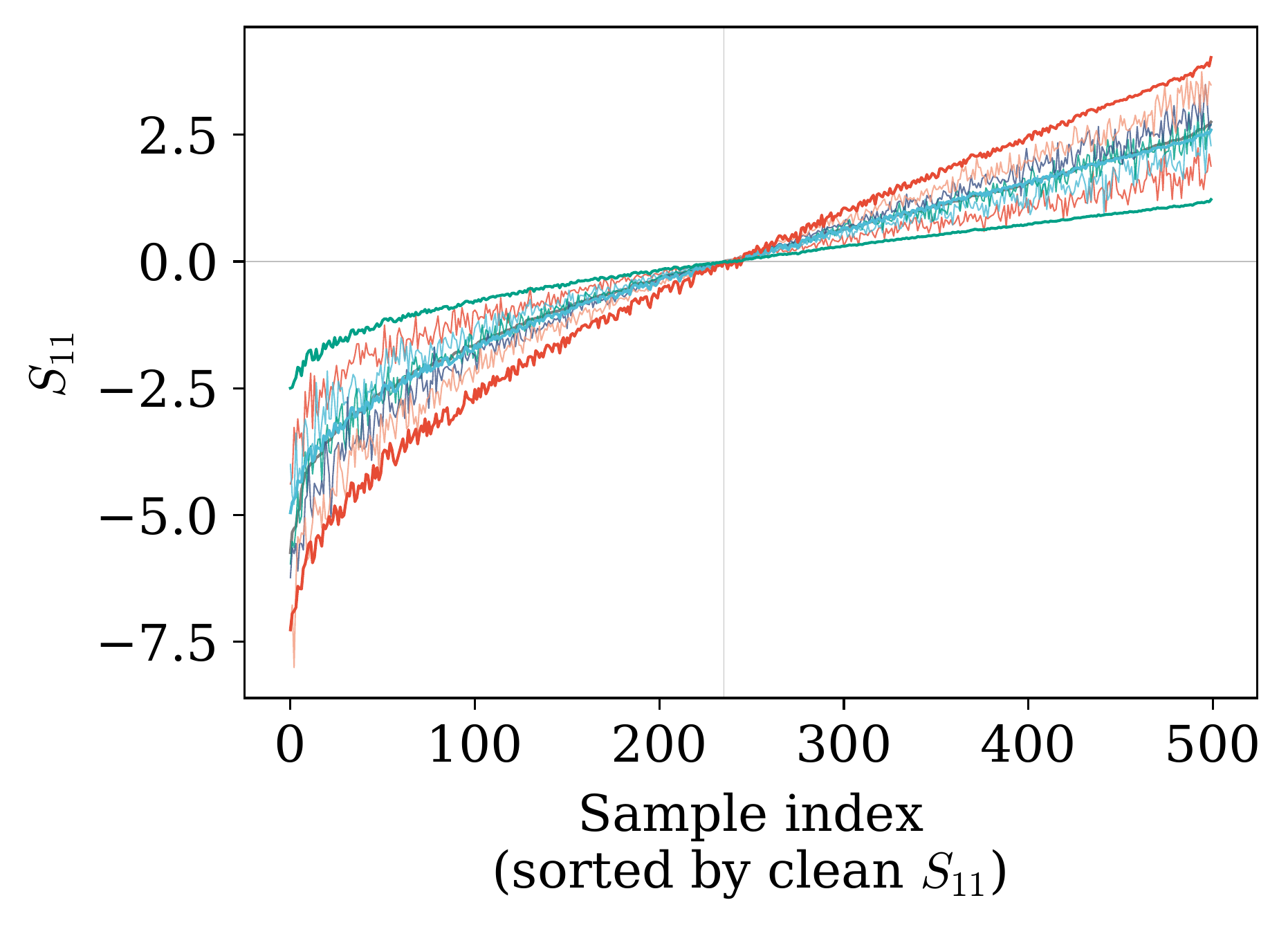}
  \caption{}
\end{subfigure}
\hfill
\begin{subfigure}[t]{0.48\textwidth}
  \centering
  \stresspanel{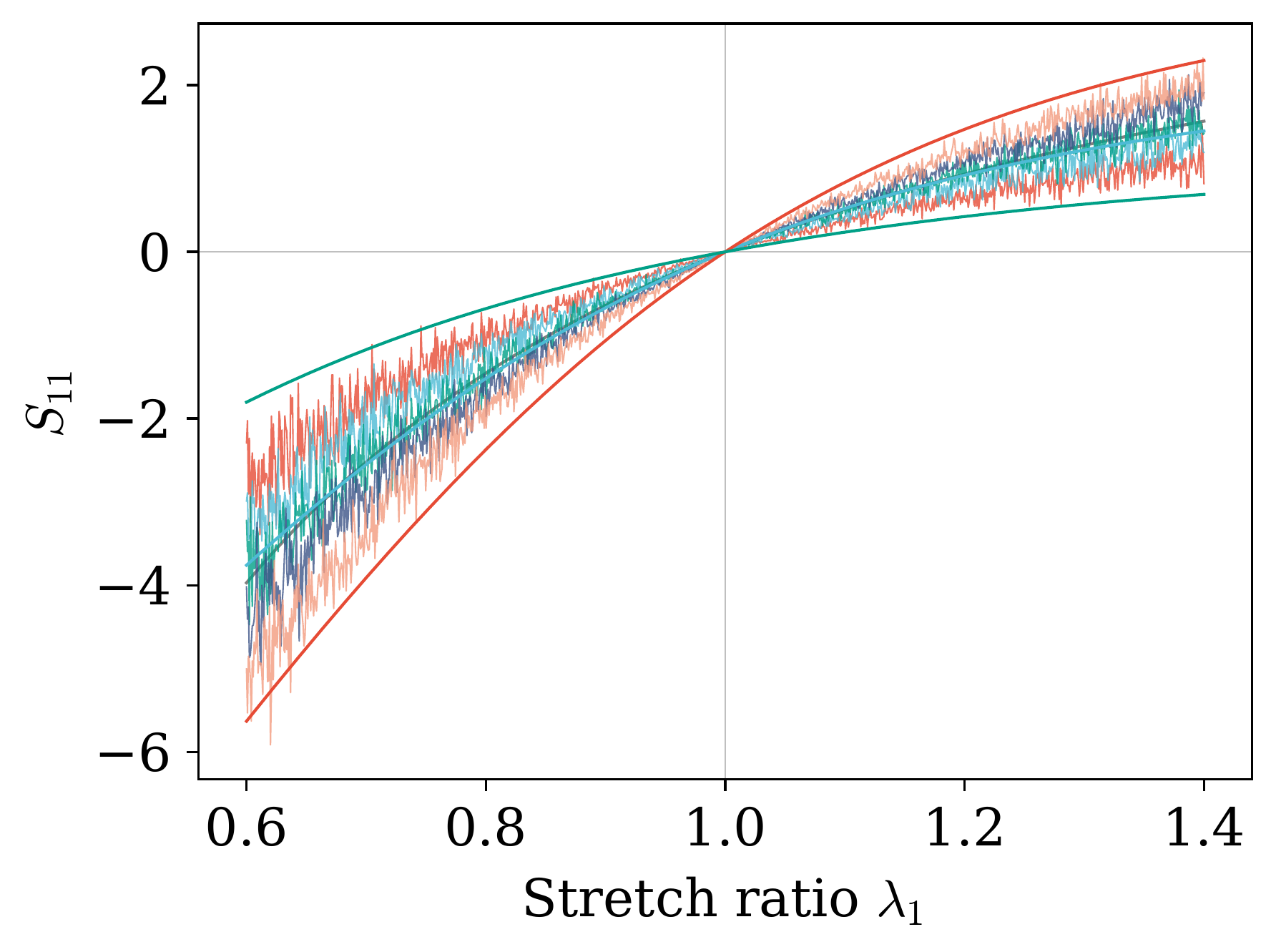}
  \caption{}
\end{subfigure}

\vspace{0.15cm}

\begin{subfigure}[t]{0.48\textwidth}
  \centering
  \stresspanel{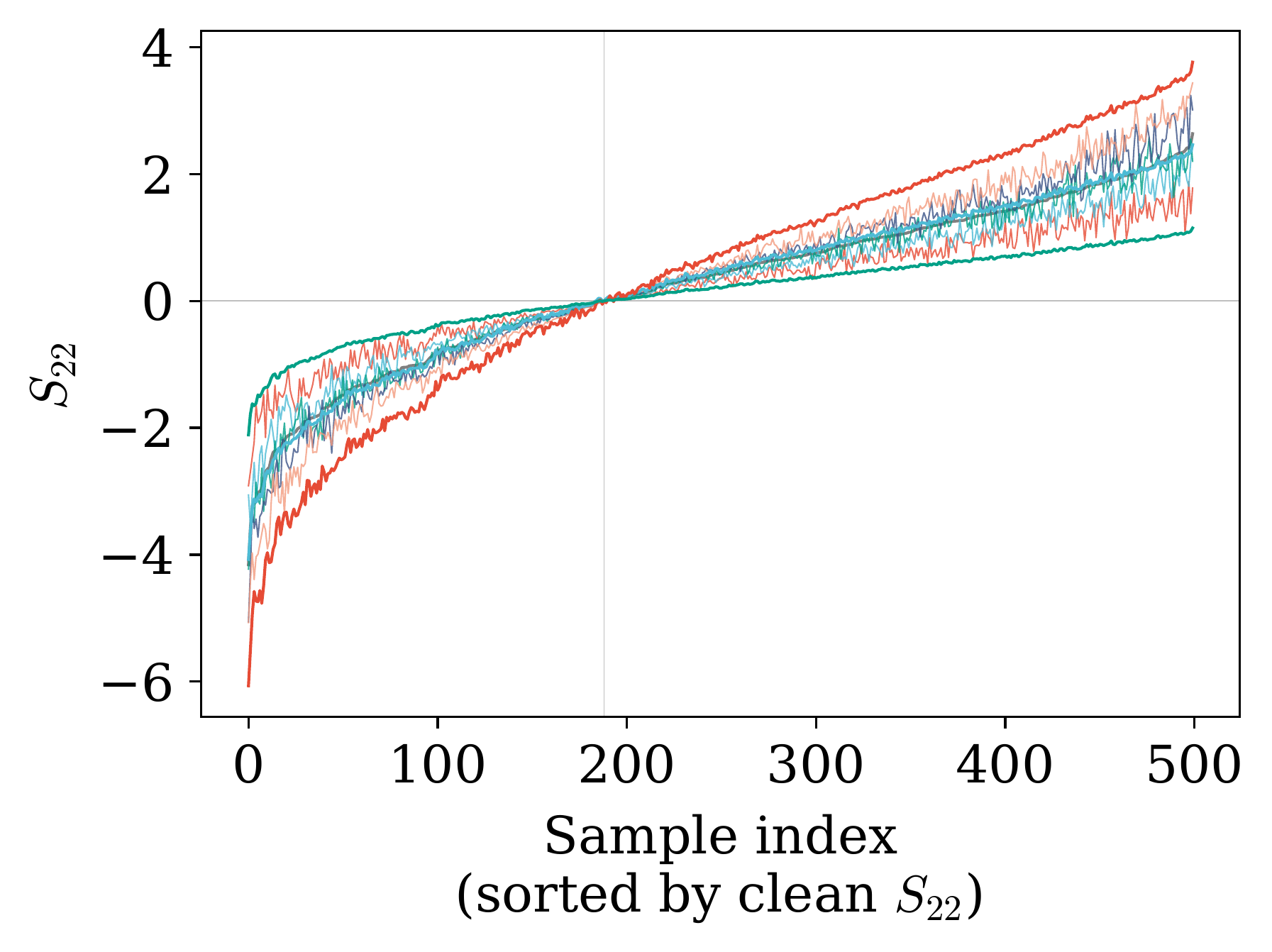}
  \caption{}
\end{subfigure}
\hfill
\begin{subfigure}[t]{0.48\textwidth}
  \centering
  \stresspanel{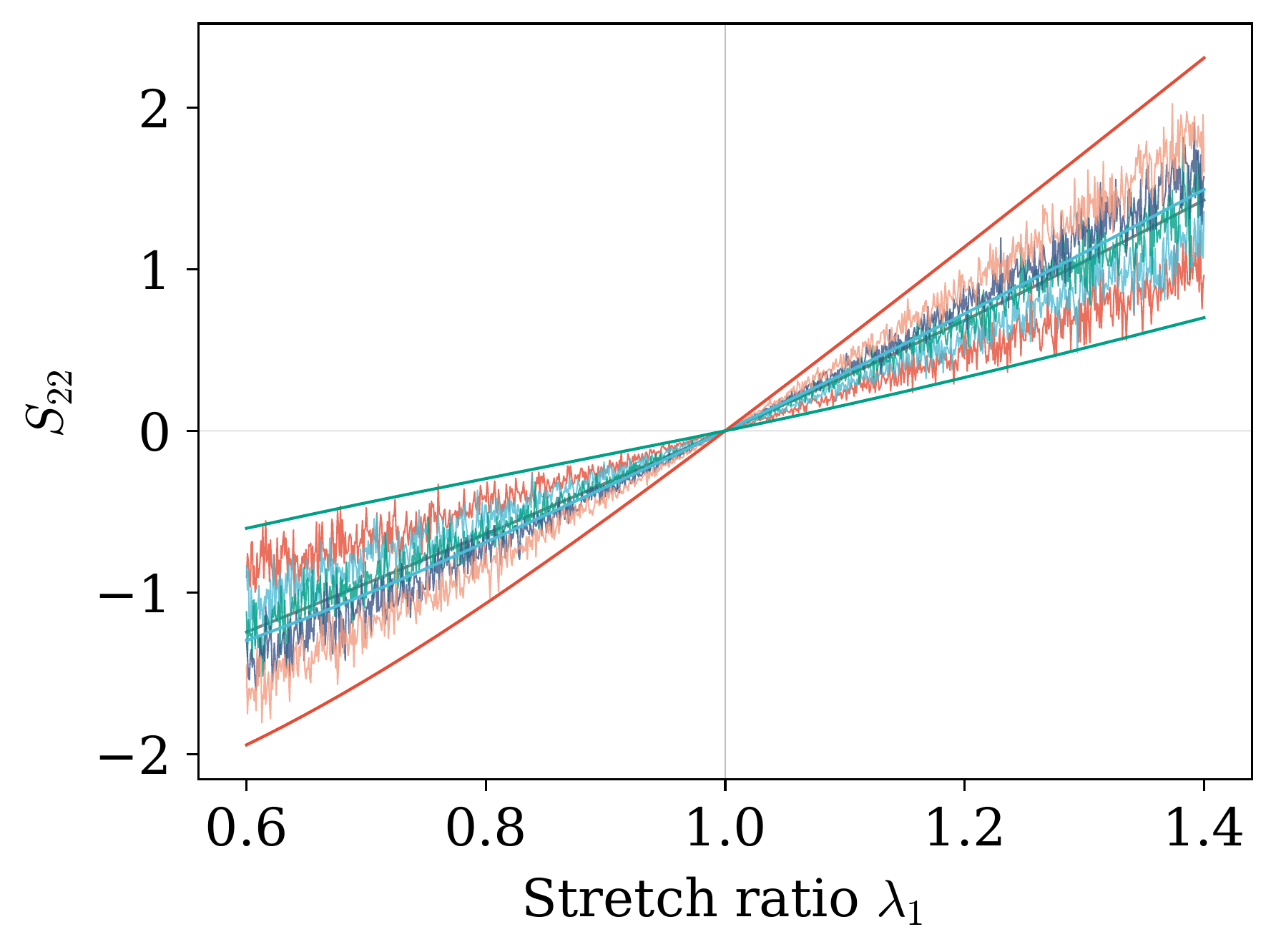}
  \caption{}
\end{subfigure}

\vspace{0.15cm}

\begin{subfigure}[t]{0.48\textwidth}
  \centering
  \stresspanel{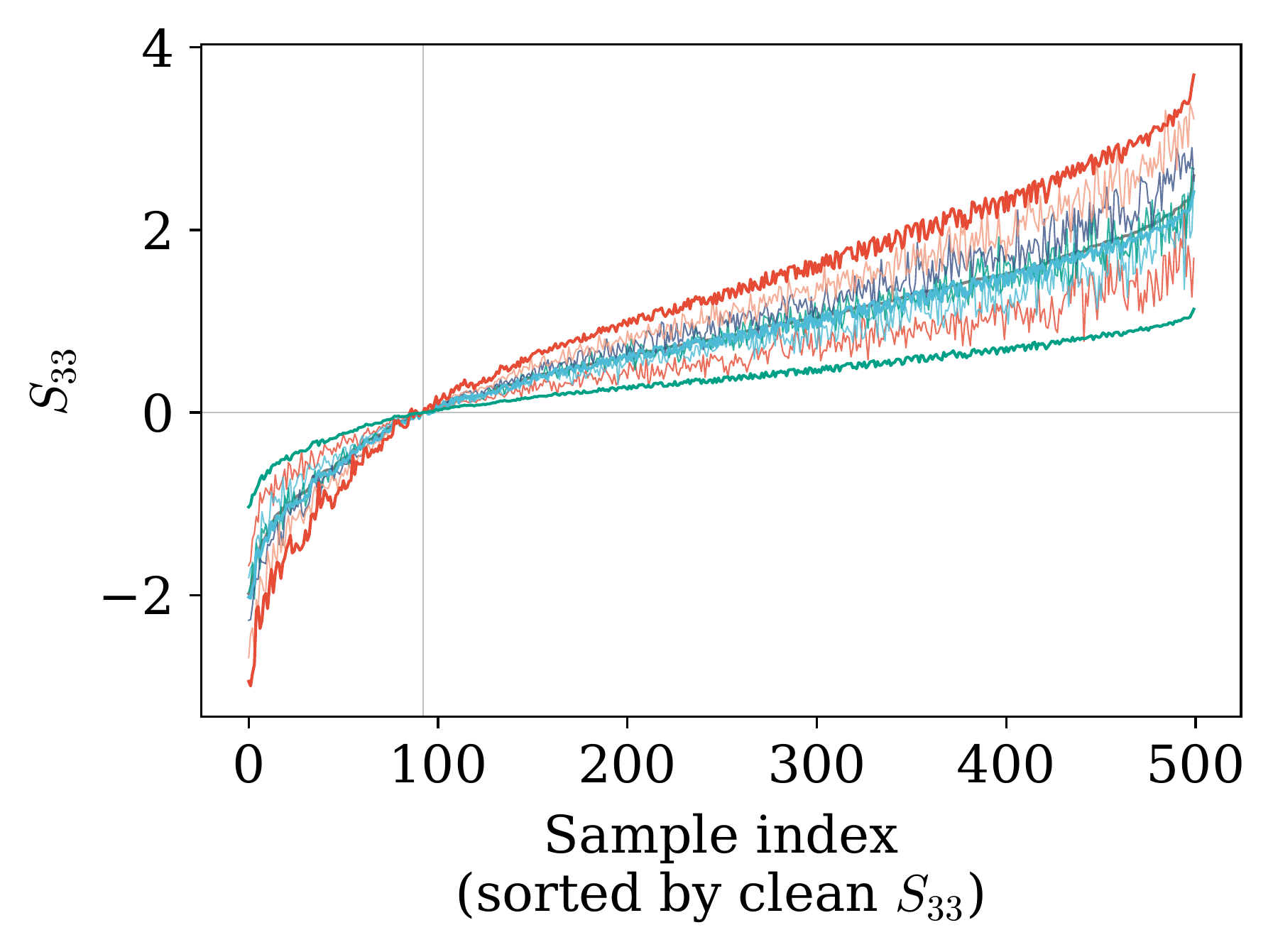}
  \caption{}
\end{subfigure}
\hfill
\begin{subfigure}[t]{0.48\textwidth}
  \centering
  \stresspanel{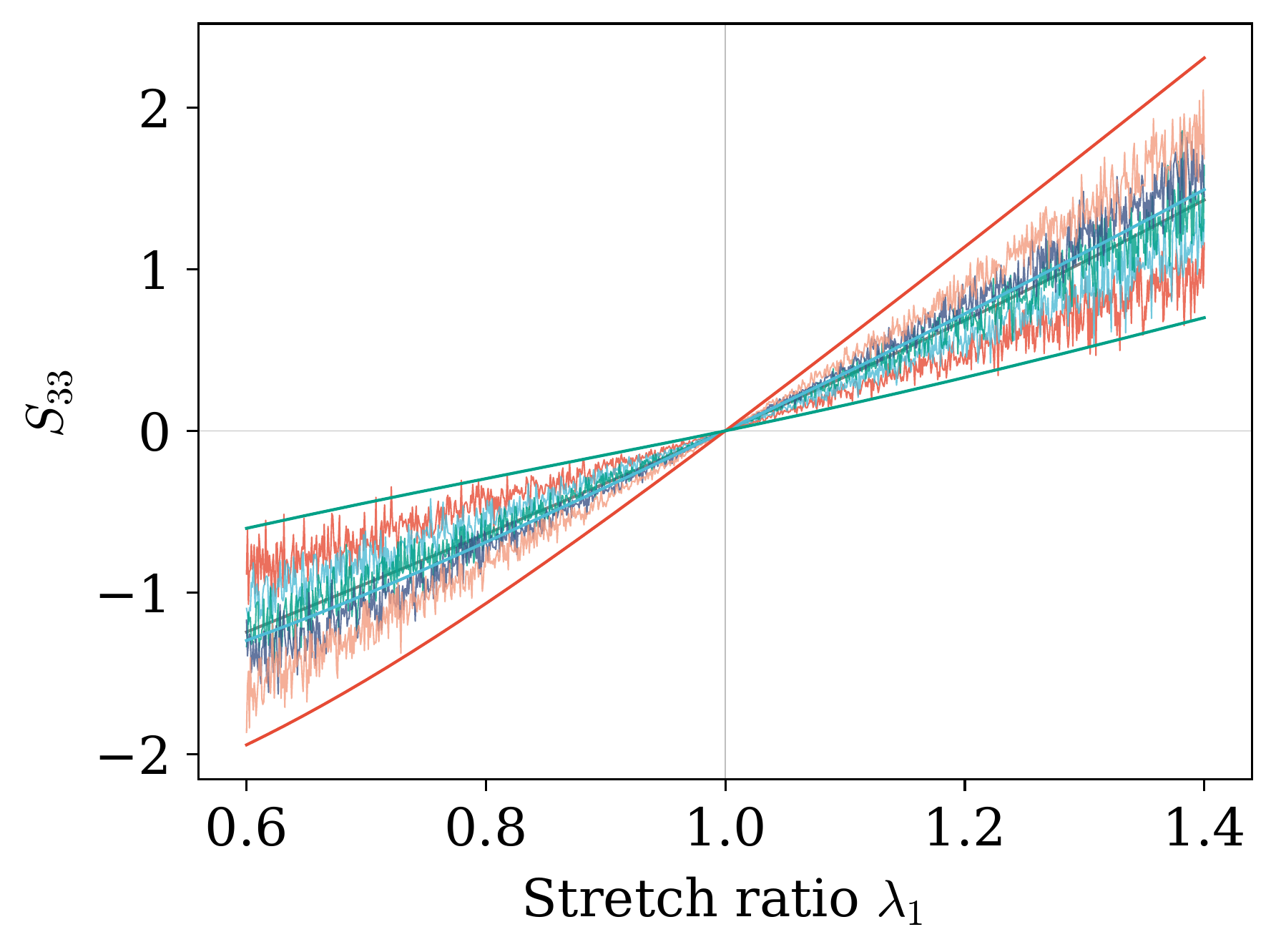}
  \caption{}
\end{subfigure}

\vspace{0.001in}
\begin{subfigure}[t]{\textwidth}
  \centering
  \stresslegend{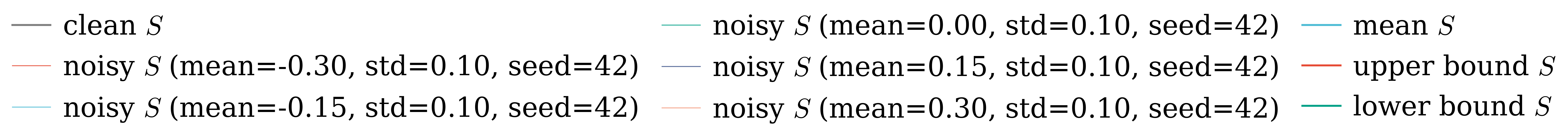}
\end{subfigure}
\stressfigcaption{\textbf{E3}: Stress components ($S_{11}$, $S_{22}$, $S_{33}$) on training (left) and test (right) data.}
\label{fig:noise_stress_means}
\end{figure}

\begin{figure}[htbp]
\centering
\begin{subfigure}[t]{0.48\textwidth}
  \centering
  \stresspanel{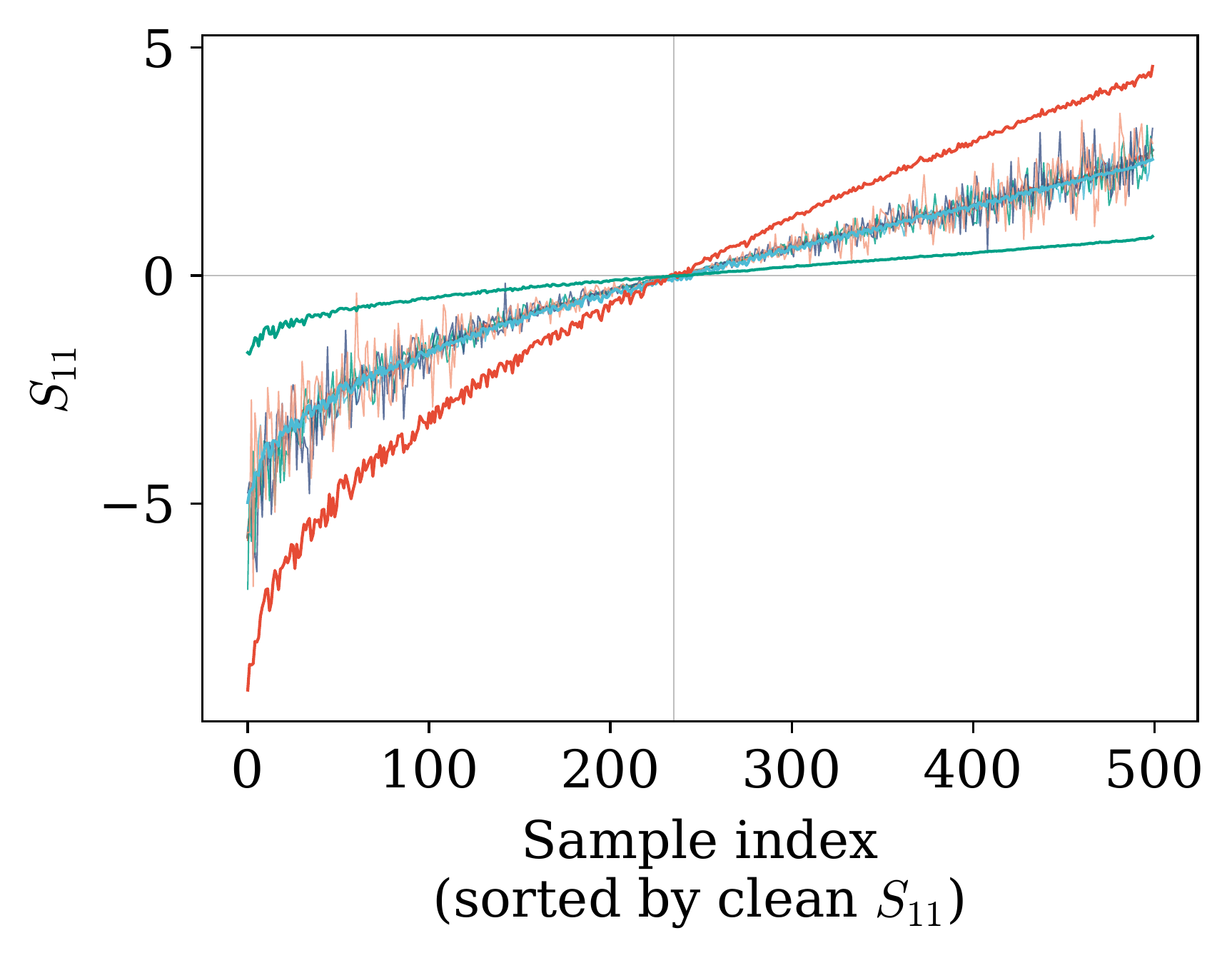}
  \caption{}
\end{subfigure}
\hfill
\begin{subfigure}[t]{0.48\textwidth}
  \centering
  \stresspanel{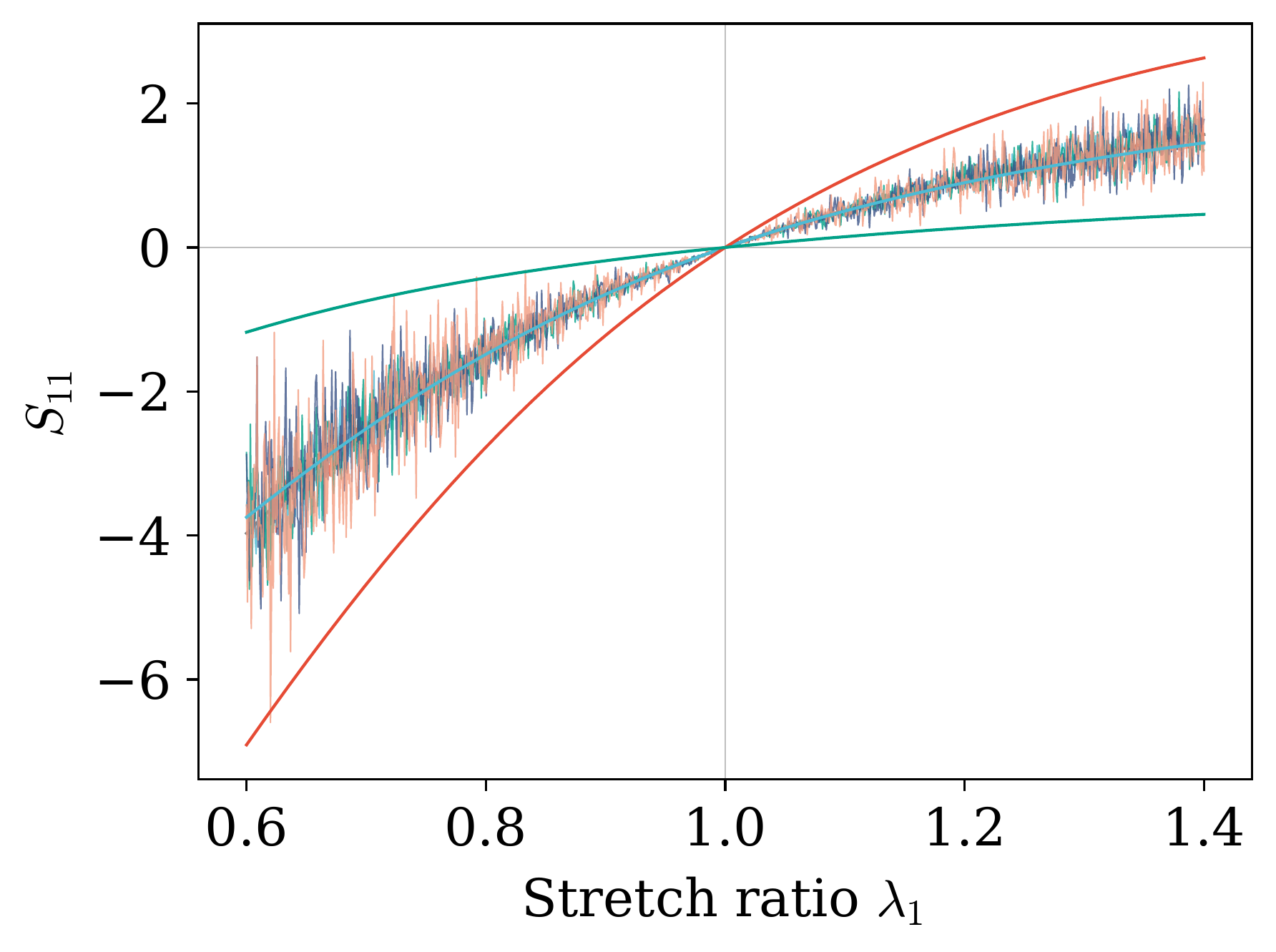}
  \caption{}
\end{subfigure}

\vspace{0.15cm}

\begin{subfigure}[t]{0.48\textwidth}
  \centering
  \stresspanel{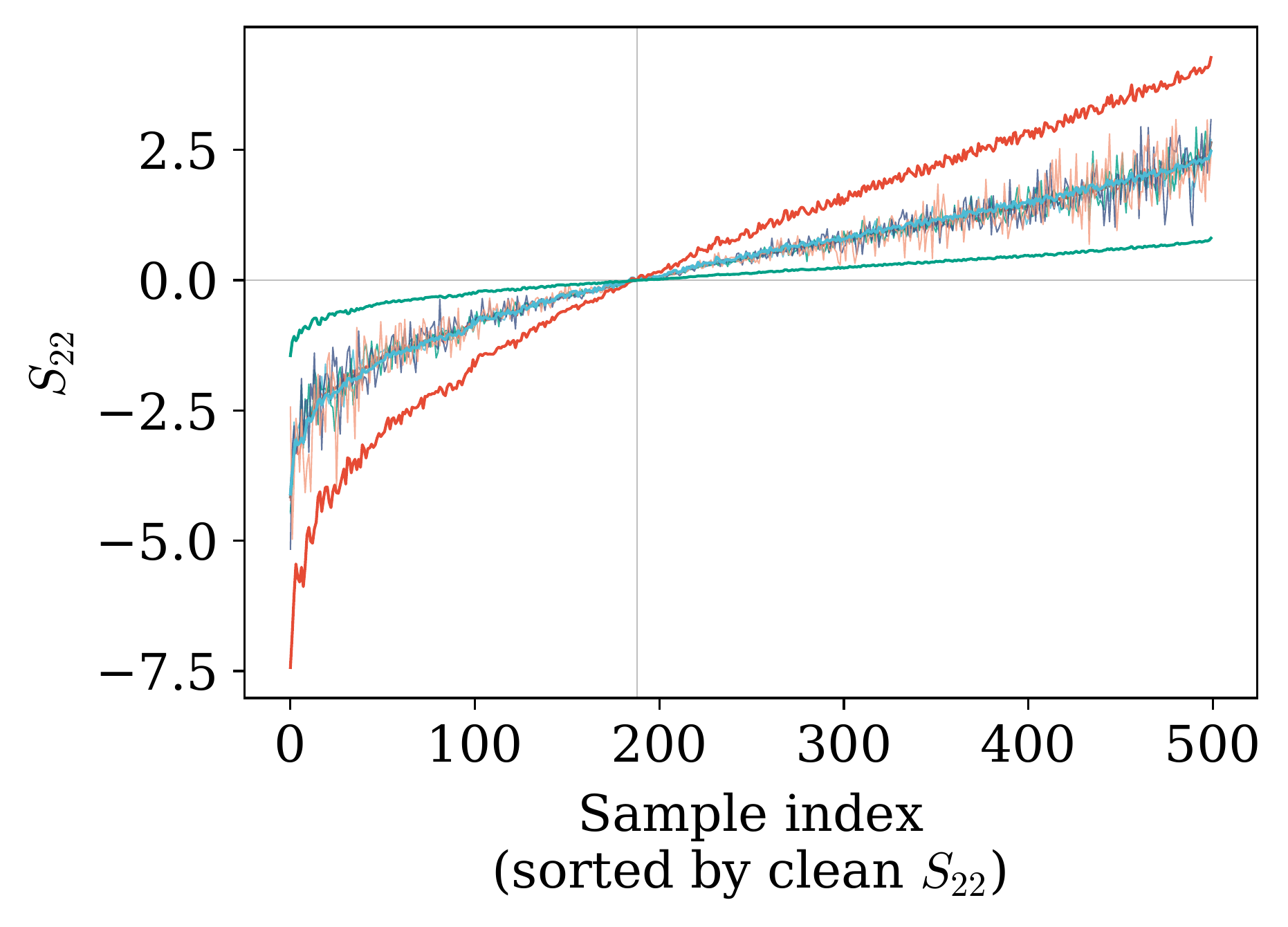}
  \caption{}
\end{subfigure}
\hfill
\begin{subfigure}[t]{0.48\textwidth}
  \centering
  \stresspanel{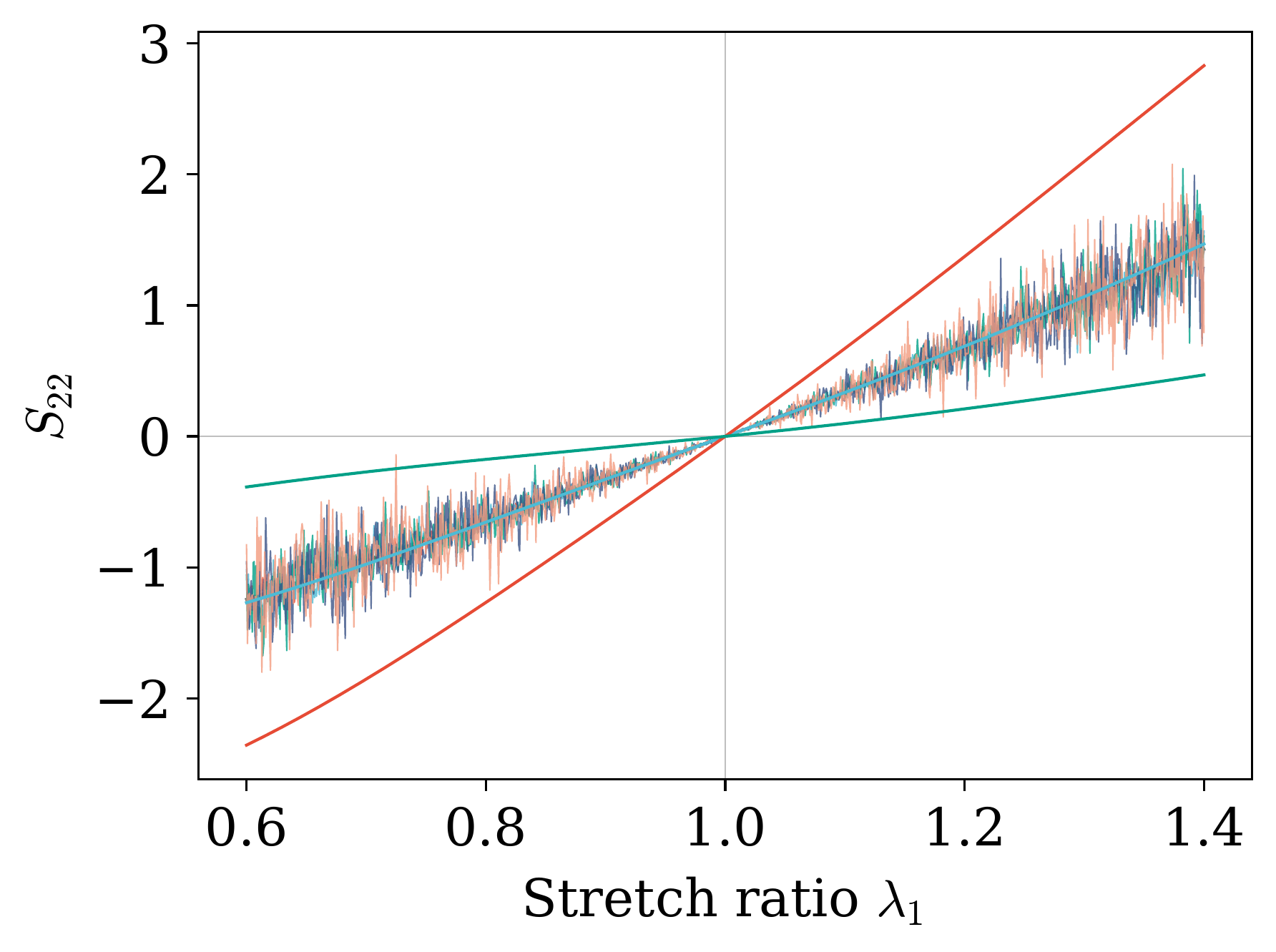}
  \caption{}
\end{subfigure}

\vspace{0.15cm}

\begin{subfigure}[t]{0.48\textwidth}
  \centering
  \stresspanel{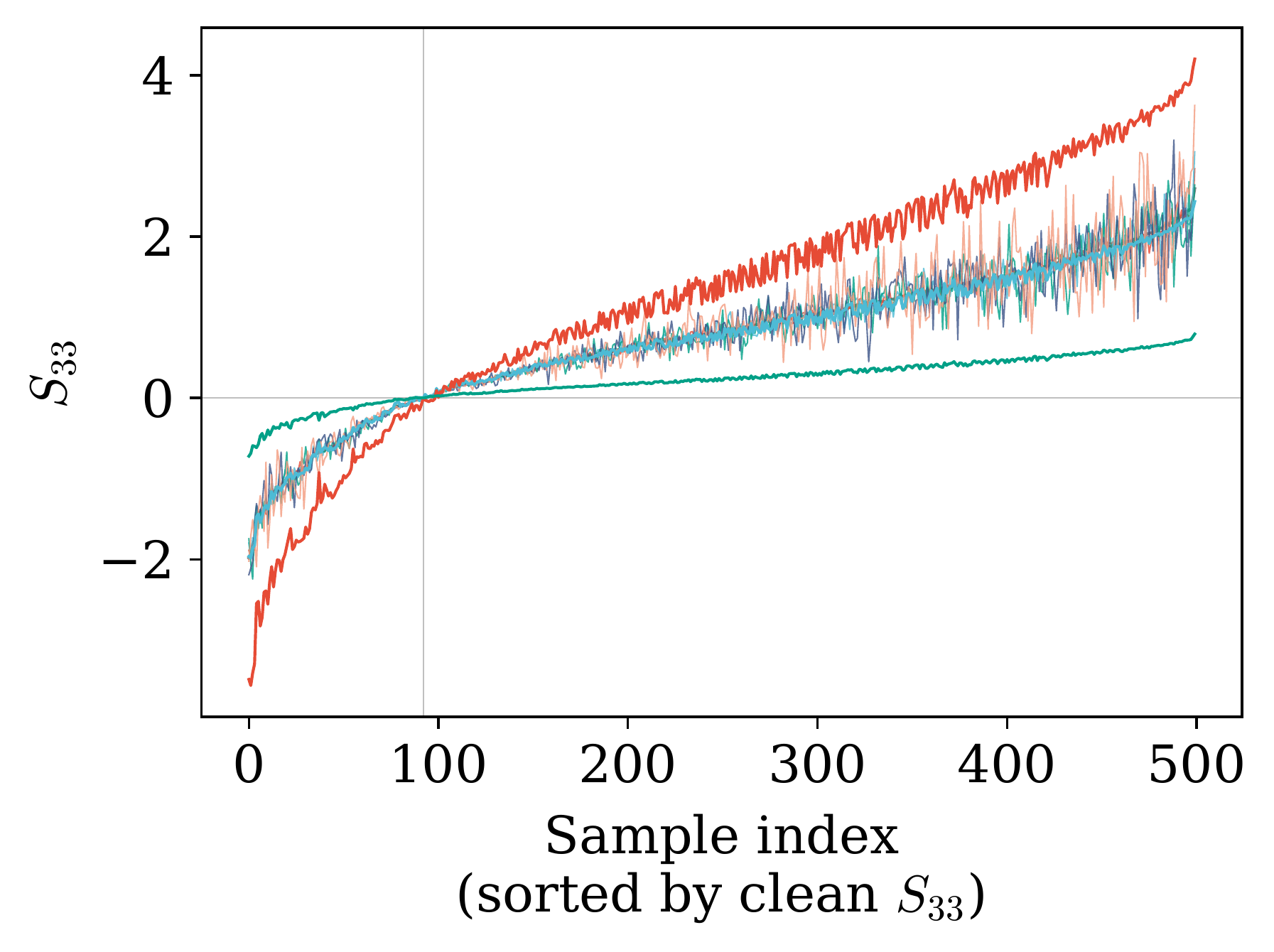}
  \caption{}
\end{subfigure}
\hfill
\begin{subfigure}[t]{0.48\textwidth}
  \centering
  \stresspanel{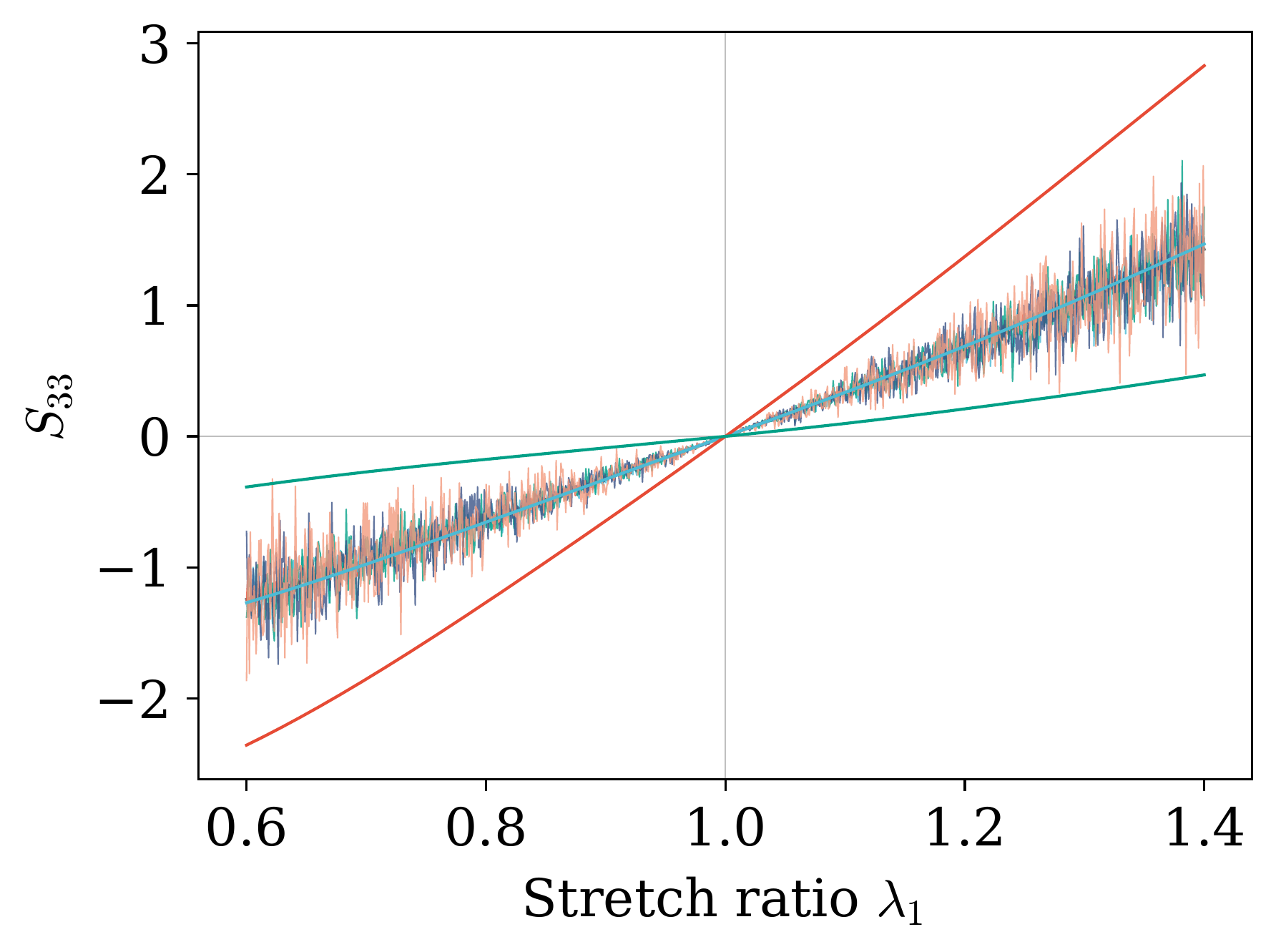}
  \caption{}
\end{subfigure}

\vspace{0.001in}
\begin{subfigure}[t]{\textwidth}
  \centering
  \stresslegend{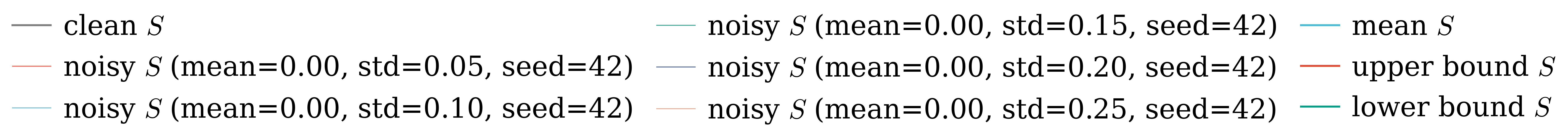}
\end{subfigure}
\stressfigcaption{\textbf{E4}: Stress components ($S_{11}$, $S_{22}$, $S_{33}$) on training (left) and test (right) data.}
\label{fig:noise_stress_std}
\end{figure}

\subsubsection{Fuzzy sets, $\alpha$-cuts and  membership analysis}\label{sec:membership_analysis}
Following the $\alpha$-cut construction of Sections~\ref{sec:fuzzy_theory} and \ref{sec:fem_membership}, 
we now quantify how well the learned fPANN enclosure contains the noisy stress observations for each of the 
experiment types reported above. Since, by construction of the data-generation and training stages, the off-diagonal components of the second Piola-Kirchhoff stress 
vanish identically and therefore carry no enclosure information, we restrict the analysis to the three 
diagonal components. In continuation of Eq.~\eqref{eq:alphaSIcut}, the componentwise $\alpha$-cuts on 
the diagonal entries are written as
\begin{equation}
S^*_{ii,\,\alpha}(\mathbf{F}) = \left[\,\underline{S}^*_{ii,\,\alpha}(\mathbf{F}),\;\overline{S}^*_{ii,\,\alpha}(\mathbf{F})\,\right],\qquad i=1,2,3,\quad \alpha\in(0,1],
\label{eq:alphaSij_cut}
\end{equation}
where the lower and upper bounds are obtained by applying Eq.~\eqref{s_lb_ub2} 
to the interpolated free energies $\underline{\Psi}^*_\alpha,\,\overline{\Psi}^*_\alpha$ of 
Eqs.~\eqref{eq:interpolation1}--\eqref{eq:interpolation2}.

Given $N$ noisy samples $\{S^{(n)}_{ii,\,\mathrm{noisy}}\}_{n=1}^{N}$ at deformations $\{\mathbf{F}^{(n)}\}_{n=1}^{N}$, we quantify how completely the $\alpha$-cut stress enclosure $S^*_{ii,\,\alpha}(\mathbf{F})$ of Eq.~\eqref{eq:alphaSij_cut} contains the observations. For each diagonal component $i=1,2,3$, define
\begin{equation}
b_{ii}(\alpha)=\frac{1}{N}\sum_{n=1}^{N}\mathbf{1}\!\left\{
S^{(n)}_{ii,\,\mathrm{noisy}}\in S^*_{ii,\,\alpha}\!\left(\mathbf{F}^{(n)}\right)
\right\},\qquad i=1,2,3,
\label{eq:b_ii}
\end{equation}
where $\mathbf{1}$ denotes the indicator function. Thus, $b_{ii}(\alpha)$ is the fraction of samples whose noisy $S_{ii}$ lies inside the predicted $\alpha$-cut enclosure.
Averaging over the three diagonal components we define the \emph{empirical membership fraction} (emf)
\begin{equation}
b_{\mathrm{emf}}(\alpha)=\tfrac{1}{3}\big(b_{11}(\alpha)+b_{22}(\alpha)+b_{33}(\alpha)\big).
\label{eq:b_emf}
\end{equation}
When an experiment comprises $K$ independent noise realizations (as in \textbf{E2}, \textbf{E3} and \textbf{E4}),
$N$ counts all realization--sample pairs, so $N=K\,N_{\mathrm{def}}$, where $N_{\mathrm{def}}$ is the number of 
training or test samples of Section~\ref{sec:results_500} ($N_{\mathrm{def}}=500$ for training and $N_{\mathrm{def}}=1000$ for testing).

Applying Eqs.~\eqref{eq:b_ii}--\eqref{eq:b_emf} to the training and test set yields $b_{\mathrm{emf,train}}(\alpha)$ 
and $b_{\mathrm{emf,test}}(\alpha)$, respectively.
In Figs.~\ref{fig:membership_combo_subset500} and~\ref{fig:membership_combo_diffseeds}, panels~(b)--(c) report these 
quantities as $b_{\mathrm{emf}}$ (\%) (i.e., $100\,b_{\mathrm{emf}}(\alpha)$), while panel~(d) 
shows the absolute train--test gap $|\Delta b_{\mathrm{emf}}|\equiv 100\,\big|b_{\mathrm{emf,train}}(\alpha)-b_{\mathrm{emf,test}}(\alpha)\big|$.

We plot these quantities against the $|a|\in[0,1]$, 
related to the membership level by $\alpha = 1 - |a|$, as illustrated schematically in Fig.~\ref{fig:fuzzy_triangle}.
The two extremes correspond to the trivial
$\alpha$-cuts: $|a|=0\ (\alpha=1)$ collapses the enclosure to the mean response $\Psi^*$, 
while $|a|=1\ (\alpha=0)$ recovers the widest interval $[\underline{\Psi}^*,\overline{\Psi}^*]$.
As shown in Section~\ref{sec:results_500} for \textbf{E1} (and analogously for \textbf{E2}--\textbf{E4}), the mean branch 
closely tracks the clean Gent-Gent reference stress across all diagonal components on both training and 
test data; this agreement is central to the membership interpretation, because the innermost 
$\alpha$-cut is anchored on the mean response. 
The panels in Figs.~\ref{fig:membership_combo_subset500} 
and~\ref{fig:membership_combo_diffseeds} therefore overlay the clean reference, the learned mean, 
the noisy observations, and the family of $\alpha$-cut bounds.

We illustrate the membership analysis for the representative experiments \textbf{E1} and \textbf{E2}; 
the remaining noise variants (\textbf{E3} and \textbf{E4}) exhibit qualitatively similar behavior and 
are omitted to avoid redundancy. In both experiments, the enclosures widen monotonically as $|a|$ increases 
(equivalently, as $\alpha$ decreases), and the empirical membership fraction approaches $100\%$ as $\alpha\to 0$.
The synthetic noise in Eq.~\eqref{eq:hetero_noise} is Gaussian, the curves $b_{\mathrm{emf}}(|a|)$ 
in panels~(b)--(c) of Figs.~\ref{fig:membership_combo_subset500} and~\ref{fig:membership_combo_diffseeds} take on the
characteristic sigmoidal-like shape of non-negative branch of the error-function profile. 
The coverage rises rapidly at small $|a|$ and then saturates on extremes.
Another important point worth noting is that the widest $\alpha$-cut ($|a|=1$) 
does not appear especially tight relative to the noisy samples, yet panels~(b)--(c) show that a large fraction of
the observations is captured only for $|a|$ close to $1$.
This is expected for approximately Gaussian residuals.
Intermediate cuts (e.g., $|a|=0.5$) leave out a nontrivial fraction even when the extreme bounds look only moderately wide.


\begin{figure}[H]
\centering
\membershiplegend{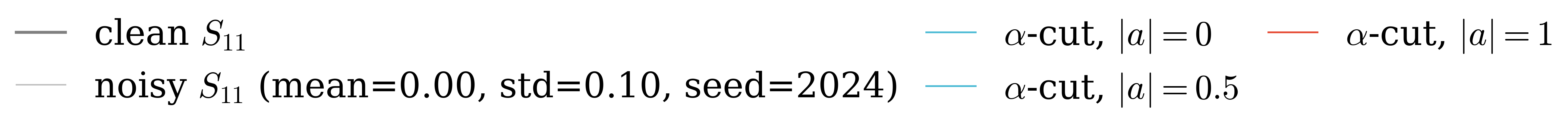}

\vspace{0.12cm}
\centering
\begin{subfigure}[t]{0.72\textwidth}
  \centering
  \membershipbounds{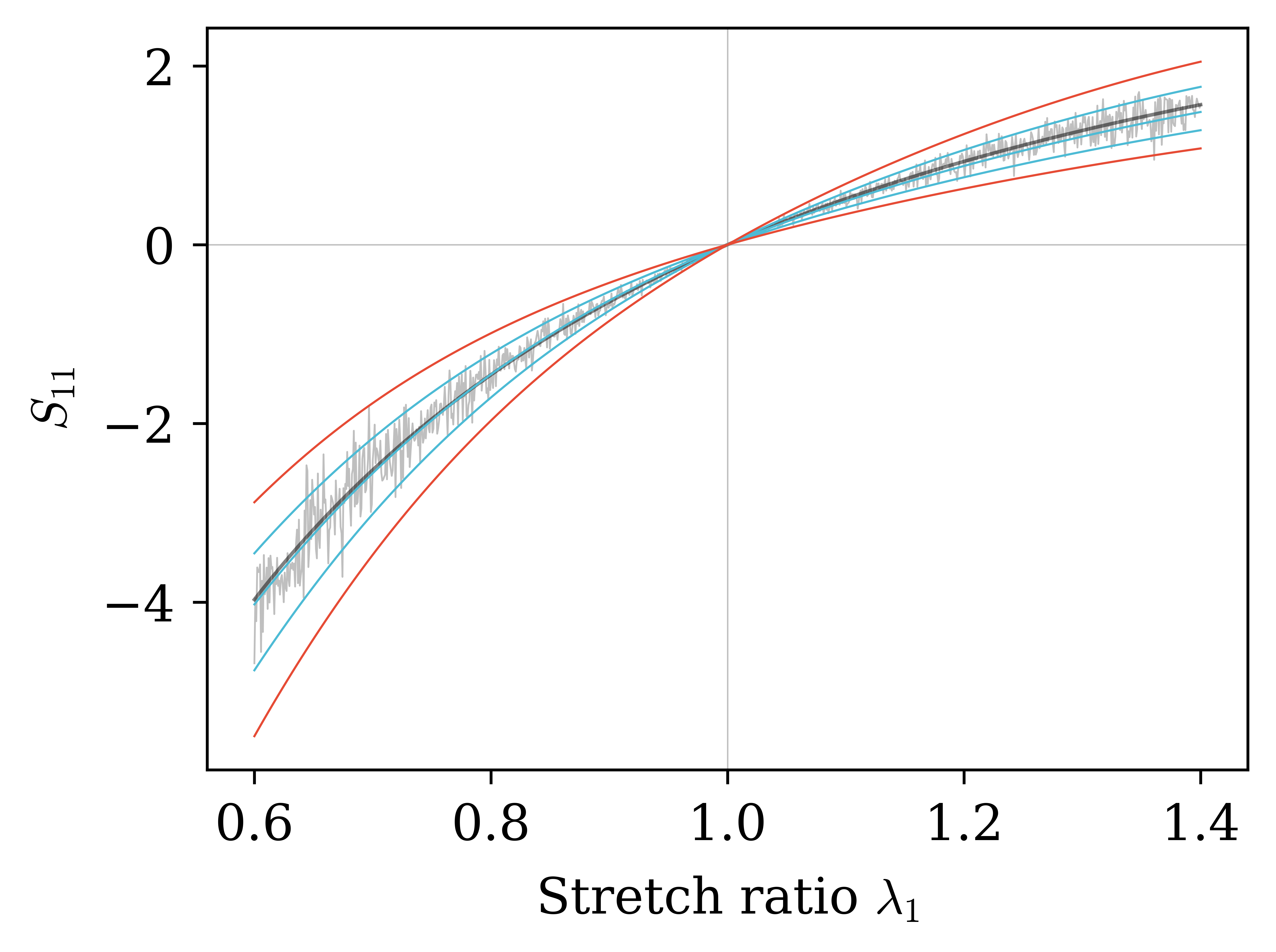}
  \caption{}
\end{subfigure}

\vspace{0.18cm}

\begin{subfigure}[t]{0.32\textwidth}
  \centering
  \maybeincludegraphics[width=\linewidth]{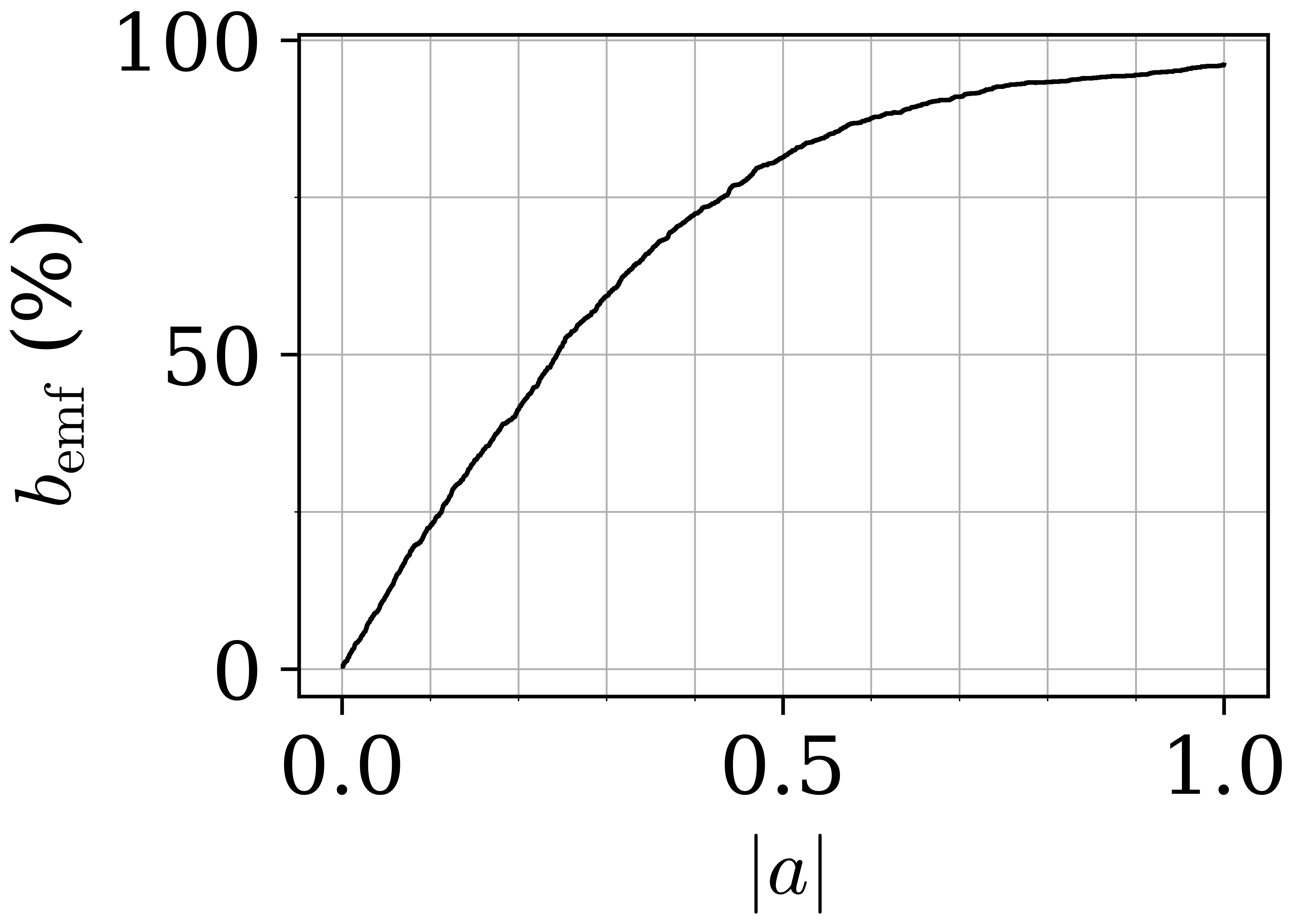}
  \caption{}
\end{subfigure}
\hfill
\begin{subfigure}[t]{0.32\textwidth}
  \centering
  \maybeincludegraphics[width=\linewidth]{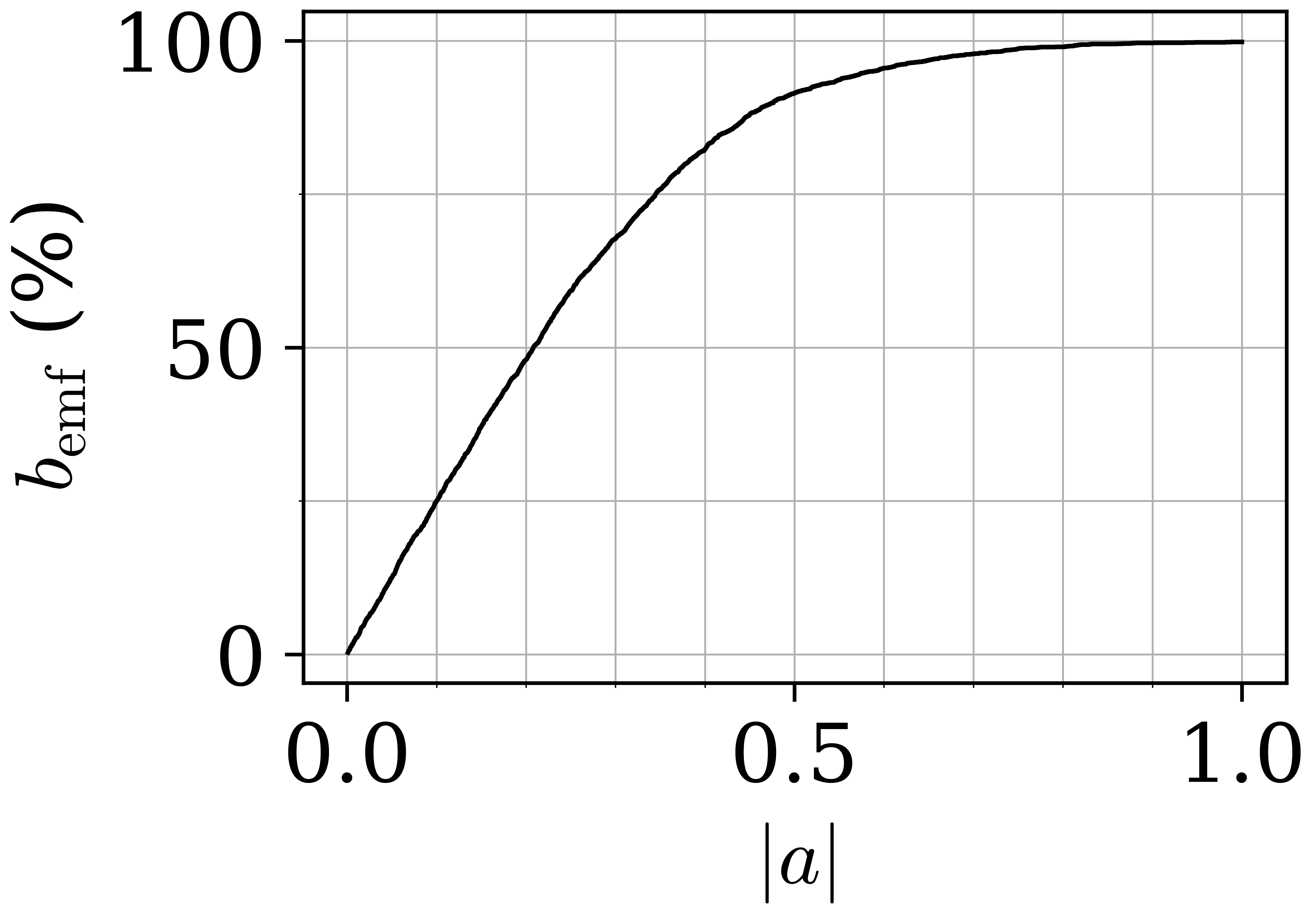}
  \caption{}
\end{subfigure}
\hfill
\begin{subfigure}[t]{0.32\textwidth}
  \centering
  \maybeincludegraphics[width=\linewidth]{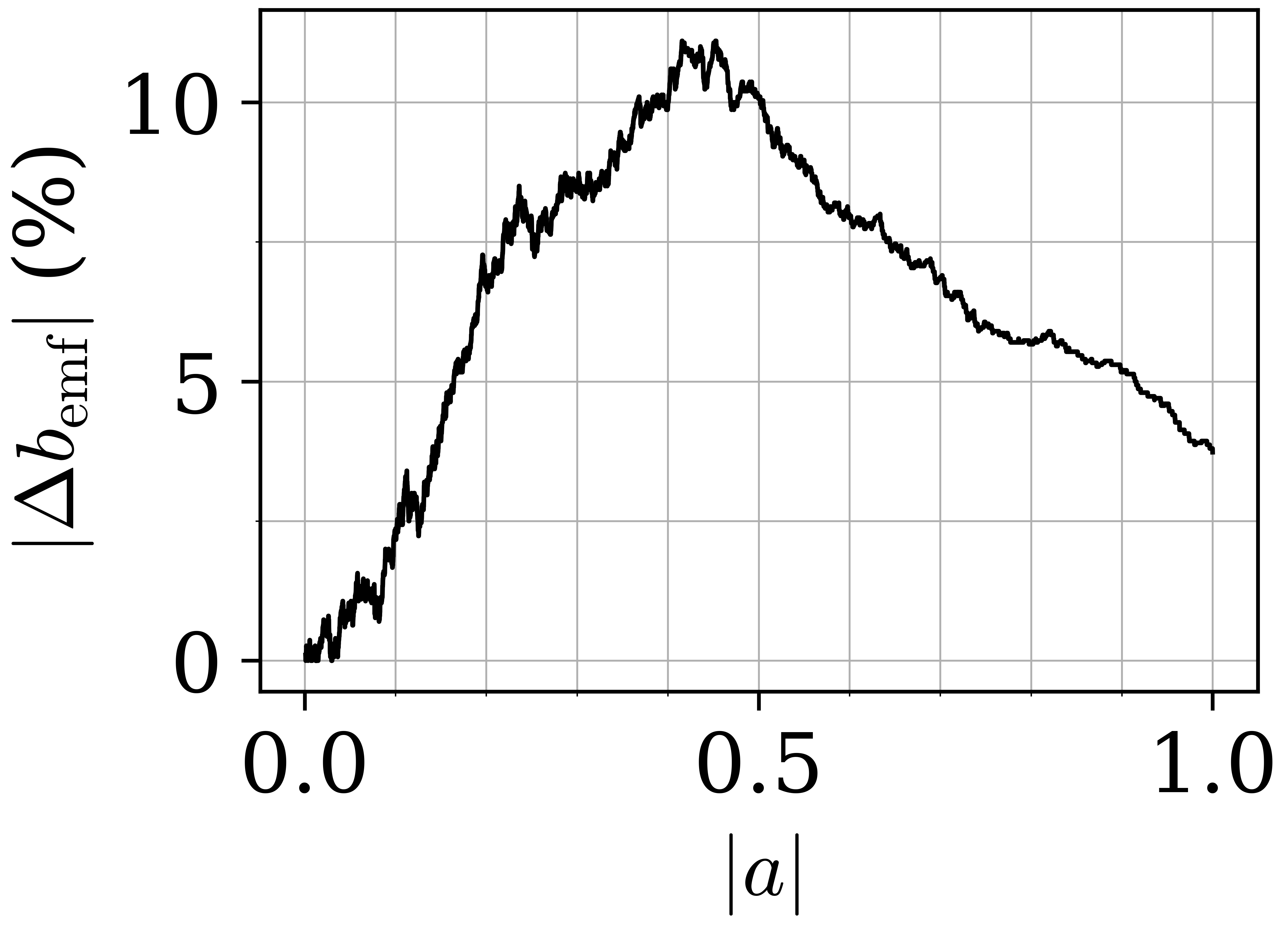}
  \caption{}
\end{subfigure}

\caption{\textbf{E1}: Membership analysis. (a) Test $S_{11}$ data and learned $\alpha$-cut bounds. (b)--(c) Training and test empirical membership fractions. (d) Absolute train--test gap.}
\label{fig:membership_combo_subset500}
\end{figure}

\begin{figure}[H]
\centering
\membershiplegend{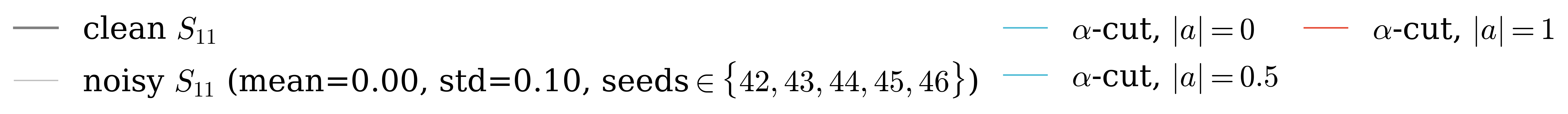}

\vspace{0.12cm}
\centering
\begin{subfigure}[t]{0.72\textwidth}
  \centering
  \membershipbounds{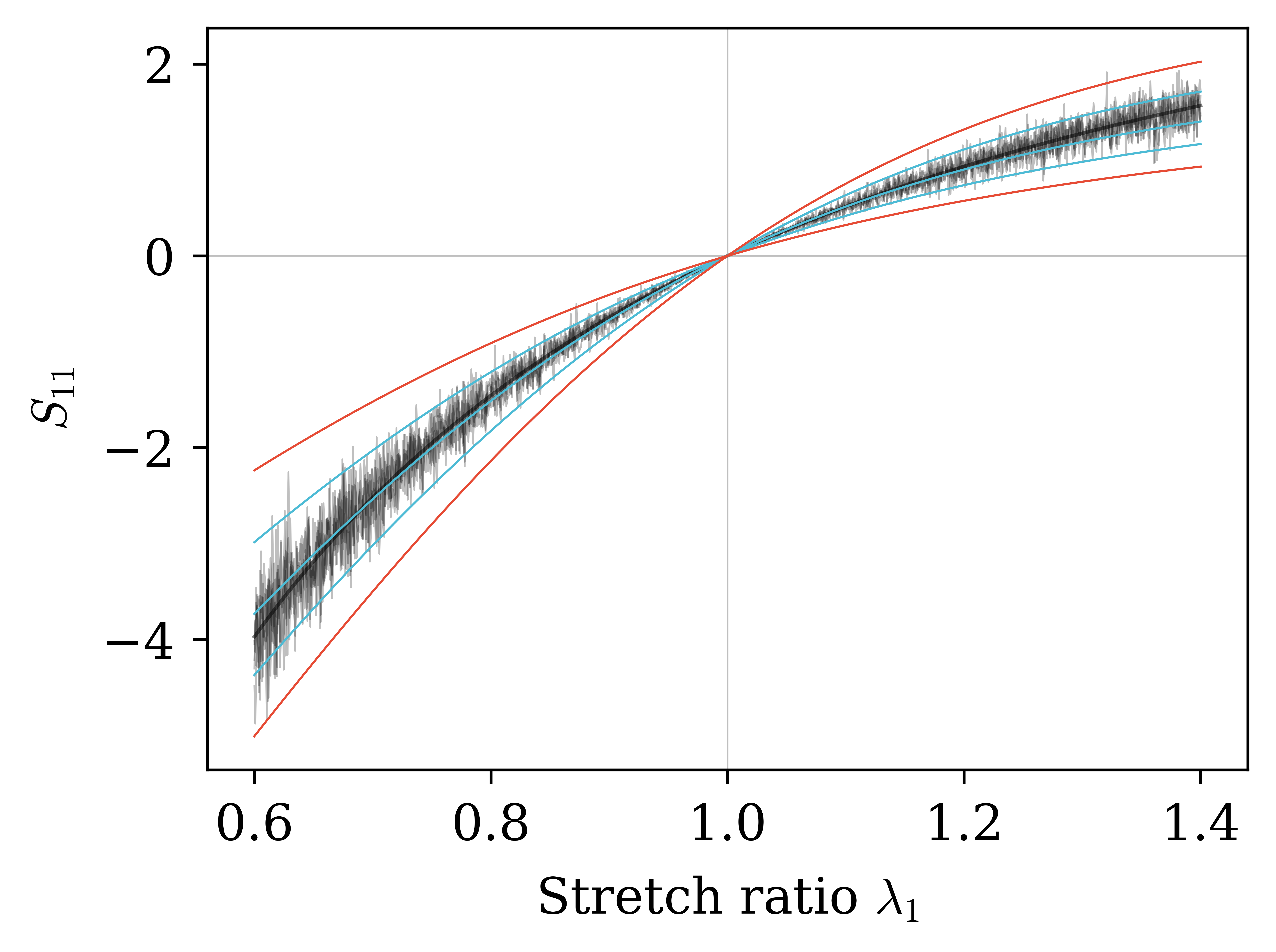}
  \caption{}
\end{subfigure}

\vspace{0.18cm}

\begin{subfigure}[t]{0.32\textwidth}
  \centering
  \maybeincludegraphics[width=\linewidth]{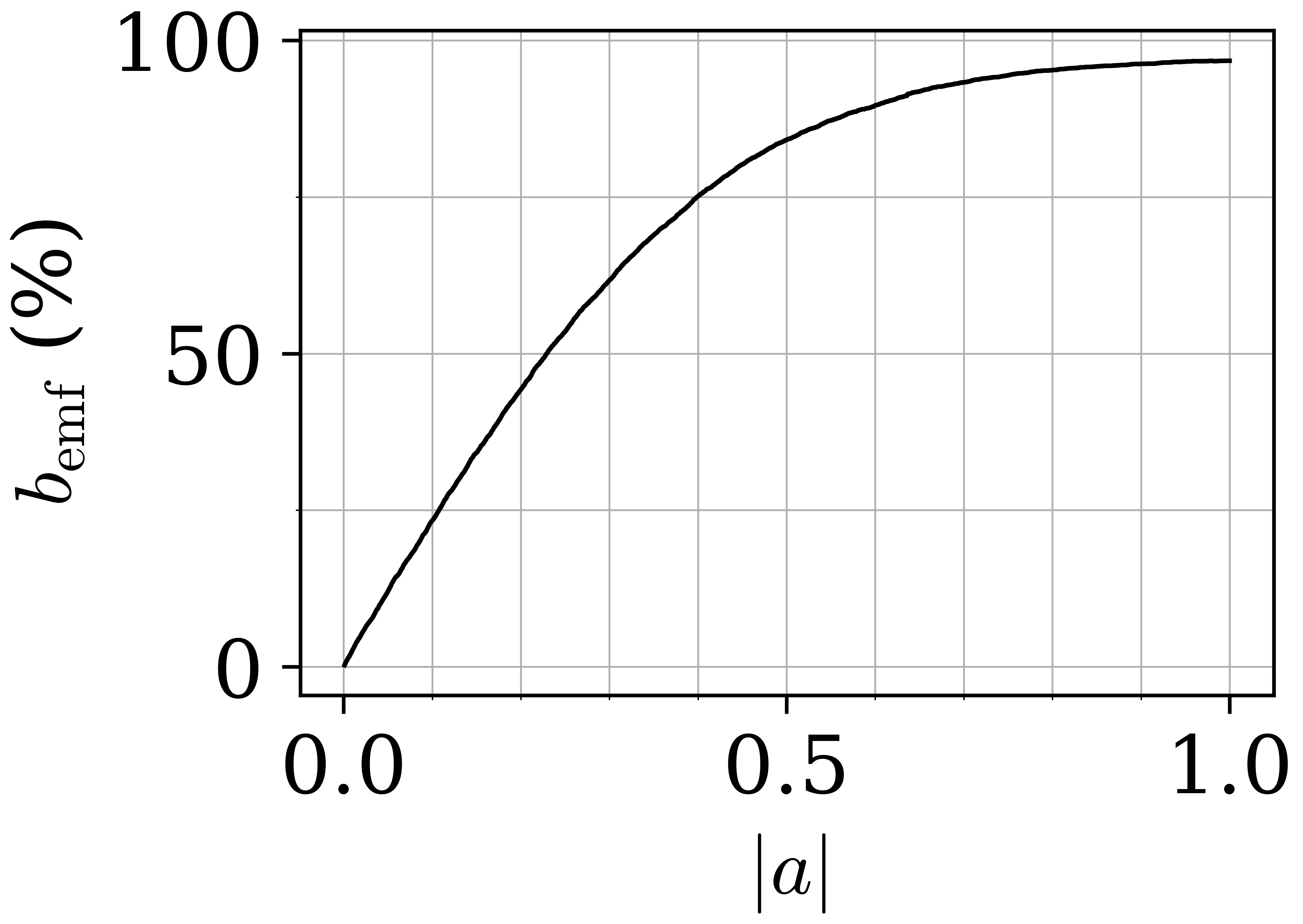}
  \caption{}
\end{subfigure}
\hfill
\begin{subfigure}[t]{0.32\textwidth}
  \centering
  \maybeincludegraphics[width=\linewidth]{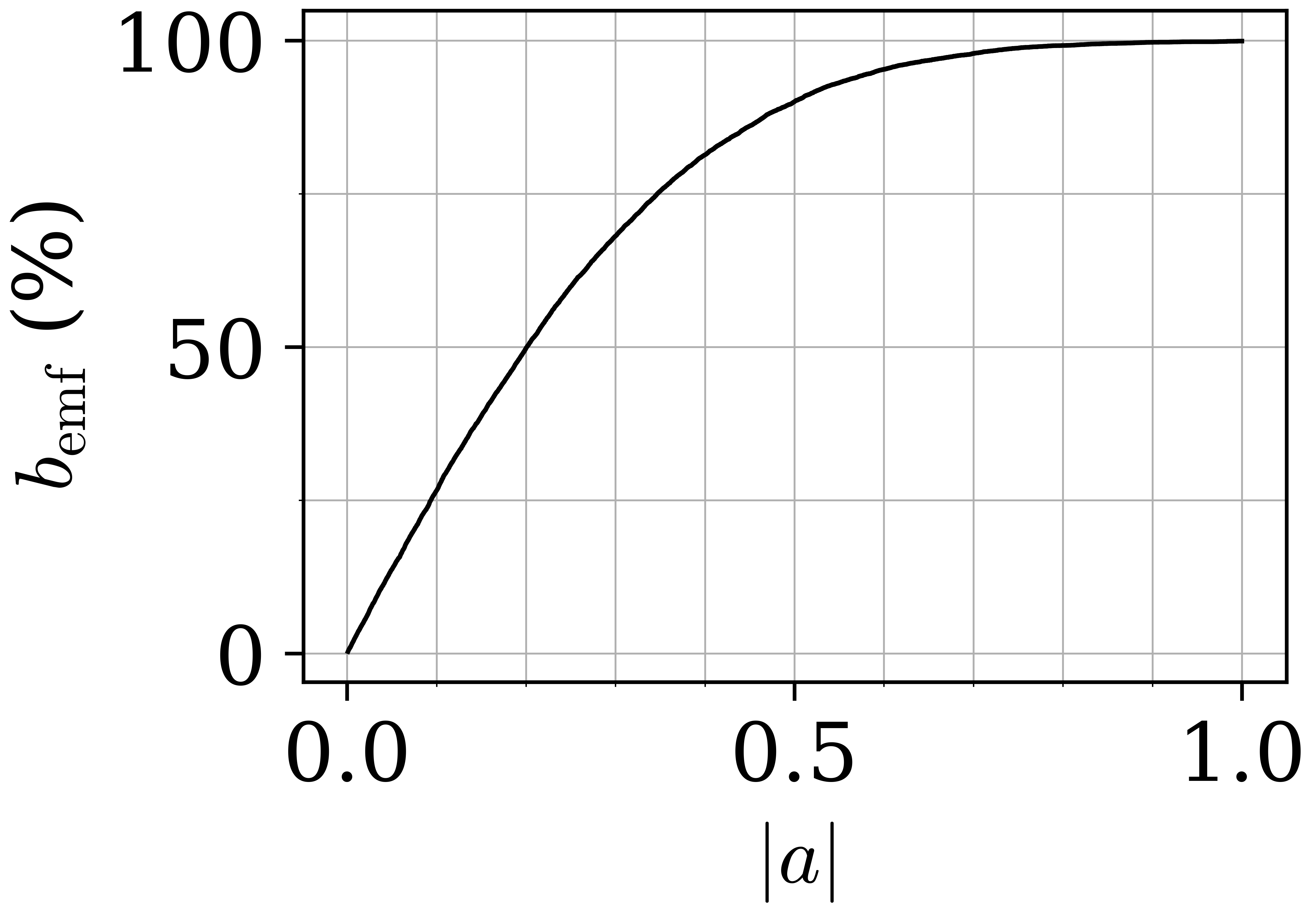}
  \caption{}
\end{subfigure}
\hfill
\begin{subfigure}[t]{0.32\textwidth}
  \centering
  \maybeincludegraphics[width=\linewidth]{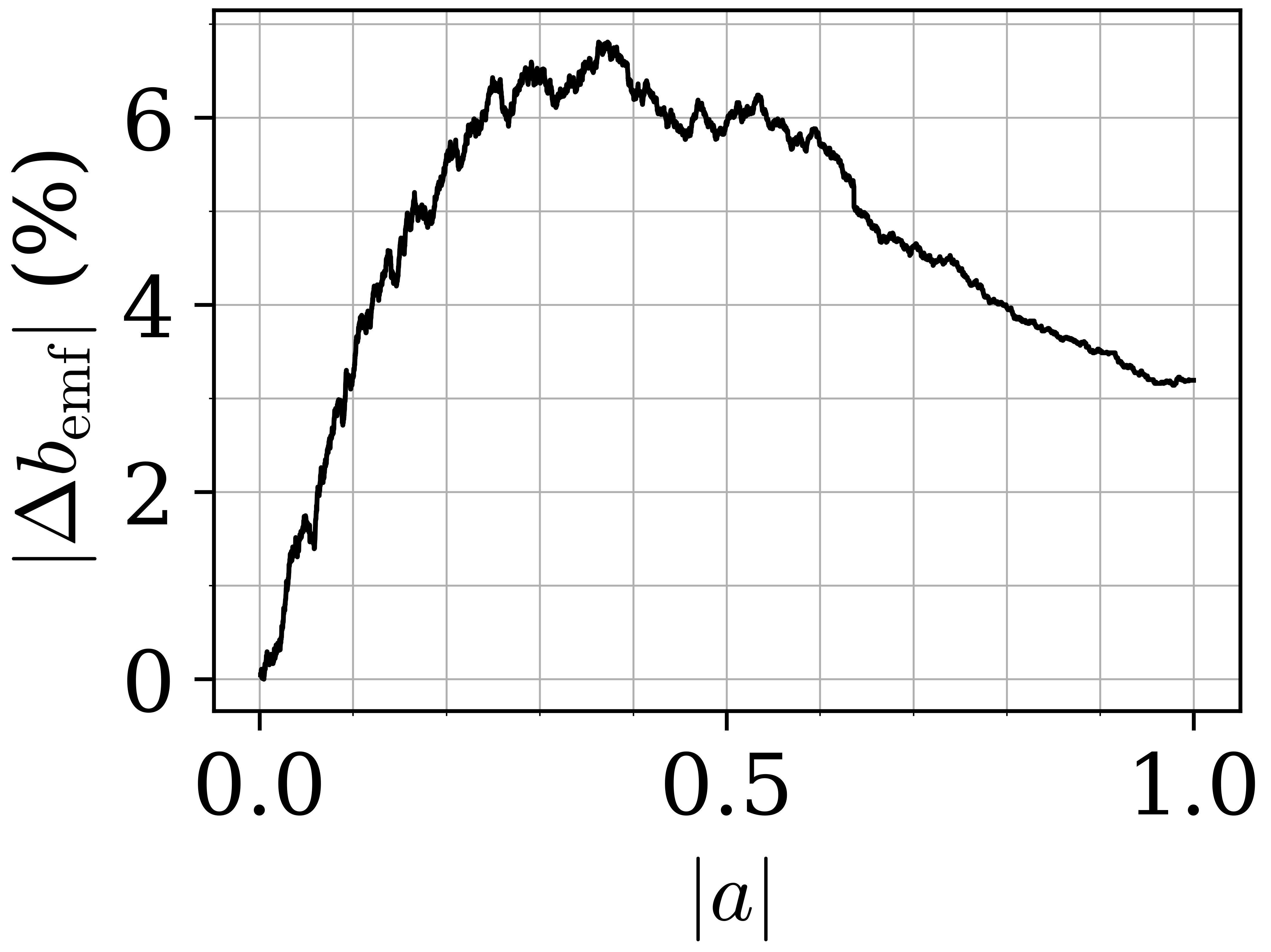}
  \caption{}
\end{subfigure}

\caption{\textbf{E2}: Membership analysis with five noise realizations. (a) Test $S_{11}$ data and learned $\alpha$-cut bounds. 
(b)--(c) Training and test empirical membership fractions with $K=5$. (d) Absolute train--test gap.}
\label{fig:membership_combo_diffseeds}
\end{figure}

\subsubsection{Uncertainty propagation in a FEM setting }\label{sec:fem_validation_500}

To demonstrate that the learned closed-form free energy density expressions are directly usable in a downstream mechanics simulation, we deploy the three learned potentials from the baseline experiment \textbf{E1} of Section~\ref{sec:results_500} ($\underline{\Psi}^*(\mathbf{F})$, ${\Psi}^*(\mathbf{F})$, and $\overline{\Psi}^*(\mathbf{F})$), each corrected to a stress-free reference configuration, in a 3D finite element solver implemented in FEniCS~\cite{fenics1,fenics2}. The domain is the unit cube discretized with a $15\times15\times15$ structured tetrahedral mesh. We impose a uniaxial stretch boundary condition along the $z$-axis: the bottom face $z=0$ is clamped, while the top face $z=1$ is prescribed a $20\%$ axial stretch over $20$ load steps. The lateral faces are traction-free, we adopt a nearly incompressible setting with $\nu=0.49$, and no body forces or surface tractions are applied. Figure~\ref{fig:appD_fem_validation_500} shows the deformed configurations at the final load step. All three potentials yield smooth, physically admissible states.
As observed in the figure, the upper-bound configuration develops higher $S_{33}$ magnitudes than the mean, while the lower-bound configuration develops lower magnitudes.

\begin{figure}[H]
\centering
\begin{subfigure}[t]{0.32\linewidth}
  \centering
  \maybeincludegraphics[width=\linewidth]{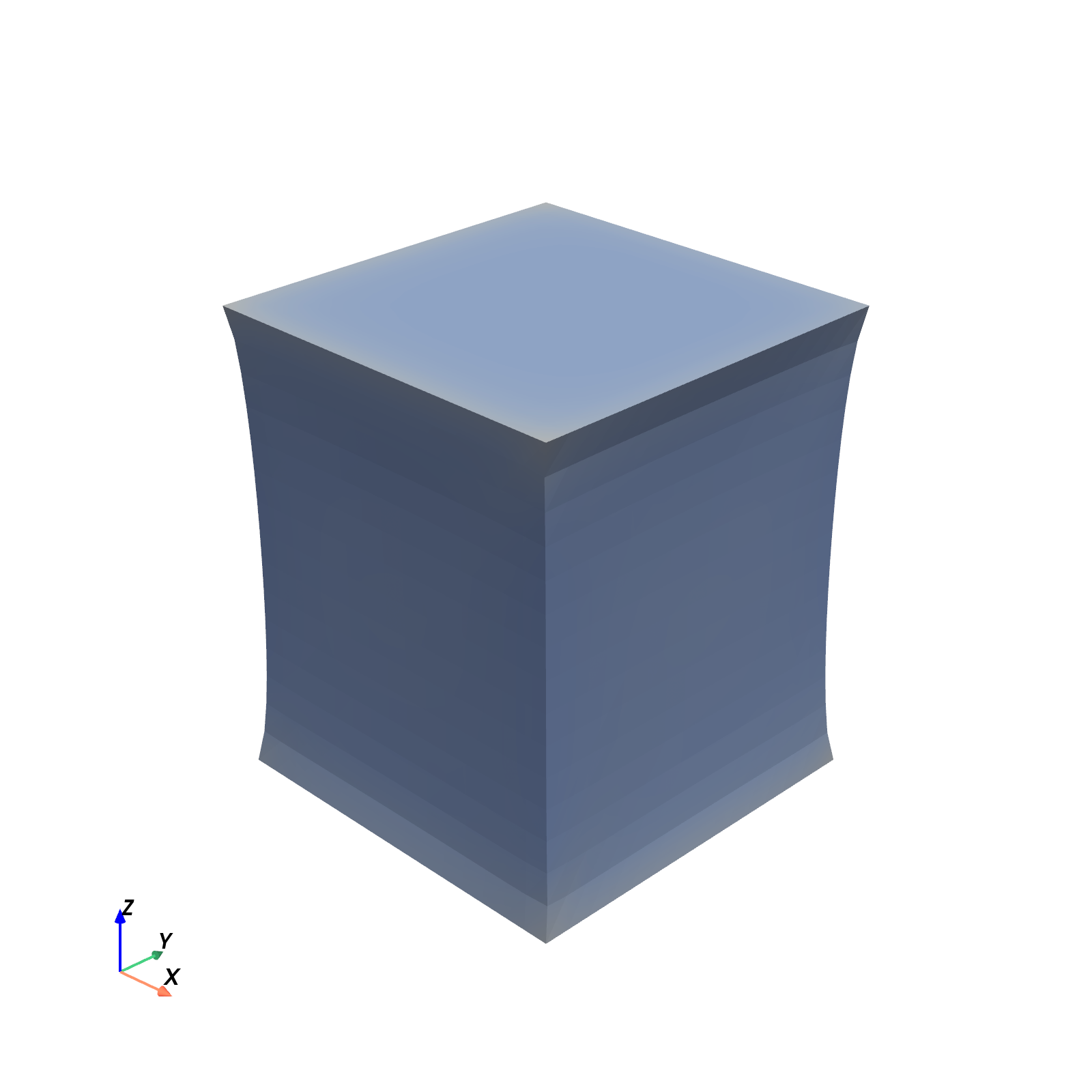}
  \caption{Deformed, $\underline{\Psi}^*$ (LB)}
  \label{fig:appD_lb}
\end{subfigure}\hfill
\begin{subfigure}[t]{0.32\linewidth}
  \centering
  \maybeincludegraphics[width=\linewidth]{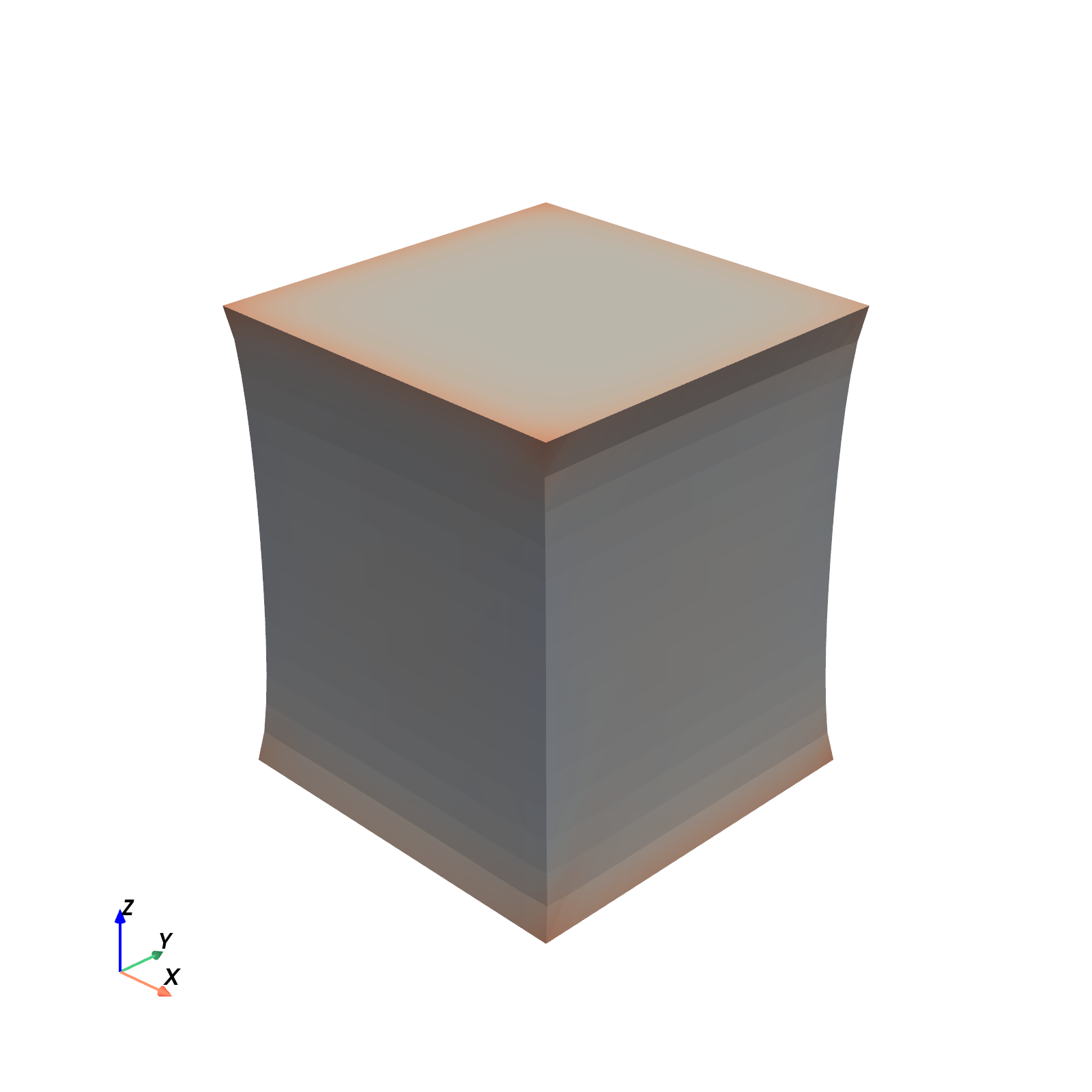}
  \caption{Deformed, $\Psi^*$ (Mean)}
  \label{fig:appD_mean}
\end{subfigure}\hfill
\begin{subfigure}[t]{0.32\linewidth}
  \centering
  \maybeincludegraphics[width=\linewidth]{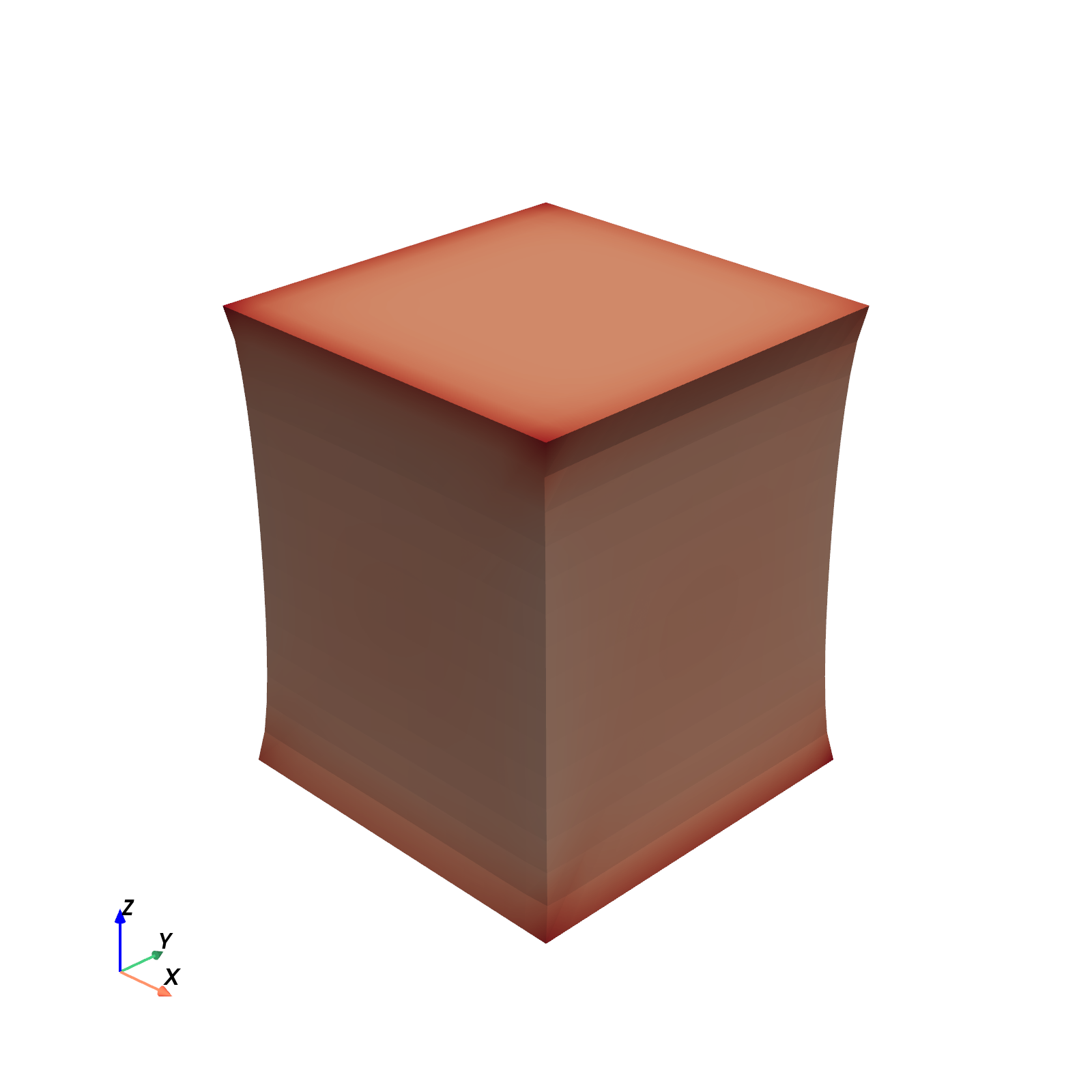}
  \caption{Deformed, $\overline{\Psi}^*$ (UB)}
  \label{fig:appD_ub}
\end{subfigure}

\vspace{0.15cm}
\centering
\maybeincludegraphics[width=0.62\linewidth]{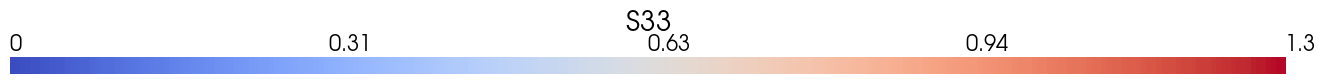}

\caption{\textbf{E1}: FEM demonstration of learned free energy density.}
\label{fig:appD_fem_validation_500}
\end{figure}

\FloatBarrier

\section{Conclusion}\label{sec:conclusion}


This work introduced interval and fuzzy physics-augmented neural network framework for uncertainty-aware 
hyperelastic constitutive modeling.
Relative to widely used Bayesian neural network approaches, the proposed formulation offers a complementary non-probabilistic route to uncertainty quantification that is comparatively lightweight at inference: once trained, each constitutive branch is evaluated deterministically, without posterior sampling or Monte Carlo propagation.
This is attractive in mechanics settings where labeled stress data are limited, distributional assumptions on aleatoric noise are difficult to justify, and practitioners often require certified worst- or best-case bounds rather than full predictive distributions.
In addition, smoothed $L_0$ sparsification yields compact, inspectable free energy density expressions, in contrast to typically dense Bayesian surrogates, while input-convex architectures promote convexity with respect to the chosen invariants. 

The framework is built in two layers that share the same learned mean, lower, and upper energy branches.
At the interval level, iPANNs combine these ingredients to produce physics-consistent stress predictions through automatic differentiation and to certify whether noisy observations lie inside a deterministic admissible enclosure.
At the fuzzy level, fPANNs embed the three iPANN branches into a membership-based representation through $\alpha$-cut interpolation, yielding a nested family of admissible responses between the central mean ($\alpha=1$) and the extreme bounds ($\alpha=0$).
Thus, whereas an iPANN is appropriate when a binary containment statement or a conservative envelope suffices, an fPANN additionally lets the practitioner tune conservatism through $\alpha$ and interpret coverage in membership terms, as quantified in Section~\ref{sec:membership_analysis}, without retraining separate models for each target coverage level.
The numerical studies demonstrate that the framework can capture aleatoric uncertainty arising from synthetic noise, representative 
of different modalities (experimental data acquisition noise, batch-to-batch material variability, etc). 
Despite this conceptual distinction, both representations remain equally suited to forward deployment in finite element analysis: any selected iPANN bound or fPANN $\alpha$-cut defines a physics-consistent energy branch that can be passed directly to a solver for uncertainty propagation.

Future work will extend the proposed framework beyond synthetic stress–deformation datasets  toward richer experimental settings and broader material behaviors.
A key direction is the  integration of multi-modal experiments, where global force–displacement data, local deformation  measurements, imaging, and possibly microstructural descriptors are used jointly to constrain the learned uncertainty bounds.
Full-field experimental data from digital image correlation or digital  volume correlation will be particularly important for training and validating the framework under  heterogeneous deformation states that are more representative of realistic boundary value problems.
In addition, future developments will incorporate dissipative mechanisms, including  viscoelasticity, plasticity and damage, where uncertainty must be propagated not only through the free energy density but also through internal variables, evolution laws, and path-dependent dissipation.
Towards this goal, past work on modular constructions for learning inelastic responses will be central  to this effort~\cite{fuhg2023modular}.
These extensions will broaden the applicability of fuzzy physics-augmented learning toward experimentally grounded, uncertainty-aware constitutive models for complex materials and  engineering simulations that are structured in-line with data acquisition approaches from mechanics  experimental modalities, but also maintain trustworthiness in forward deployment for the solution  of specific boundary value problems.


\section*{Declaration of generative AI and AI-assisted technologies use}
During the preparation of this work, the author(s) used generative AI and AI-assisted tools, including ChatGPT and Cursor, in order to improve the language, clarity, and readability of the manuscript text. Additionally, these tools were utilized to assist in debugging and optimizing the source code used for data processing and simulation. After using these tools, the author(s) thoroughly reviewed, verified, and edited all outputs to ensure factual and technical accuracy, and take full responsibility for the final content of the publication.

\section*{Declaration of competing interest}
The authors declare that they have no known competing financial interests or personal relationships that could have appeared to influence the work reported in this paper.

\section*{Code and data availability}
The codes and data generated during the current study are available from the authors upon reasonable request.

\section*{Acknowledgements} This work was funded by the Laboratory Directed Research and Development (LDRD) program at Sandia National Laboratories; this funding is gratefully acknowledged.
Sandia National Laboratories is a multi-mission laboratory managed and operated by National Technology \& Engineering Solutions of Sandia, LLC (NTESS), a wholly owned subsidiary of Honeywell International Inc., for the U.S. Department of Energy’s National Nuclear Security Administration (DOE/NNSA) under contract DE-NA0003525. This written work is authored by an employee of NTESS. The employee, not NTESS, owns the right, title and interest in and to the written work and is responsible for its contents. Any subjective views or opinions that might be expressed in the written work do not necessarily represent the views of the U.S. Government. The publisher acknowledges that the U.S. Government retains a non-exclusive, paid-up, irrevocable, world-wide license to publish or reproduce the published form of this written work or allow others to do so, for U.S. Government purposes. 
The DOE will provide public access to results of federally sponsored research in accordance with the DOE Public Access Plan.
\bibliographystyle{elsarticle-num}
\bibliography{refs}

\appendix

\section{Hyperparameter tuning and experiment summary}\label{app:experiment_summary}

Presented here is the hyperparameter tuning procedure and the experiment-specific settings used to produce the numerical results in Section~\ref{sec:results}.

\subsection{Hyperparameter tuning}\label{sec:hyperparams}

To determine optimal hyperparameters for the loss-function weights in Eq.~\eqref{eq:loss}, we perform a full factorial grid search over four orders of magnitude in $\lambda_{\mathrm{bound}}$ for experiment \textbf{E1}.
The label \emph{raw} on the axes denotes the unweighted loss terms prior to multiplication by the corresponding weights $\lambda_{\mathrm{mse}}$ or $\lambda_{\mathrm{bound}}$.

Figure~\ref{fig:log-log} summarizes the log--log trade-off between MSE and upper-bound violation over the grid for
$\lambda_{\mathrm{mse}} \in \{1,10,10^{2},10^{3}\}$ and $\lambda_{\mathrm{bound}} \in \{10^{-1},1,10,10^{2},10^{3}\}$.

\begin{figure}[H]
\centering
\includegraphics[width=0.95\linewidth]{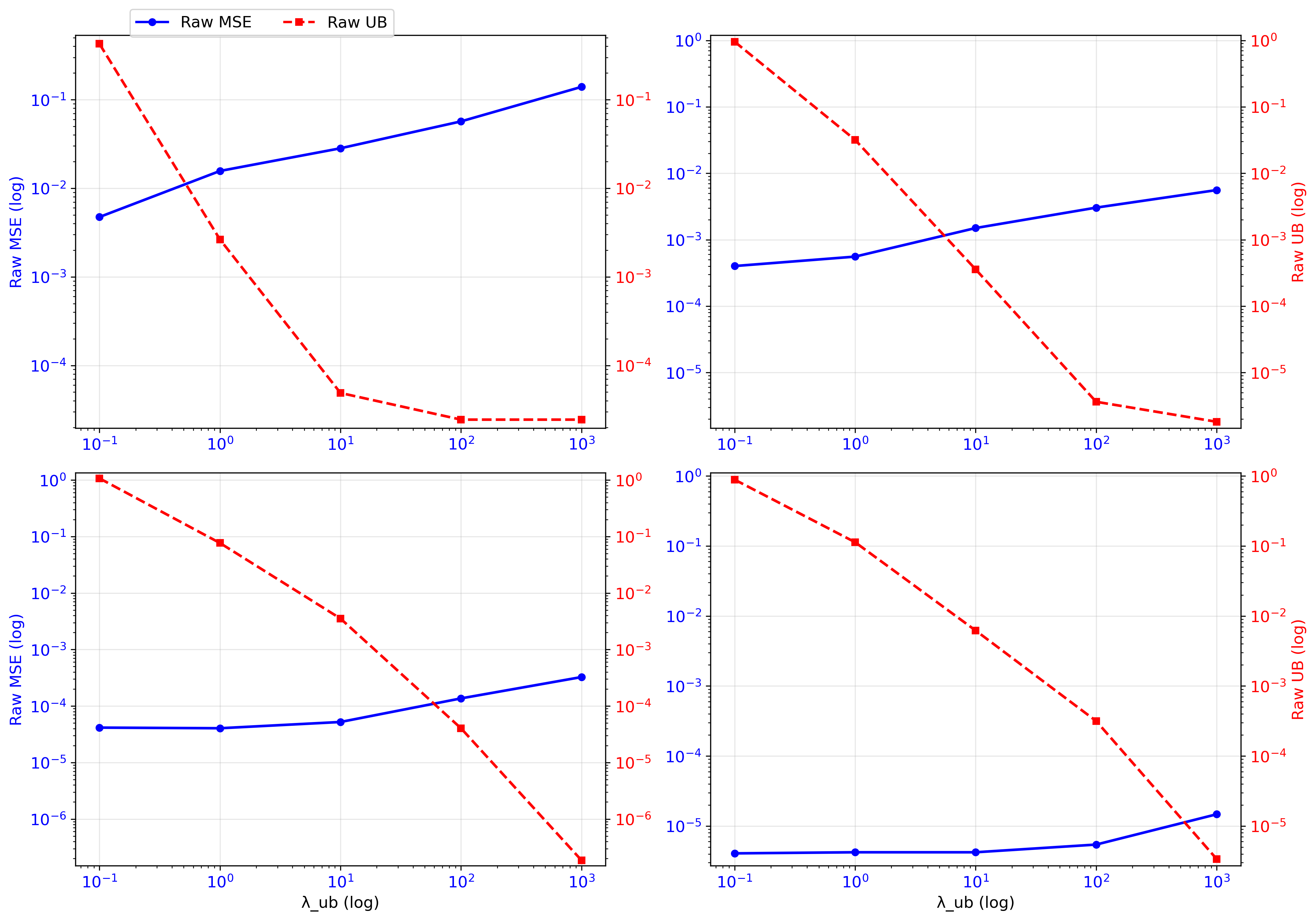}
\caption{Log--log trade-off between the raw MSE term and the raw upper-bound 
(UB) violation term. The four panels 
correspond to $\lambda_{\mathrm{mse}}=1$ (top left), $\lambda_{\mathrm{mse}}=10$ (top right),
$\lambda_{\mathrm{mse}}=10^{2}$ (bottom left), and $\lambda_{\mathrm{mse}}=10^{3}$ (bottom right).}
\label{fig:log-log}
\end{figure}

\subsection{Experiment summary}\label{app:Experiment summary}
All models are implemented in PyTorch~\cite{paszke2019pytorch} and trained with the Adam
optimizer~\cite{kingma2014adam} at a fixed learning rate of $10^{-3}$. Each network is an input
convex neural network with two hidden layers of $30$ neurons and softplus activations. 

Table~\ref{tab:experiment_summary} lists the hyperparameter values used for each experiment (\textbf{E1}--\textbf{E4}).

\begin{table}[htbp]
\centering
\caption{Experiment-specific hyperparameters.}
\label{tab:experiment_summary}
\small
\setlength{\tabcolsep}{3.5pt}
\renewcommand{\arraystretch}{1.12}
\begin{tabular}{|c|>{\raggedright\arraybackslash}p{2.55cm}|c|c|c|c|c|c|c|}
\hline
ID & Experiment & $\lambda_{\mathrm{mse}}$ & $\lambda_{\mathrm{bound (UB, LB)}}$ &
$\lambda_{\mathrm{sparse}}^{\mathrm{M (mean)}}$ & $\lambda_{\mathrm{sparse}}^{\mathrm{bound (UB, LB)}}$ &
$N_{\mathrm{epochs}}^{\mathrm{M (mean)}}$ & $N_{\mathrm{epochs}}^{\mathrm{UB}}$ & $N_{\mathrm{epochs}}^{\mathrm{LB}}$ \\
\hline
\textbf{E1} & 500 samples & $1$ & $100$ & $5{\times}10^{-3}$ & $5{\times}10^{-6}$ & $250$ & $150$ & $150$ \\
\hline
\textbf{E2} & Diff.\ seeds ($500{\times}5$) & $1$ & $100$ & $5{\times}10^{-3}$ & $5{\times}10^{-6}$ & $50$ & $50$ & $50$ \\
\hline
\textbf{E3} & Diff.\ means ($500{\times}5$) & $1$ & $100$ & $5{\times}10^{-3}$ & $5{\times}10^{-6}$ & $50$ & $50$ & $50$ \\
\hline
\textbf{E4} & Diff.\ std ($500{\times}5$) & $1$ & $1000$ & $5{\times}10^{-3}$ & $5{\times}10^{-6}$ & $50$ & $50$ & $50$ \\
\hline
\end{tabular}
\end{table}

\section{Invariant-space sampling}\label{app:invariant_sampling}

Figure~\ref{fig:invariant_sampling} shows the $500$ FPS--SA selected invariant triples used for training-data generation, together with the underlying convex hull of physically admissible invariant states obtained from the Latin Hypercube sampling of Section~\ref{sec:data_generation}.

\begin{figure}[p]
\centering
\begin{subfigure}[t]{\textwidth}
  \centering
  \maybeincludegraphics[width=\linewidth,height=0.27\textheight,keepaspectratio]{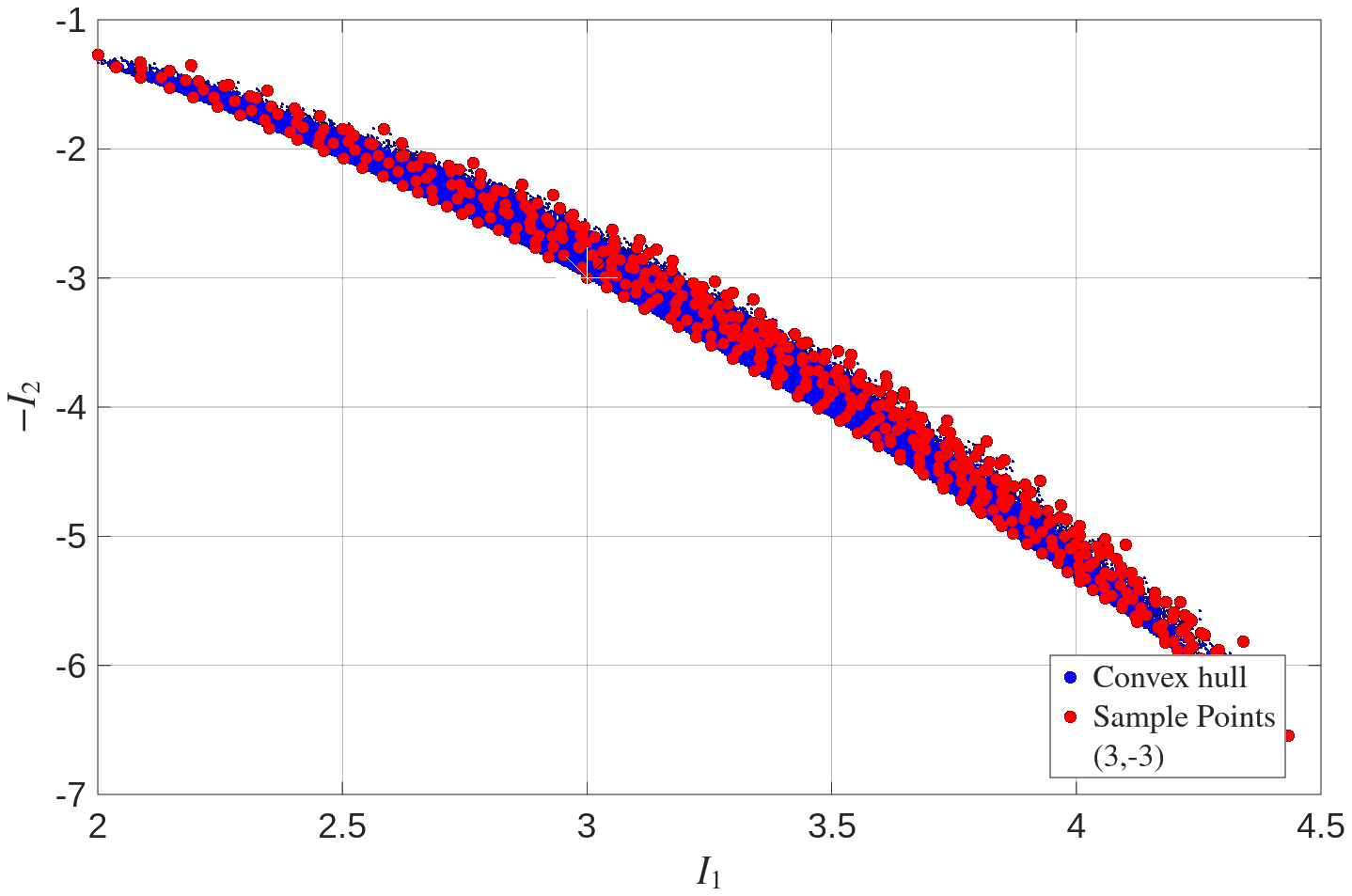}
  \caption{$(I_1,-I_2)$ plane.}
\end{subfigure}

\vspace{0.1cm}

\begin{subfigure}[t]{\textwidth}
  \centering
  \maybeincludegraphics[width=\linewidth,height=0.27\textheight,keepaspectratio]{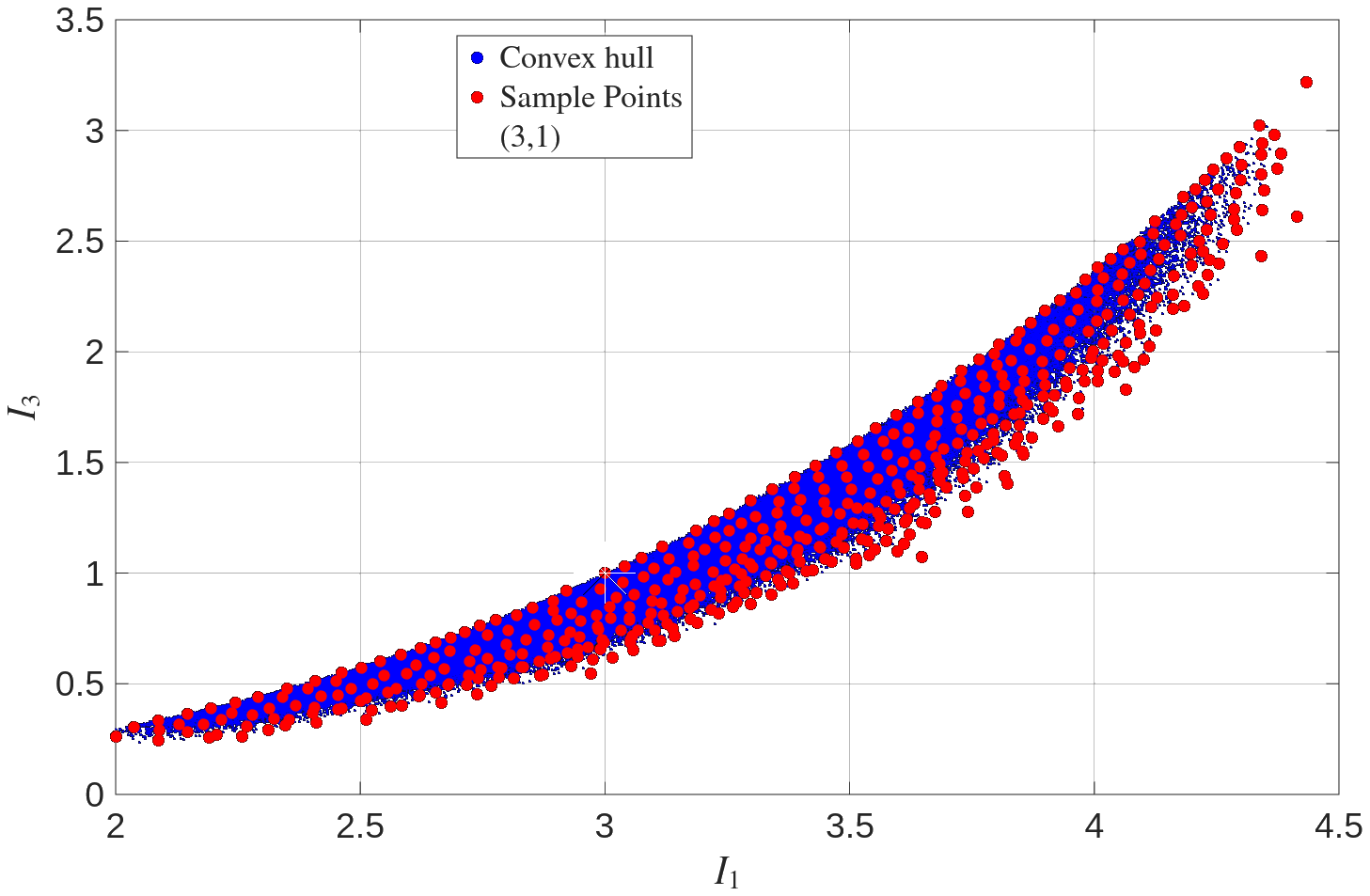}
  \caption{$(I_1,I_3)$ plane.}
\end{subfigure}

\vspace{0.1cm}

\begin{subfigure}[t]{\textwidth}
  \centering
  \maybeincludegraphics[width=\linewidth,height=0.27\textheight,keepaspectratio]{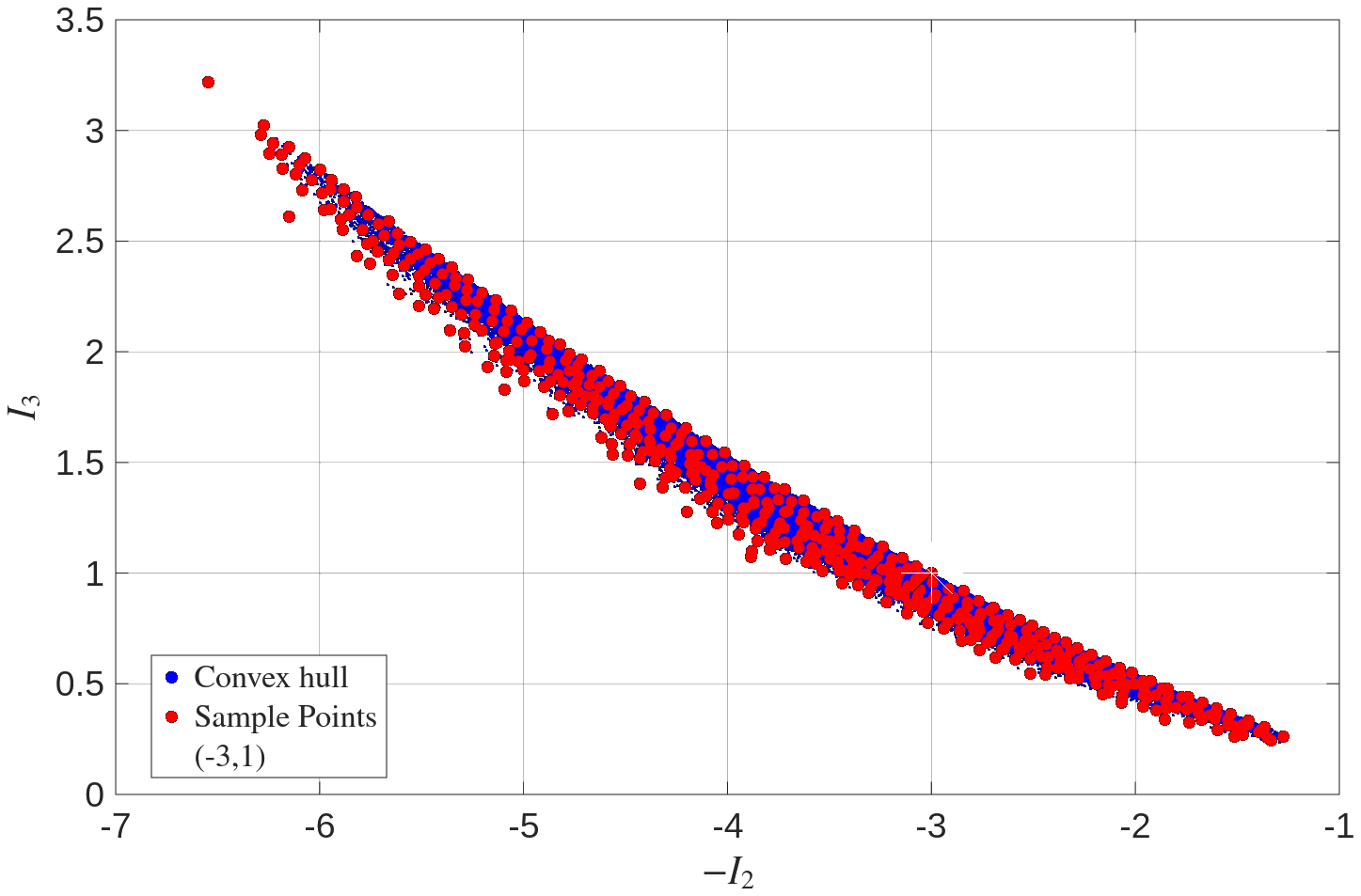}
  \caption{$(-I_2,I_3)$ plane.}
\end{subfigure}

\caption{Invariant space with convex hull and selected invariants used for synthetic data generation.}
\label{fig:invariant_sampling}
\end{figure}

\section{Learned free energy density expressions}\label{app:learned_expressions}

For each experiment, $L_0$ sparsification leaves us with an interpretable free energy density expression. 
We report the free energy density expressions for experiments \textbf{E1}--\textbf{E4} in the tables ~\ref{tab:psi_function_forms}
--\ref{tab:psi_final_forms}, \ref{tab:e1_composed_psi_coeffs}
--\ref{tab:e4_composed_psi_coeffs}. 

\begin{table}[htbp]
\centering
\footnotesize
\renewcommand{\arraystretch}{1.5}
\setlength{\tabcolsep}{6pt}
\caption{Elementary function forms used in the learned free energy density, and the experiments in which each appears.}
\label{tab:psi_function_forms}
\begin{tabular}{clc}
\hline
Symbol & Definition & Used in \\
\hline
$\mathrm{sp}(x)$ & $e^{x}+1$ & \textbf{E1}--\textbf{E4} \\
$g$ & $\mathrm{sp}(a I_1 - b J)$ \;(\textbf{E1}); \quad $\mathrm{sp}(a I_1)$ \;(\textbf{E4}) & \textbf{E1}, \textbf{E4} \\
$\phi_k$ & $\mathrm{sp}(-a_k I_2),\quad k=1,2,3$ & \textbf{E2}, \textbf{E3} \\
$\psi_\ell$ & $\mathrm{sp}(b_\ell I_1),\quad \ell=1,2$ & \textbf{E2}, \textbf{E3} \\
$\mathcal{L}_1$ & $\log\!\bigl(c_4\, g^{p_1}\, e^{d_1 I_2 + d_2 J} + 1\bigr)$ & \textbf{E1}, \textbf{E4} \\
$\mathcal{L}_1'$ & $\log\!\bigl(\phi_1^{p_1}\phi_2^{p_2}\phi_3^{p_3}\,\psi_1^{p_4}\psi_2^{p_5}\, e^{d_1 I_2 + d_2 J} + 1\bigr)$ & \textbf{E2}, \textbf{E3} \\
$\mathcal{L}_2$ & $\log\!\bigl(g^{p_2}\, e^{d_3 I_2} + 1\bigr)$ & \textbf{E1} \\
$\mathcal{L}_2'$ & $\log\!\bigl(g^{p_2}\, e^{d_3 I_2 + d_4 J} + 1\bigr)$ & \textbf{E4} \\
$\mathcal{L}_3$ & $\log\!\bigl(\mathrm{sp}(d_\star I_2)\bigr)$,\quad $d_\star=d_4$ (\textbf{E1}), $d_\star=d_3$ (\textbf{E2}, \textbf{E3}) & \textbf{E1}, \textbf{E2}, \textbf{E3} \\
\hline
\end{tabular}
\end{table}

\begin{table}[htbp]
\centering
\small
\renewcommand{\arraystretch}{1.5}
\setlength{\tabcolsep}{8pt}
\caption{Final learned free energy density for each experiment, written using the function forms of Table~\ref{tab:psi_function_forms}. Coefficient values are listed in Tables~\ref{tab:e1_composed_psi_coeffs}--\ref{tab:e4_composed_psi_coeffs}.}
\label{tab:psi_final_forms}
\begin{tabular}{cl}
\hline
ID & $\Psi(I_1,I_2,J)$ \\
\hline
\textbf{E1} & $c_1 I_1 + c_2 J + c_3 \mathcal{L}_1 + c_{10}\mathcal{L}_2 + c_{13}\mathcal{L}_3 + c_{15}$ \\
\textbf{E2} & $c_2 J + c_3 \mathcal{L}_1' + c_{10}\mathcal{L}_3 + c_{15}$ \\
\textbf{E3} & $c_1 I_1 + c_2 J + c_3 \mathcal{L}_1' + c_{10}\mathcal{L}_3 + c_{15}$ \\
\textbf{E4} & $c_1 I_1 + c_2 I_2 + c_3 J + c_{10}\mathcal{L}_1 + c_{13}\mathcal{L}_2' + c_{15}$ \\
\hline
\end{tabular}
\end{table}

\begin{table}[htbp]
\centering
\footnotesize
\begin{minipage}[t]{0.48\textwidth}
\centering
\captionof{table}{Coefficient values for \textbf{E1} (LB, M, UB).}
\label{tab:e1_composed_psi_coeffs}
\setlength{\tabcolsep}{2.5pt}
\begin{tabular}{lrrr}
\hline
Parameter & LB & M & UB \\
\hline
$c_1$ & 0.452959 & 0.594047 & 0.849586 \\
$c_2$ & 0.457589 & 0.452273 & 0.462377 \\
$c_3$ & 1.7295 & 1.87905 & 2.22363 \\
$c_4$ & 2.47051 & 2.91966 & 3.5611 \\
$a$ & 0.347429 & 0.400356 & 0.417636 \\
$b$ & 0.645014 & 0.420269 & 0.522768 \\
$p_1$ & 0.591559 & 0.722871 & 0.742294 \\
$d_1$ & $-1.10195$ & $-1.02393$ & $-1.31852$ \\
$d_2$ & $-2.2273$ & $-2.00959$ & $-1.92994$ \\
$c_{10}$ & 1.29483 & 1.44046 & 1.72594 \\
$p_2$ & 2.14771 & 2.18066 & 2.37474 \\
$d_3$ & $-0.674068$ & $-0.788953$ & $-0.937646$ \\
$c_{13}$ & 0.307104 & 0.382489 & 0.370604 \\
$d_4$ & 0.477827 & 0.61749 & 0.616056 \\
$c_{15}$ & $-3.20733$ & $-4.19261$ & $-4.87605$ \\
\hline
\end{tabular}
\end{minipage}
\hfill
\begin{minipage}[t]{0.48\textwidth}
\centering
\captionof{table}{Coefficient values for \textbf{E2} (LB, M, UB).}
\label{tab:e2_composed_psi_coeffs}
\setlength{\tabcolsep}{2.5pt}
\begin{tabular}{lrrr}
\hline
Parameter & LB & M & UB \\
\hline
$c_2$ & 0.954417 & 1.26118 & 2.3788 \\
$c_3$ & 1.92633 & 2.21234 & 2.56081 \\
$a_1$ & 1.16395 & 1.08832 & 0.980695 \\
$p_1$ & 1.6508 & 2.34019 & 2.76091 \\
$a_2$ & 0.378614 & 0.466638 & 0.519428 \\
$p_2$ & 0.553723 & 0.607677 & 0.830039 \\
$a_3$ & 0.228918 & 0.23031 & 0.370241 \\
$p_3$ & 0.323998 & 0.366854 & 0.522393 \\
$b_1$ & 0.18661 & 0.243268 & 0.273426 \\
$p_4$ & 0.145996 & 0.195872 & 0.261537 \\
$b_2$ & 0.99083 & 1.06499 & 1.14564 \\
$p_5$ & 0.61237 & 0.691066 & 0.77266 \\
$d_1$ & $-0.386977$ & $-0.404932$ & $-0.425268$ \\
$d_2$ & $-0.974285$ & $-1.22536$ & $-1.38856$ \\
$c_{10}$ & 0.530655 & 0.569918 & 0.614059 \\
$d_3$ & 0.567252 & 0.722123 & 0.751164 \\
$c_{15}$ & $-3.48289$ & $-4.55937$ & $-6.77858$ \\
\hline
\end{tabular}
\end{minipage}

\vspace{0.8em}

\begin{minipage}[t]{0.48\textwidth}
\centering
\captionof{table}{Coefficient values for \textbf{E3} (LB, M, UB).}
\label{tab:e3_composed_psi_coeffs}
\setlength{\tabcolsep}{2.5pt}
\begin{tabular}{lrrr}
\hline
Parameter & LB & M & UB \\
\hline
$c_1$ & 0.277304 & 0.427616 & 0.809184 \\
$c_2$ & 0.316756 & 0.573037 & 1.03962 \\
$c_3$ & 1.06839 & 1.34332 & 1.66632 \\
$a_1$ & 1.94751 & 1.34669 & 1.2581 \\
$p_1$ & 1.02343 & 1.90377 & 1.57522 \\
$a_2$ & 1.94579 & 1.34467 & 1.25072 \\
$p_2$ & 1.65348 & 1.22275 & 2.3485 \\
$a_3$ & 0.777113 & 0.707635 & 0.634392 \\
$p_3$ & 0.866477 & 1.25316 & 1.56753 \\
$b_1$ & 0.411716 & 0.501003 & 0.540552 \\
$p_4$ & 0.666365 & 0.846696 & 0.937391 \\
$b_2$ & 0.480243 & 0.581122 & 0.619156 \\
$p_5$ & 0.80091 & 1.0036 & 1.09526 \\
$d_1$ & $-0.600601$ & $-0.71964$ & $-0.828024$ \\
$d_2$ & $-1.08002$ & $-1.28093$ & $-1.35107$ \\
$c_{10}$ & 0.297127 & 0.449434 & 0.462303 \\
$d_3$ & 0.294613 & 0.534423 & 0.561432 \\
$c_{15}$ & $-2.03011$ & $-3.6663$ & $-5.80171$ \\
\hline
\end{tabular}
\end{minipage}
\hfill
\begin{minipage}[t]{0.48\textwidth}
\centering
\captionof{table}{Coefficient values for \textbf{E4} (LB, M, UB).}
\label{tab:e4_composed_psi_coeffs}
\setlength{\tabcolsep}{2.5pt}
\begin{tabular}{lrrr}
\hline
Parameter & LB & M & UB \\
\hline
$c_1$ & 0.187563 & 0.426722 & 0.768818 \\
$c_2$ & 0.06284 & 0.292087 & 0.446485 \\
$c_3$ & 0.260931 & 0.334159 & 0.458835 \\
$c_{10}$ & 1.63037 & 2.06464 & 2.58183 \\
$c_4$ & 1.63595 & 2.59717 & 3.87735 \\
$a$ & 0.504571 & 0.63259 & 0.673222 \\
$p_1$ & 0.107058 & 0.253714 & 0.310031 \\
$d_1$ & $-2.30536$ & $-1.68551$ & $-1.83049$ \\
$d_2$ & $-0.386197$ & $-0.115956$ & $-0.103846$ \\
$c_{13}$ & 1.44658 & 2.13406 & 2.93725 \\
$p_2$ & 0.794768 & 1.31416 & 1.58833 \\
$d_3$ & $-0.486883$ & $-0.765099$ & $-0.997504$ \\
$d_4$ & $-0.835381$ & $-0.842919$ & $-0.971181$ \\
$c_{15}$ & $-1.4934$ & $-3.58473$ & $-5.5055$ \\
\hline
\end{tabular}
\end{minipage}
\end{table}

\end{document}